\def\eqref#1{equation~\ref{#1}}
\def\1{\bm{1}}
\DeclareMathAlphabet{\mathsfit}{\encodingdefault}{\sfdefault}{m}{sl}
\SetMathAlphabet{\mathsfit}{bold}{\encodingdefault}{\sfdefault}{bx}{n}
\newcommand{\Var}{\mathrm{Var}}
\newcommand{\Cov}{\mathrm{Cov}}
\newtheorem{definition}{Definition}
\newtheorem{theorem}{Theorem}
\newtheorem{lemma}[theorem]{Lemma}            
\newtheorem{corollary}[theorem]{Corollary}
\newtheorem{proposition}[theorem]{Proposition}
\newcommand{\bbR}{\mathbb{R}}
\newcommand{\ccN}{\mathcal{N}}
\newcommand{\ccU}{\mathcal{U}}
\newcommand{\Exp}[2]{\mathbb{E}_{{#2}} \left[  {#1} \right]}
\newcommand{\tx}{\tilde{x}}  
\newcommand{\sinc}{\mathrm{sinc}}  
\renewcommand{\sin}{\mathrm{sin}}
\renewcommand{\cos}{\mathrm{cos}}
\newcommand{\rulesep}{\unskip\ \vrule\ }
\title{Simple initialization and parametrization of sinusoidal networks via their kernel bandwidth}
\author{Filipe de Avila Belbute-Peres \\ 
Carnegie Mellon University\\
Pittsburgh, PA \\
\texttt{filiped@cs.cmu.edu} \\
\And
J. Zico Kolter \\
Carnegie Mellon University \& Bosch Center for AI \\
Pittsburgh, PA \\
\texttt{zkolter@cs.cmu.edu} \\
}
\begin{document}

\maketitle

\begin{abstract}
\looseness=-1 
Neural networks with sinusoidal activations have been proposed as an alternative to networks with traditional activation functions.
Despite their promise, particularly for learning implicit models, their training behavior is not yet fully understood, leading to a number of empirical design choices that are not well justified.
In this work, we first propose a simplified version of such sinusoidal neural networks, which allows both for easier practical implementation and simpler theoretical analysis.
We then analyze the behavior of these networks from the neural tangent kernel perspective and demonstrate that their kernel approximates a low-pass filter with an adjustable bandwidth.
Finally, we utilize these insights to inform the sinusoidal network initialization, optimizing their performance for each of a series of tasks, including learning implicit models and solving differential equations.
\end{abstract}

\section{Introduction}
\label{sec:intro}

\looseness=-1
Sinusoidal networks are neural networks with sine nonlinearities, instead of the traditional ReLU or hyperbolic tangent. They have been recently popularized, particularly for applications in implicit representation models, in the form of SIRENs \citep{sitzmann_implicit_2020}. However, despite their popularity, many aspects of their behavior and comparative advantages are not yet fully understood. Particularly, some initialization and parametrization choices for sinusoidal networks are often defined arbitrarily, without a clear understanding of how to optimize these settings in order to maximize performance.

\looseness=-1
In this paper, we first propose a simplified version of such sinusoidal networks, that allows for easier implementation and theoretical analysis. 
We show that these simple sinusoidal networks can match and outperform SIRENs in  implicit representation learning tasks, such as fitting videos, images and audio signals.
We then analyze sinusoidal networks from a neural tangent kernel (NTK) perspective \citep{jacot_neural_2018}, demonstrating that their NTK approximates a low-pass filter with adjustable bandwidth.
We confirm, through an empirical analysis this theoretically predicted behavior also holds approximately in practice.
We then use the insights from this analysis to inform the choices of initialization and parameters for sinusoidal networks.
We demonstrate we can optimize the performance of a sinusoidal network by tuning the bandwidth of its kernel to the maximum frequency present in the input signal being learned.
Finally, we apply these insights in practice, demonstrating that ``well tuned'' sinusoidal networks outperform other networks in learning implicit representation models with good interpolation outside the training points, and in learning the solution to differential equations.

\section{Background and Related Work}
\label{sec:background}

\looseness=-1 
\textbf{Sinusoidal networks.} \;
Sinusoidal networks have been recently popularized for implicit modelling tasks by sinusoidal representation networks (SIRENs) \citep{sitzmann_implicit_2020}. 
They have also been evaluated in physics-informed learning settings, demonstrating promising results in a series of domains \citep{raissi_deep_2019,song_versatile_2021,huang_modified_2021,huang_solving_2021,wong_learning_2022}. 
Among the benefits of such networks is the fact that the mapping of the inputs through an (initially) random linear layer followed by a sine function is mathematically equivalent to a transformation to a random Fourier basis, rendering them close to networks with Fourier feature transformations  \citep{tancik2020fourier,rahimi_random_2007}, and possibly able to address spectral bias \citep{basri_convergence_2019,rahaman_spectral_2019,wang2021eigenvector}.
Sinusoidal networks also have the property that the derivative of their outputs is given simply by another sinusoidal network, due to the fact that the derivative of sine function is simply a phase-shifted sine.

\textbf{Neural tangent kernel.} \;
An important prior result to the neural tangent kernel (NTK) is the neural network Gaussian process (NNGP). 
At random initialization of its parameters $\theta$, the output function of a neural network of depth $L$ with nonlinearity $\sigma$, converges to a Gaussian process, called the NNGP, as the width of its layers $n_1, \dots, n_{L} \rightarrow \infty$. \citep{neal_priors_1994,lee_deep_2018}. 
This result, though interesting, does not say much on its own about the behavior of \emph{trained} neural networks. 
This role is left to the NTK, which is defined as the kernel given by $\Theta(x,\tx) = \left\langle \nabla_\theta f_\theta(x),  \nabla_\theta f_\theta(\tx) \right\rangle$.
It can be shown that this kernel can be written out as a recursive expression involving the NNGP.
Importantly, \citet{jacot_neural_2018} demonstrated that, again as the network layer widths $n_1, \dots, n_{L} \rightarrow \infty$, the NTK is (1) deterministic at initialization and (2) constant throughout training. 
Finally, it has also been demonstrated that under some assumptions on its parametrization, the output function of the trained neural network $f_\theta$ converges to the kernel regression solution using the NTK \citep{lee_wide_2020,arora_fine-grained_2019}. 
In other words, under certain assumptions the behavior of a trained deep neural network can be modeled as kernel regression using the NTK.

\textbf{Physics-informed neural networks.} \;
\looseness=-1
Physics-informed neural networks \citep{raissi_physics-informed_2019} are a method for approximating the solution to differential equations using neural networks (NNs). 
In this method, a neural network $\hat{u}(t, x; {\theta})$, with learned parameters ${\theta}$, is trained to approximate the actual solution function $u(t, x)$ to a given partial differential equation (PDE). 
Importantly, PINNs employ not only a standard ``supervised'' data loss, but also a physics-informed loss, which consists of the differential equation residual  $\mathcal{N}$. 
Thus, the training loss consists of a linear combination of two loss  terms, one directly supervised from data and one informed by the underlying differential equations.

\section{Simple Sinusoidal Networks}
\label{sec:ssn}

There are many details that complicate the practical implementation of current sinusoidal networks. We aim to propose a simplified version of such networks in order to facilitate theoretical analysis and practical implementation, by removing such complications.

\looseness=-1 As an example we can look at SIRENs, which have their layer activations defined as $f_l(x) = \sin{(\omega (W_l x + b_l))}$. Then, in order to cancel the $\omega$ factor, layers after the first one have their weight initialization follow a uniform distribution with range $[-\frac{\sqrt{6/n}}{\omega}, \frac{\sqrt{6/n}}{\omega}]$, where $n$ is the size of the layer. Unlike the other layers, the first layer is sampled from a uniform distribution with range $[-1/n, 1/n]$.

We instead propose a simple sinusoidal network, with the goal of formulating an architecture that mainly amounts to substituting its activation functions by the sine function. 
We will, however, keep the $\omega$ parameter, since (as we will see in future analyses) it is in fact a useful tool for allowing the network to fit inputs of diverse frequencies.
The layer activation equations of our simple sinusoidal network, with parameter $\omega$, are defined as
\begin{equation}
\begin{aligned}
\label{eq:simple_eq}
    f_1(x) &= \sin{\left(\omega \left( W_1 x + b_1 \right) \right)}, \\
    f_l(x) &= \sin{\left(W_l x + b_l\right)}, \;\;\; l > 1.
\end{aligned}
\end{equation}
\looseness=-1 Finally, instead of utilizing a uniform initialization as in SIRENs (with different bounds for the first and subsequent layers), we propose initializing all parameters in our simple sinusoidal network using a default Kaiming (He) normal initialization scheme. 
This choice not only greatly simplifies the initialization scheme of the network, but it also facilitates theoretical analysis of the behavior of the network under the NTK framework, as we will see in Section~\ref{sec:ntk}.

\textbf{Analysis of the initialization scheme.} \;
The initialization scheme proposed above differs from the one implemented in SIRENs.
We will now show that this particular choice of initialization distribution preserves the variance of the original proposed SIREN initialization distribution. 
As a consequence, the original theoretical justifications for its initialization scheme still hold under this activation, namely that the distribution of activations across layers are stable, well-behaved and shift-invariant. 
Due to space constraints, proofs are presented in Appendix~\ref{sec:ssn_init}.
Moreover, we also demonstrate empirically that these properties are maintained in practice.
\begin{lemma}
\label{lm:init_var}
    Given any $c$, for $X \sim \mathcal{N}\left(0, \frac{1}{3} c^2 \right)$ and $Y \sim \mathcal{U}\left( -c, c \right)$, we have $\Var[X] = \Var[Y] = \frac{1}{3}c^2$.
\end{lemma}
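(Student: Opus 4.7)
The plan is to verify the two claimed variances by direct computation, since both are elementary consequences of the definitions of the distributions involved; no clever argument or tricky manipulation appears to be required.

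First, I would handle the Gaussian case, which is essentially immediate: by the definition of the normal distribution, if $X \sim \mathcal{N}(0, \sigma^2)$ then $\Var[X] = \sigma^2$. Substituting $\sigma^2 = \tfrac{1}{3}c^2$ yields $\Var[X] = \tfrac{1}{3}c^2$ with no further work needed.

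Next, I would handle the uniform case by explicit integration. For $Y \sim \mathcal{U}(-c, c)$, the density is $\tfrac{1}{2c}$ on $[-c, c]$, so $\mathbb{E}[Y] = 0$ by symmetry, and
\begin{equation*}
\Var[Y] = \mathbb{E}[Y^2] = \int_{-c}^{c} \frac{y^2}{2c}\, dy = \frac{1}{2c} \cdot \frac{2c^3}{3} = \frac{c^2}{3}.
\end{equation*}
Equivalently, one could just quote the standard formula $\Var[\mathcal{U}(a,b)] = (b-a)^2/12$ with $a = -c$, $b = c$, giving $(2c)^2/12 = c^2/3$.

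Since both computations land on $\tfrac{1}{3}c^2$, the equality $\Var[X] = \Var[Y] = \tfrac{1}{3}c^2$ follows immediately and the lemma is proved. There is no real obstacle here; the lemma is stated mainly as a bookkeeping step so that the subsequent analysis can substitute a normal initialization for the uniform one used in SIRENs without changing the variance that drives the stability arguments for the activation distributions across layers.
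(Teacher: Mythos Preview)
Your proposal is correct and matches the paper's own proof essentially line for line: the paper also reads off $\Var[X]$ directly from the definition of the normal distribution and then applies the standard formula $\Var[\mathcal{U}(a,b)] = (b-a)^2/12$ with $a=-c$, $b=c$ to obtain $\Var[Y] = \tfrac{1}{3}c^2$. Your additional explicit integral is just an expanded version of the same step.
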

\vspace{-0.5cm}This simple Lemma and relates to Lemma 1.7 in \citet{sitzmann_implicit_2020}, showing that the initialization we propose here has the same variance as the one proposed for SIRENs. 
Using this result we can translate the result from the main Theorem 1.8 from \citet{sitzmann_implicit_2020}, which claims that the SIREN initialization indeed has the desired properties, to our proposed initialization:\footnote{\looseness=-1 We note that despite being named Theorem 1.8 in \citet{sitzmann_implicit_2020}, this result is not fully formal, due to the Gaussian distribution being approximated without a formal analysis of this approximation. Additionally, a CLT result is employed which assumes infinite width, which is not applicable in this context. We thus refrain from calling our equivalent result a theorem. Nevertheless, to the extent that the argument is applicable, it would still hold for our proposed initialization, due to its dependence solely on the variance demonstrated in Lemma~\ref{lm:init_var} above.}

\textit{For a uniform input in $[-1, 1]$, the activations throughout a sinusoidal network are approximately standard normal distributed before each sine non-linearity and arcsine-distributed after each sine non-linearity, irrespective of the depth of the network, if the weights are distributed normally, with mean $0$ and variance $\frac{2}{n}$, where $n$ is a layer's fan-in.}

\looseness=-1
\textbf{Empirical evaluation of initialization scheme.} \;
To empirically demonstrate the proposed simple initialization scheme preserves the properties from the SIREN initialization scheme, we perform the same analysis performed by \citet{sitzmann_implicit_2020}.
We observe that the distribution of activations matches the predicted normal (before the non-linearity) and arcsine (after the non-linearity) distributions, and that this behavior is stable across many layers.
These results are reported in detail in the Appendix~\ref{sec:empirical_init}.


\subsection{Comparison to SIREN}
\label{sec:simple_siren_exp}

In order to demonstrate our simplified sinusoidal network has comparable performance to a standard SIREN, in this section we reproduce the main results from \citet{sitzmann_implicit_2020}.
Table~\ref{tab:simple_siren_exp} compiles the results for all experiments. 
In order to be fair, we compare the simplified sinusoidal network proposed in this chapter with both the results directly reported in \citet{sitzmann_implicit_2020}, and our own reproduction of the  SIREN results (using the same parameters and settings as the original). 
We can see from the numbers reported in the table that the performance of the simple sinusoidal network proposed in this chapter matches the performance of the SIREN in all cases, in fact surpassing it in most of the experiments.
Qualitative results are presented in Appendix~\ref{sec:appendix_comparison}.

\looseness=-1
It is important to note that this is not a favorable setting for simple sinusoidal networks, given that the training durations were very short. 
The SIREN favors quickly converging to a solution, though it does not have as strong asymptotic behavior. 
This effect is likely due to the multiplicative factor applied to later layers described in Section~\ref{sec:ssn}. 
We observe that indeed in almost all cases we can compensate for this effect by simply increasing the learning rate in the Adam optimizer \citep{kingma_adam:_2014}.

Finally, we observe that besides being able to surpass the performance of SIREN in most cases in a short training regimen, the simple sinusoidal network performs even more strongly with longer training. To demonstrate this, we repeated some experiments from above, but with longer training durations. These results are shown in Table~\ref{tab:simple_siren_long_exp} in Appendix~\ref{sec:appendix_comparison}.

\begin{table}[t]
  \caption{Comparison of the simple sinusoidal network and SIREN results, both directly from \citet{sitzmann_implicit_2020} and from our own reproduced experiments. Values above the horizontal center line are peak signal to noise ratio (PSNR), values below are mean squared error (MSE), except for SDF which uses a composite loss. $\;^\dagger$Audio experiments utilized a separate learning rate for the first layer.}
  \label{tab:simple_siren_exp}
  \centerline{
  \begin{tabular}{l | c c c}
    \toprule
    Experiment & Simple Sinusoidal Network & SIREN [paper] & SIREN [ours]\\
    \midrule
    
    Image & \textbf{50.04} & 49 (approx.) & 49.0 \\
    Poisson (Gradient) & \textbf{39.66} & 32.91 & 38.92 \\
    Poisson (Laplacian) & \textbf{20.97} & 14.95 & 20.85 \\
    Video (cat) & \textbf{34.03} & 29.90  & 32.09 \\
    Video (bikes) & \textbf{37.4} & 32.88 & 33.75 \\
    \midrule
    Audio (Bach)$^\dagger$ & $1.57 \cdot 10^{-5}$ & $\mathbf{1.10 \cdot 10^{-5}}$ & $3.28 \cdot 10^{-5}$ \\
    Audio (counting)$^\dagger$ & $\mathbf{3.17 \cdot 10^{-4}}$ & ${3.82 \cdot 10^{-4}}$ & $4.38 \cdot 10^{-4}$ \\ 
    Helmholtz equation & $\mathbf{5.94 \cdot 10^{-2}}$ & -- & $5.97 \cdot 10^{-2}$ \\
    
    SDF (room) & \textbf{12.99} & -- & 14.32 \\
    SDF (statue) & 6.22 & -- & \textbf{5.98} \\
    
    \bottomrule
  \end{tabular}
  }
\end{table}

\section{Neural tangent kernel analysis}
\label{sec:ntk}

\looseness=-1 In the following we derive the NTK for sinusoidal networks. 
This analysis will show us that the sinusoidal networks NTK is approximately a low-pass filter, with its bandwidth directly defined by $\omega$.
We support these findings with an empirical analysis as well in the following section. Finally, we demonstrate how the insights from the NTK can be leveraged to properly ``tune'' sinusoidal networks to the spectrum of the desired signal.
Full derivations and extensive, detailed analysis are left to Appendix~\ref{sec:ntk_proofs}. 

\looseness=-1
The NTK for a simple sinusoidal network with a single hidden layer is presented in the theorem below. The NTK for siren with $1$ and $6$ hidden layers are shown in Figure~\ref{fig:kernels}.
\begin{theorem}
    \looseness=-1 \textbf{Shallow SSN NTK.} For a simple sinusoidal network with one hidden layer $f^{(1)}: \bbR^{n_0} \rightarrow \bbR^{n_2}$ following Definition~\ref{def:ntk_nn}, its neural tangent kernel (NTK), as defined in Theorem~\ref{th:full_ntk}, is given by 
    \begin{align*}
        \Theta^{(1)}(x, \tx) &= \frac{1}{2}  \left( \omega^2 \left( x^T \tx + 1 \right) + 1 \right) e^{-\frac{\omega^2}{2} \|x-\tx\|_2^2}  \\
                &\quad\quad\quad - \frac{1}{2} \left( \omega^2 \left( x^T \tx + 1 \right) - 1 \right) e^{-\frac{\omega^2}{2} \|x+\tx\|_2^2} e^{-2\omega^2} +  \omega^2 \left( x^T \tx + 1 \right)  + 1.
    \end{align*}
\end{theorem}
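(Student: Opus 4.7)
The plan is to apply the standard one-hidden-layer NTK construction of \citet{jacot_neural_2018}, with the only non-trivial twist being the $\omega$ factor baked into the first sine activation. Writing the shallow simple sinusoidal network in NTK parameterization as
\begin{equation*}
f^{(1)}(x) \;=\; \tfrac{1}{\sqrt{n_1}}\,W_2\,\sin\!\bigl(\omega(W_1 x + b_1)\bigr) \;+\; b_2,
\end{equation*}
with all weight and bias entries i.i.d.\ standard normal as prescribed by Definition~\ref{def:ntk_nn}, the NTK decomposes in the $n_1\to\infty$ limit into a sum of contributions from each parameter block. The $W_2$ block yields the post-activation NNGP kernel $\Sigma^{(1)}(x,\tx) := \mathbb{E}[\sin(\omega(w^\top x + b))\sin(\omega(w^\top \tx + b))]$, the $b_2$ block contributes a constant, and the $W_1$ and $b_1$ blocks together yield $\omega^2(x^\top \tx + 1)\,\dot\Sigma^{(1)}(x,\tx)$, where $\dot\Sigma^{(1)}$ is the analogous expectation with cosines in place of sines and the $\omega^2$ prefactor arises from chain-ruling through the $\omega$ inside the sine. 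Collecting these matches the specialization of the general NTK expression of Theorem~\ref{th:full_ntk} to $L=1$.

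The bulk of the work is obtaining closed forms for $\Sigma^{(1)}$ and $\dot\Sigma^{(1)}$. For fixed $x,\tx$, the pair $(g(x),g(\tx)) := (\omega(w^\top x + b),\,\omega(w^\top \tx + b))$ is jointly mean-zero Gaussian, so I would apply the product-to-sum identities $\sin A\sin B = \tfrac12(\cos(A-B)-\cos(A+B))$ and $\cos A\cos B = \tfrac12(\cos(A-B)+\cos(A+B))$, together with the Gaussian characteristic function $\mathbb{E}[\cos Z] = e^{-\Var(Z)/2}$. This reduces the problem to two variance computations,
\begin{equation*}
\Var(g(x)-g(\tx)) = \omega^2\|x-\tx\|_2^2, \qquad \Var(g(x)+g(\tx)) = \omega^2\|x+\tx\|_2^2 + 4\omega^2,
\end{equation*}
where the extra $4\omega^2$ arises because the two copies of the bias add rather than cancel on the $+$ side. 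Substitution expresses $\Sigma^{(1)}$ and $\dot\Sigma^{(1)}$ as half-sums and half-differences of $e^{-\omega^2\|x-\tx\|_2^2/2}$ and $e^{-\omega^2\|x+\tx\|_2^2/2}e^{-2\omega^2}$.

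The rest is algebraic assembly: plug the two closed forms into $\Theta^{(1)} = \Sigma^{(1)} + \omega^2(x^\top\tx + 1)\dot\Sigma^{(1)} + (\text{bias contributions})$ and group the coefficients of the two exponentials, reproducing the stated three-summand expression. The main source of difficulty I anticipate is bookkeeping rather than analysis: the parameter $\omega$ plays two distinct roles — as a scaling of the Gaussian pre-activation variance and as a derivative factor in the sine's chain rule — and these have to be tracked separately so that both the $\omega^2(x^\top\tx + 1)$ prefactor multiplying $\dot\Sigma^{(1)}$ and the additive $\omega^2(x^\top\tx+1)+1$ term emerge with the correct signs. As a sanity check, in the $\omega\to 0$ limit the exponentials tend to $1$ and the sine linearizes, so the expression should collapse to (a scalar multiple of) the shallow-linear-network NTK, providing a useful consistency test on the final formula.
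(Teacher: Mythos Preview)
Your proposal is correct and follows essentially the same route as the paper: the paper invokes its general NTK recursion (Theorem~\ref{th:full_ntk}) with $\Theta^{(0)}=\omega^2(x^\top\tx+1)$ and plugs in closed forms for $\Sigma^{(1)}$ and $\dot\Sigma^{(1)}$, which it obtains by writing the sines and cosines as complex exponentials and applying the Gaussian moment-generating identity (its Proposition~\ref{pr:exp_lemma}) --- this is exactly your product-to-sum plus characteristic-function computation, organized slightly differently. Your decomposition into parameter blocks ($W_2$, $b_2$, $W_1$, $b_1$) is just the unrolled version of the one-step recursion, and your variance bookkeeping for $g(x)\pm g(\tx)$ reproduces the same two exponentials the paper gets.
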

We can see that for values of $\omega > 2$, the second term quickly vanishes due to the $e^{-2\omega^2}$ factor. This leaves us with only the first term, which has a Gaussian form. Due to the linear scaling term $x^T\tx$, this is only approximately Gaussian, but the approximation improves as $\omega$ increases. We can thus observe that this kernel approximates a Gaussian kernel, which is a low-pass filter, with its bandwidth defined by $\omega$. Figure~\ref{fig:kernels} presents visualizations for NTKs for the simple sinusoidal network, compared to a (scaled) pure Gaussian with variance $\omega^{-2}$, showing there is a close match between the two.

\looseness=-1 If we write out the NTK for networks with more than one hidden layer, it quickly becomes un-interpretable due to the recursive nature of the NTK definition (see Appendix~\ref{sec:ntk_proofs}). However, as shown empirically in Figure~\ref{fig:kernels}, these kernels are still approximated by Gaussians with variance $\omega^{-2}$.

We also observe that the NTK for a SIREN with a single hidden layer is analogous, but with a $\sinc$ form, which is also a low-pass filter.
\begin{theorem}
    \textbf{Shallow SIREN NTK.} For a single hidden layer SIREN $f^{(1)}: \bbR^{n_0} \rightarrow \bbR^{n_2}$ following Definition~\ref{def:ntk_nn}, its neural tangent kernel (NTK), as defined in Theorem~\ref{th:full_ntk}, is given by 
    \begin{align*}
        \Theta^{(1)}(x, \tx) &= \frac{c^2}{6}  \left( \omega^2 \left( x^T \tx + 1 \right) + 1 \right)  \prod_{j=1}^{n_0} \sinc{\left( c \, \omega \left( x_j - \tx_j \right) \right)}  \\
                &\quad\quad\quad - \frac{c^2}{6} \left( \omega^2 \left( x^T \tx + 1 \right) - 1 \right) e^{-2\omega^2} \prod_{j=1}^{n_0} \sinc{\left( c \, \omega \left( x_j + \tx_j \right) \right)} +  \omega^2 \left( x^T \tx + 1 \right)  + 1.
    \end{align*}
\end{theorem}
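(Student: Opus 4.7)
The plan is to repeat the derivation of the shallow SSN NTK essentially verbatim, with the single distributional substitution of the first-layer weights from isotropic Gaussian to coordinate-wise uniform on $[-c, c]$. Using the recursive NTK formula of Theorem~\ref{th:full_ntk} with $L=2$, the computation splits into the input kernel $\Sigma^{(0)}(x,\tx) = x^T\tx + 1$, the hidden-layer NNGP
\[
\Sigma^{(1)}(x,\tx) = \E\bigl[\sin(\omega(W_1 x + b_1))\,\sin(\omega(W_1 \tx + b_1))\bigr],
\]
and the activation-derivative kernel
\[
\dot\Sigma^{(1)}(x,\tx) = \omega^2\,\E\bigl[\cos(\omega(W_1 x + b_1))\,\cos(\omega(W_1 \tx + b_1))\bigr].
\]
These are assembled via the standard NTK recursion to produce $\Theta^{(1)}$.

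The pivotal calculation is the expectation of $\cos(\omega W_1 y)$ for $y = x \pm \tx$. In the SSN proof this was evaluated through the Gaussian characteristic function, producing the factor $e^{-\omega^2\|y\|^2/2}$. Here, since $W_1$ has i.i.d.\ coordinates $W_{1,j} \sim \mathcal{U}[-c, c]$, independence lets me factor the expectation per coordinate, and the characteristic function of $\mathcal{U}[-c,c]$ is $t \mapsto \sinc(ct)$, giving
\[
\E\bigl[\cos(\omega W_1 y)\bigr] = \prod_{j=1}^{n_0}\E[\cos(\omega W_{1,j} y_j)] = \prod_{j=1}^{n_0}\sinc\!\bigl(c\,\omega\, y_j\bigr).
\]
This is the only distributional change from the SSN argument: the biases $b_1$ remain unit-variance normal, which is precisely what preserves the $e^{-2\omega^2}$ factor multiplying the $(x+\tx)$ branch of the kernel.

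Next I apply the product-to-sum identities $2\sin A\sin B = \cos(A-B)-\cos(A+B)$ and $2\cos A\cos B = \cos(A-B)+\cos(A+B)$ exactly as in the SSN derivation, obtaining expressions for $\Sigma^{(1)}$ and $\dot\Sigma^{(1)}$ as linear combinations of $\prod_j \sinc(c\omega(x_j-\tx_j))$ and $e^{-2\omega^2}\prod_j \sinc(c\omega(x_j+\tx_j))$. Combining via the NTK recursion and collecting coefficients by which sinc-product factor they multiply yields the stated expression. The upgrade of the SSN's outer $\tfrac12$ to the SIREN's $\tfrac{c^2}{6}$ arises because the second-layer weights are likewise uniform on $[-c, c]$ with variance $\tfrac{c^2}{3}$ rather than unit variance, so the $\tfrac12$ from product-to-sum picks up an additional $\tfrac{c^2}{3}$; the standalone $\omega^2(x^T\tx + 1) + 1$ residual does not pick up this factor, since it originates from derivative contributions (through the output bias and the input kernel $\Sigma^{(0)}$) that do not involve the $W_1$ expectation and so are untouched by the first-layer distributional change.

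The main obstacle, and essentially the only non-mechanical part, is the bookkeeping of variance constants across the parametrization. One must verify that the $\tfrac{c^2}{3}$ factor from the output-layer uniform weights attaches exactly to the $\Sigma^{(1)}$ and $\dot\Sigma^{(1)}$ contributions (producing the $\tfrac{c^2}{6}$ prefactor) and not to the residual term, and that the two distinct roles of the symbol $c$ — the bound inside $\sinc(c\omega\,\cdot)$ from the first-layer characteristic function, and the variance-like scale $\tfrac{c^2}{3}$ outside from the second-layer weight magnitudes — arise consistently from a single coherent SIREN parametrization. Once these constants are tracked, every step reduces to the corresponding step in the SSN NTK derivation with $e^{-\omega^2\|y\|^2/2}$ replaced throughout by $\prod_j \sinc(c\omega y_j)$.
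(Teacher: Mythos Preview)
Your approach is essentially the same as the paper's: compute the first-layer NNGP $\Sigma^{(1)}$ and the derivative kernel $\dot\Sigma^{(1)}$ by replacing the Gaussian characteristic function in the SSN derivation with the uniform one (Proposition~\ref{pr:exp_lemma_uniform}), then plug both into the recursion of Theorem~\ref{th:full_ntk}. The paper's proof is literally the one-line ``follows trivially by applying Theorem~\ref{th:sin_nngp_uniform} and Lemma~\ref{lm:cos_nngp_uniform} to Theorem~\ref{th:full_ntk},'' and your proposal simply unpacks those ingredients, including the correct identification of the $c^2/3$ output-weight variance as the source of the $c^2/6$ prefactor.
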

For deeper SIREN networks, the kernels defined by the later layers are in fact Gaussian too, as discussed in Appendix~\ref{sec:ntk_proofs}. This leads to an NTK that is approximated by a product of a sinc function and a Gaussian. These SIREN kernels are also presented in Figure~\ref{fig:kernels}.

\begin{figure*}[t]
  \centering
    \hfill
    \hfill
    \begin{subfigure}[c]{0.22\textwidth}
     \centering
     \includegraphics[width=\textwidth]{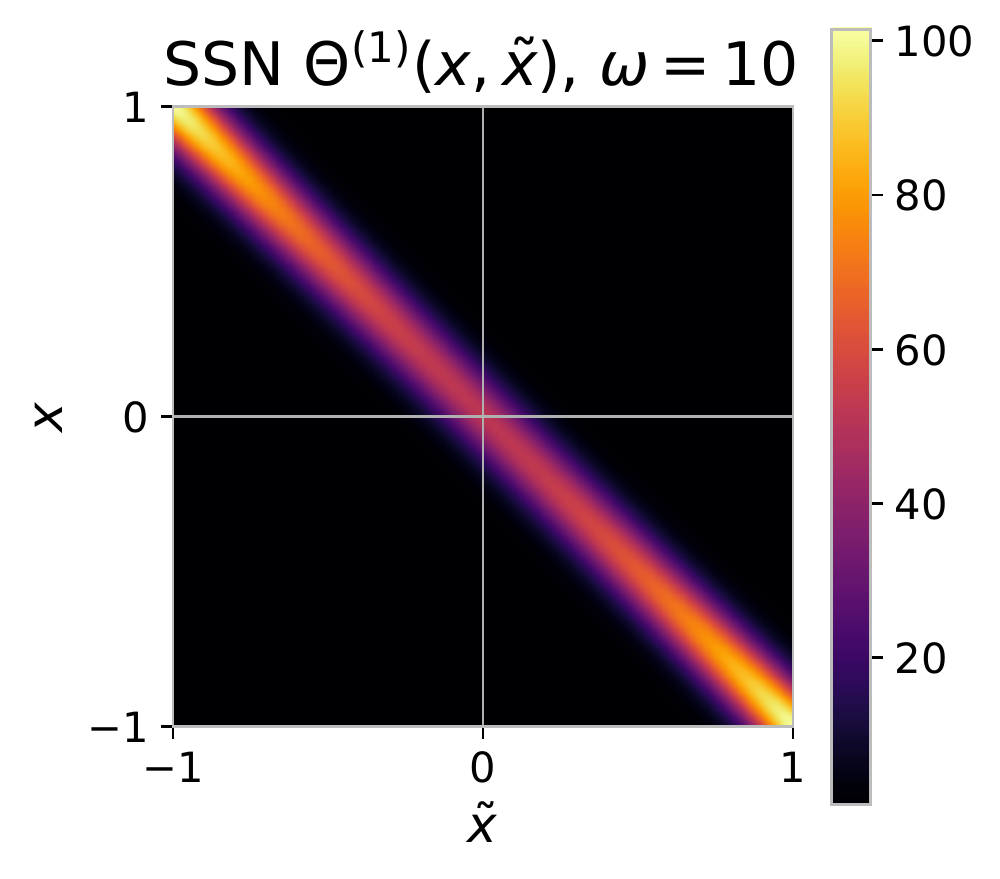}
  \end{subfigure}
    \begin{subfigure}[c]{0.19\textwidth}
     \centering
     \includegraphics[trim={1cm 0cm 0.28cm 0cm},clip,width=\textwidth]{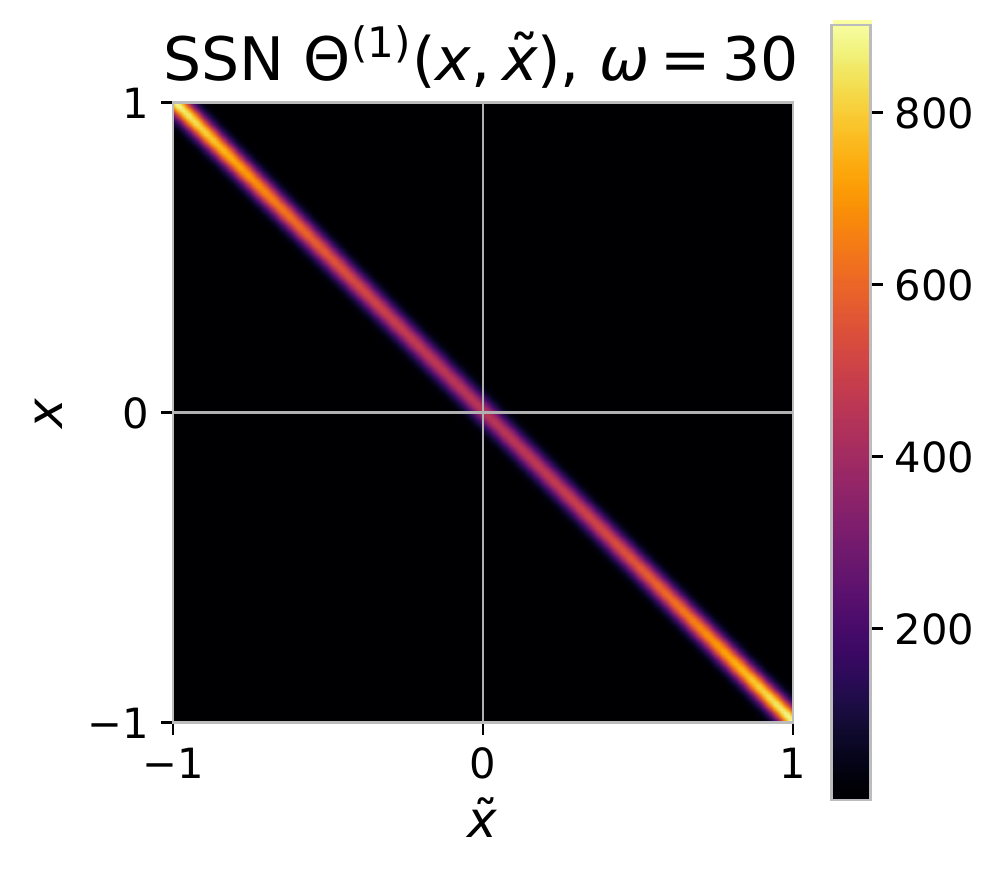}
  \end{subfigure}
  \hfill
  \hfill
  \rulesep
  \hfill
  \hfill
    \begin{subfigure}[c]{0.22\textwidth}
     \centering
     \includegraphics[width=\textwidth]{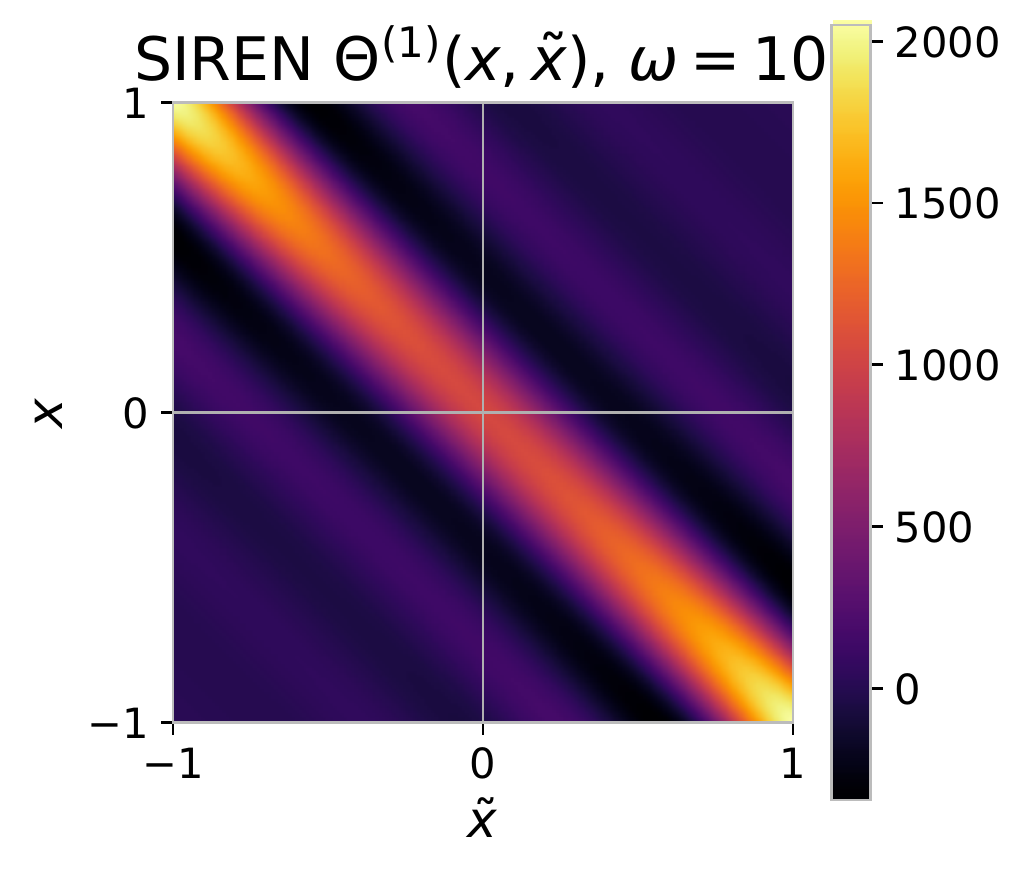}
  \end{subfigure}
    \begin{subfigure}[c]{0.22\textwidth}
     \centering
     \includegraphics[trim={1cm 0cm 0cm 0cm},clip,width=0.925\textwidth]{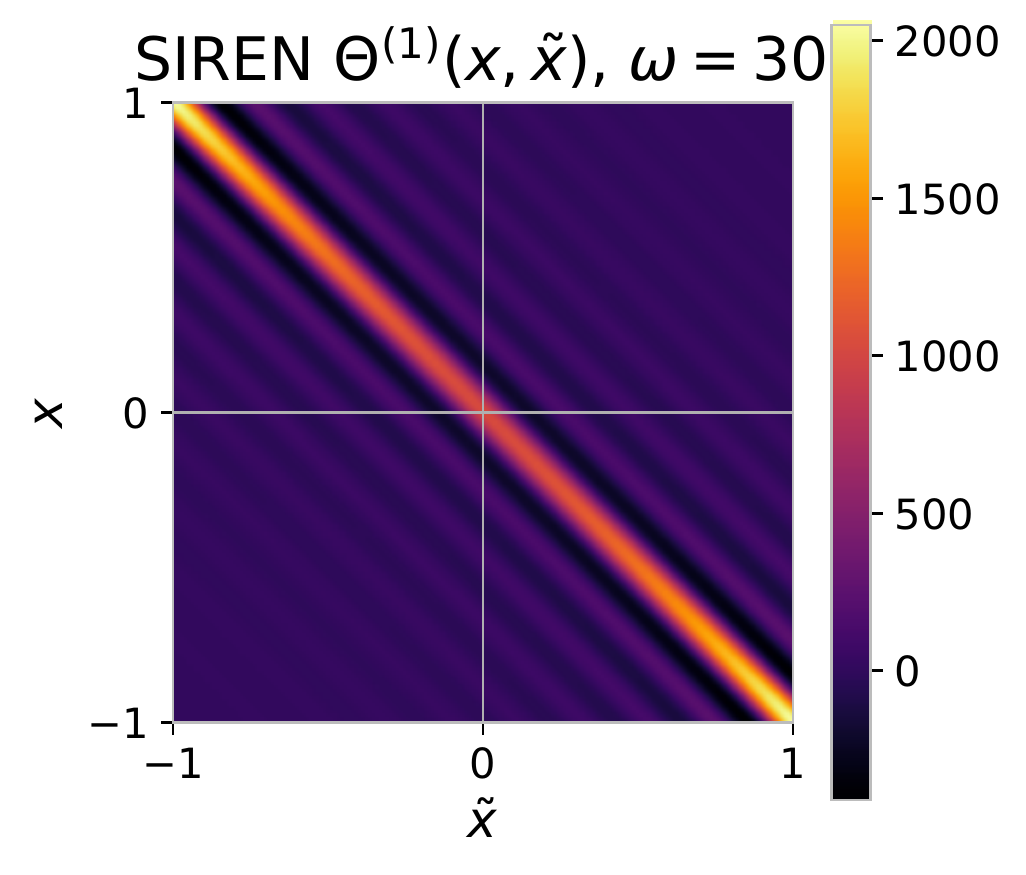}
  \end{subfigure}
  
    \begin{subfigure}[c]{0.04\textwidth}
     \centering
    \rotatebox{90}{$L=1$}
  \end{subfigure}
  \begin{subfigure}[c]{0.2\textwidth}
     \centering
     \includegraphics[width=\textwidth]{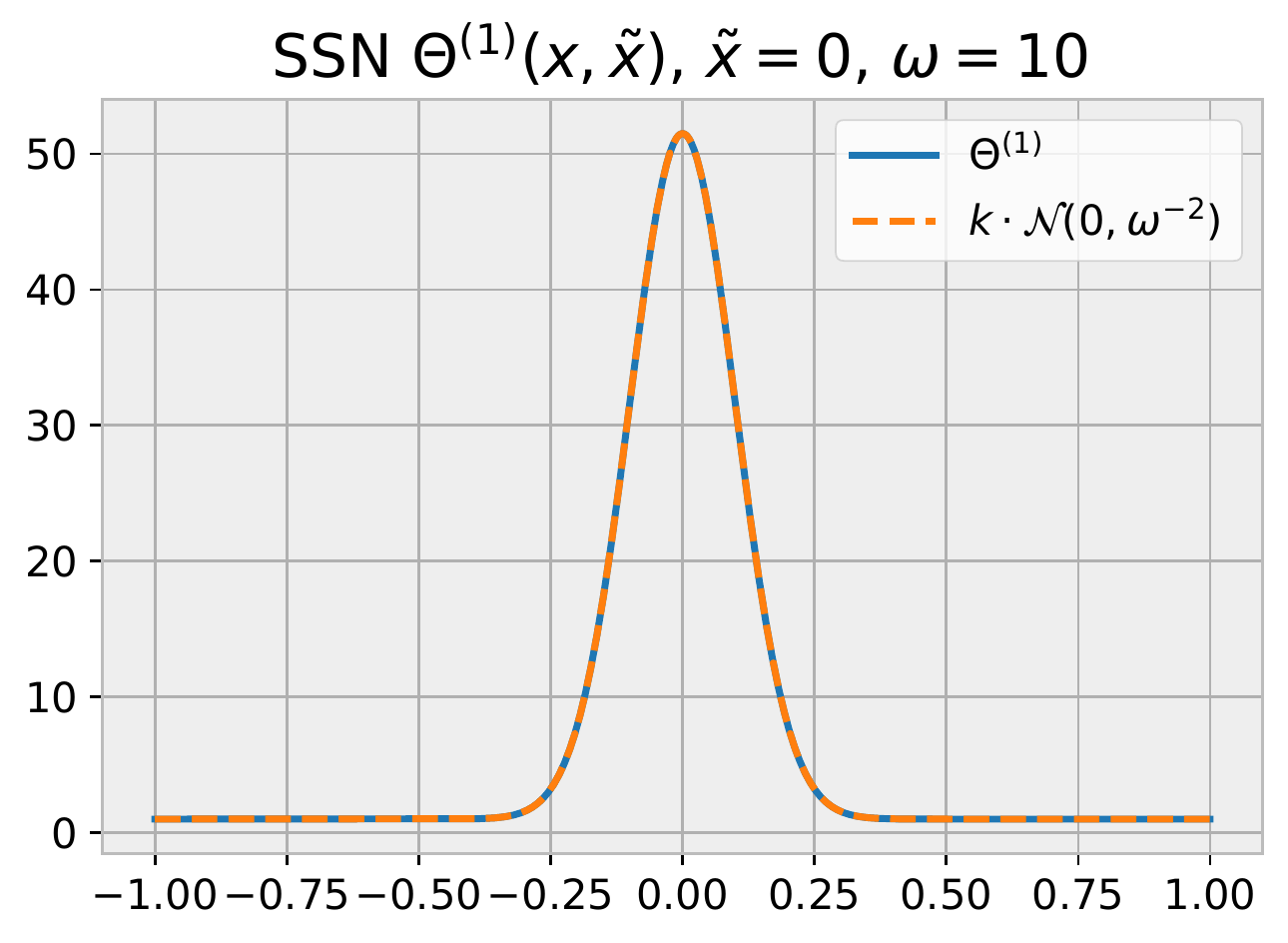}
  \end{subfigure}
  \begin{subfigure}[c]{0.2\textwidth}
     \centering
     \includegraphics[width=\textwidth]{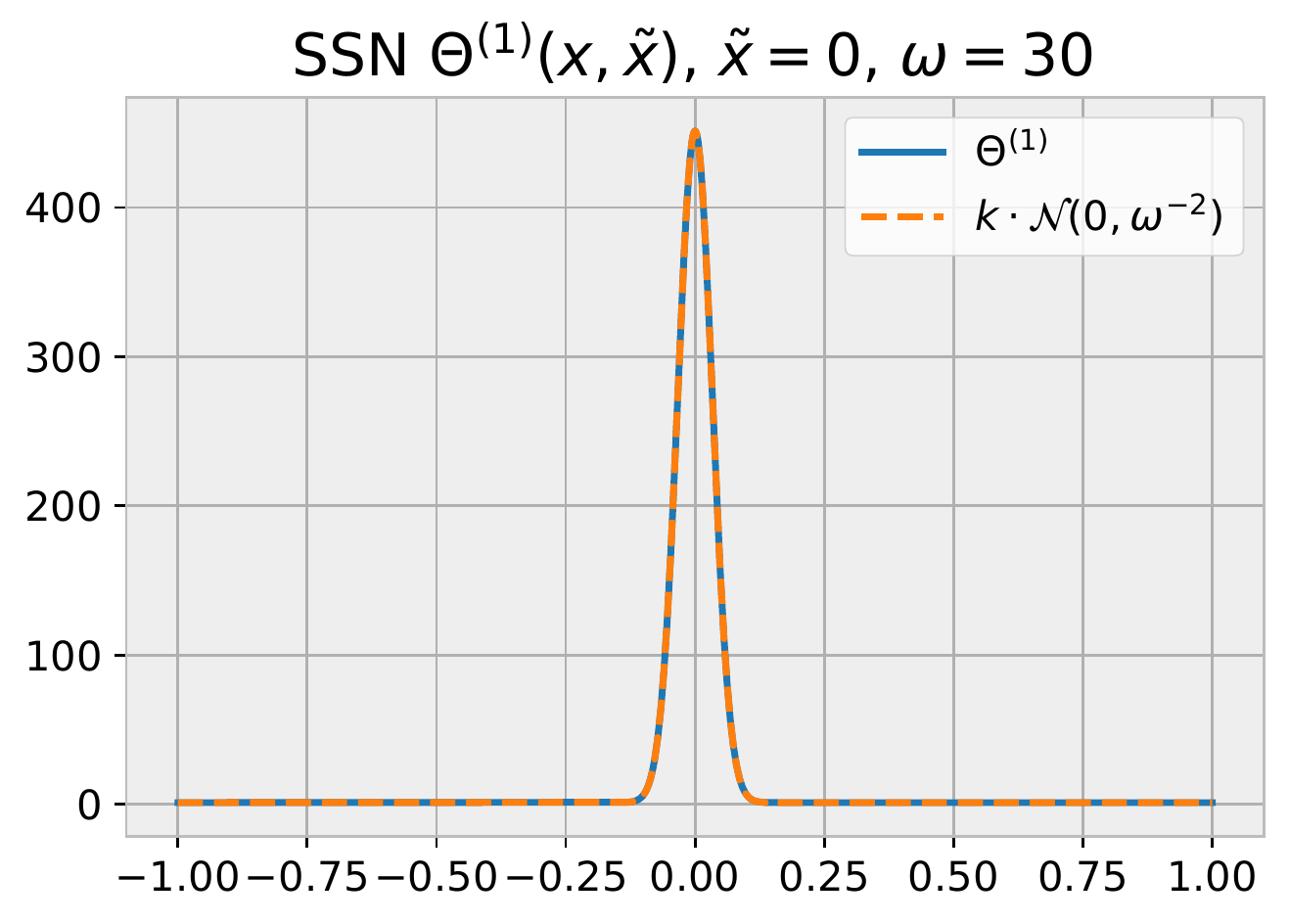}
  \end{subfigure}
  \hfill
  \hfill
  \rulesep
  \begin{subfigure}[c]{0.04\textwidth}
     \centering
    \rotatebox{90}{$L=1$}
  \end{subfigure}
  \begin{subfigure}[c]{0.2\textwidth}
     \centering
     \includegraphics[width=\textwidth]{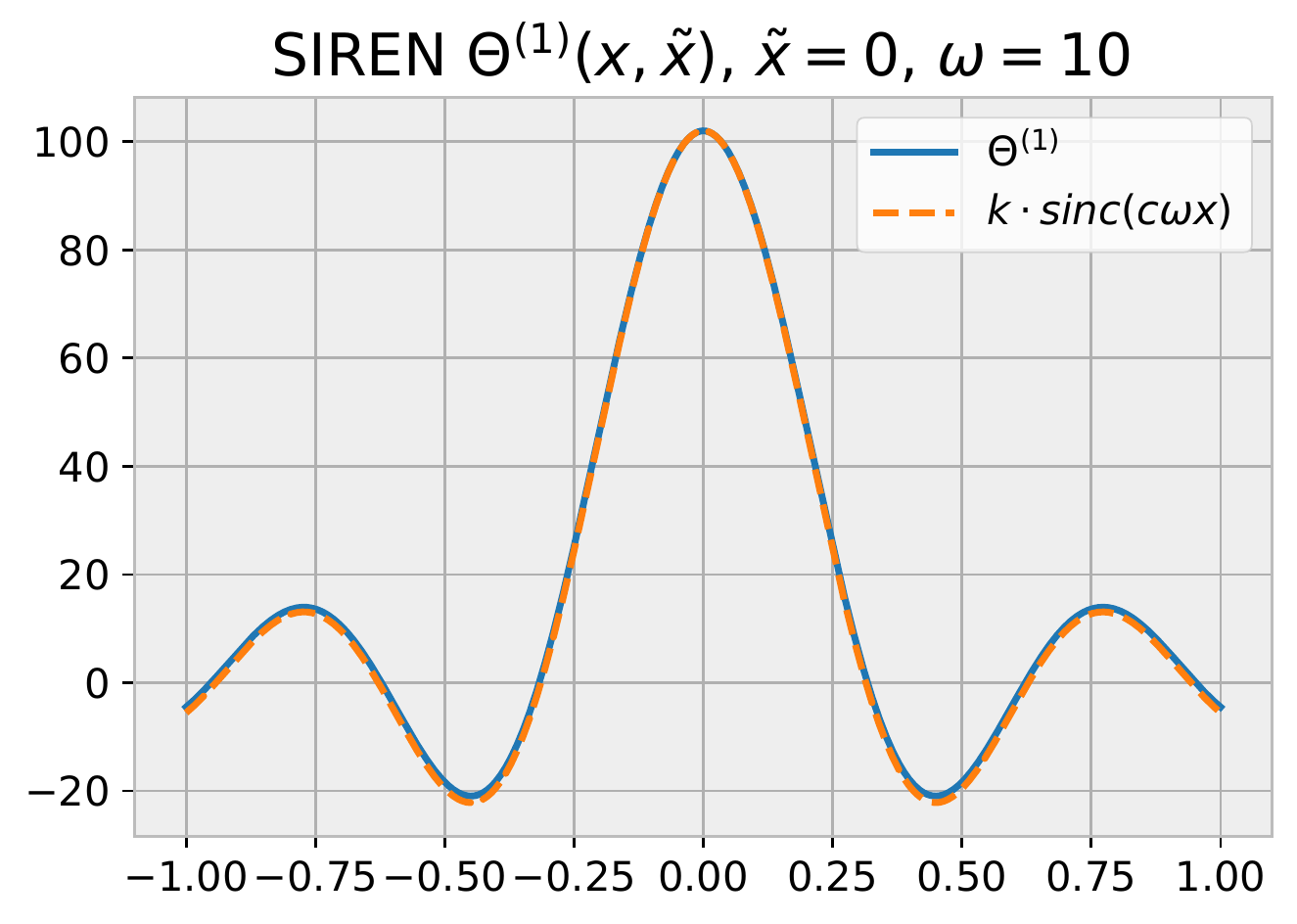}
  \end{subfigure}
  \begin{subfigure}[c]{0.2\textwidth}
     \centering
     \includegraphics[width=\textwidth]{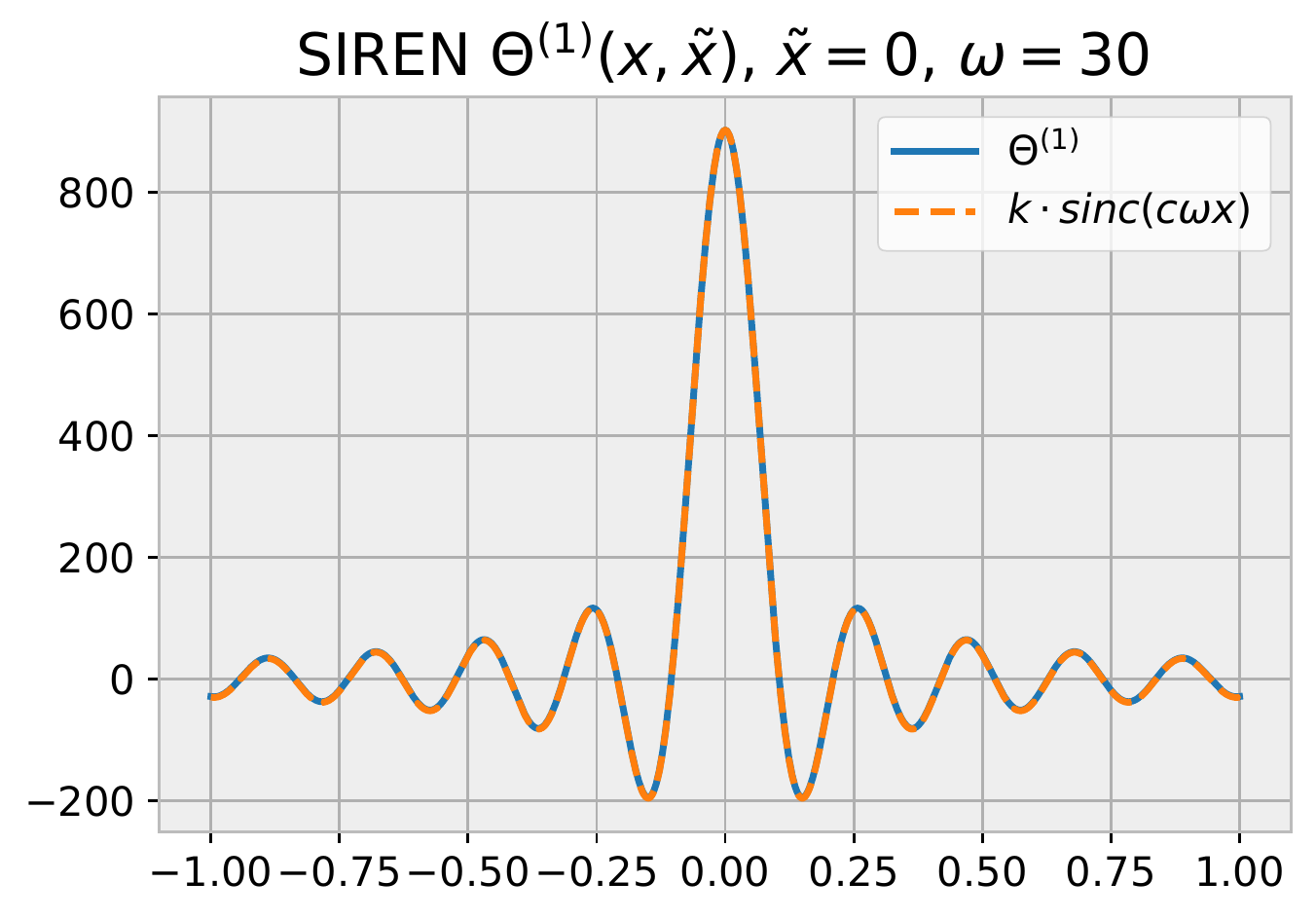}
  \end{subfigure}
  \hfill
  \hfill

  \begin{subfigure}[c]{0.04\textwidth}
     \centering
     \rotatebox{90}{$L=6$}
  \end{subfigure}
    \begin{subfigure}[c]{0.2\textwidth}
     \centering
     \includegraphics[width=\textwidth]{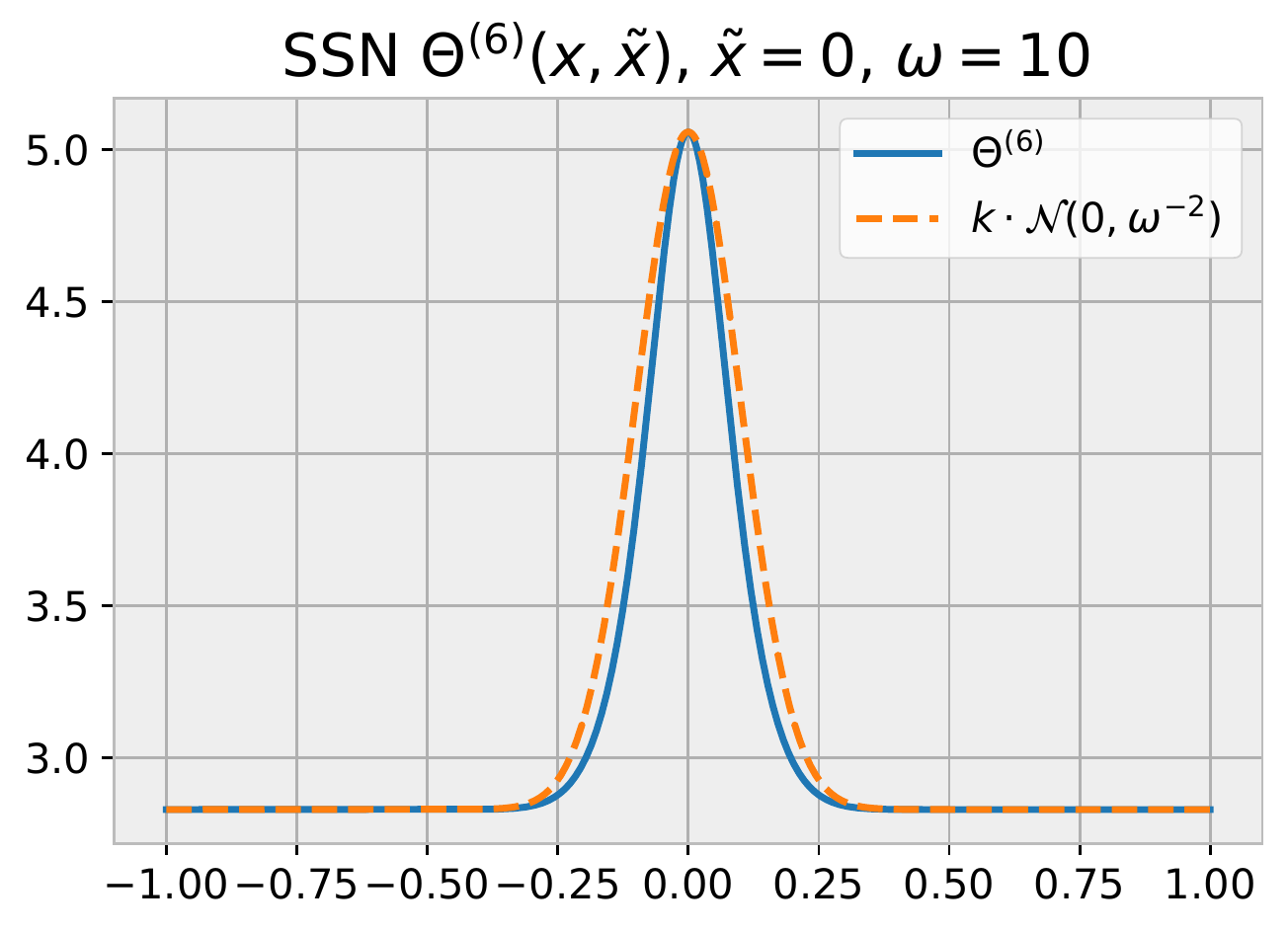}
    \caption*{$\omega=10$}
  \end{subfigure}
  \begin{subfigure}[c]{0.2\textwidth}
     \centering
     \includegraphics[width=\textwidth]{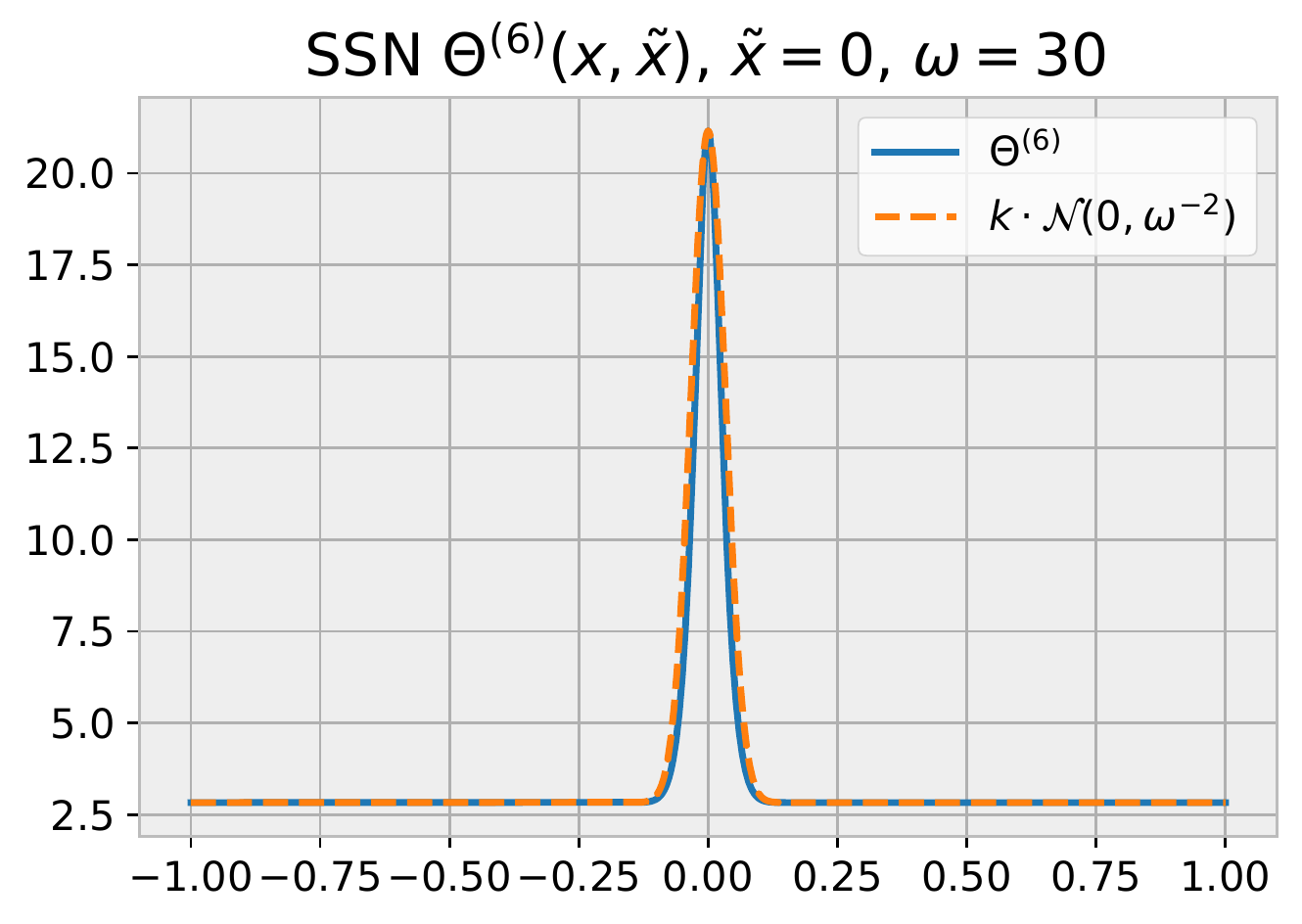}
    \caption*{$\omega=30$}
  \end{subfigure}
  \hfill
  \hfill
  \rulesep
  \begin{subfigure}[c]{0.04\textwidth}
     \centering
     \rotatebox{90}{$L=6$}
  \end{subfigure}
    \begin{subfigure}[c]{0.2\textwidth}
     \centering
     \includegraphics[width=\textwidth]{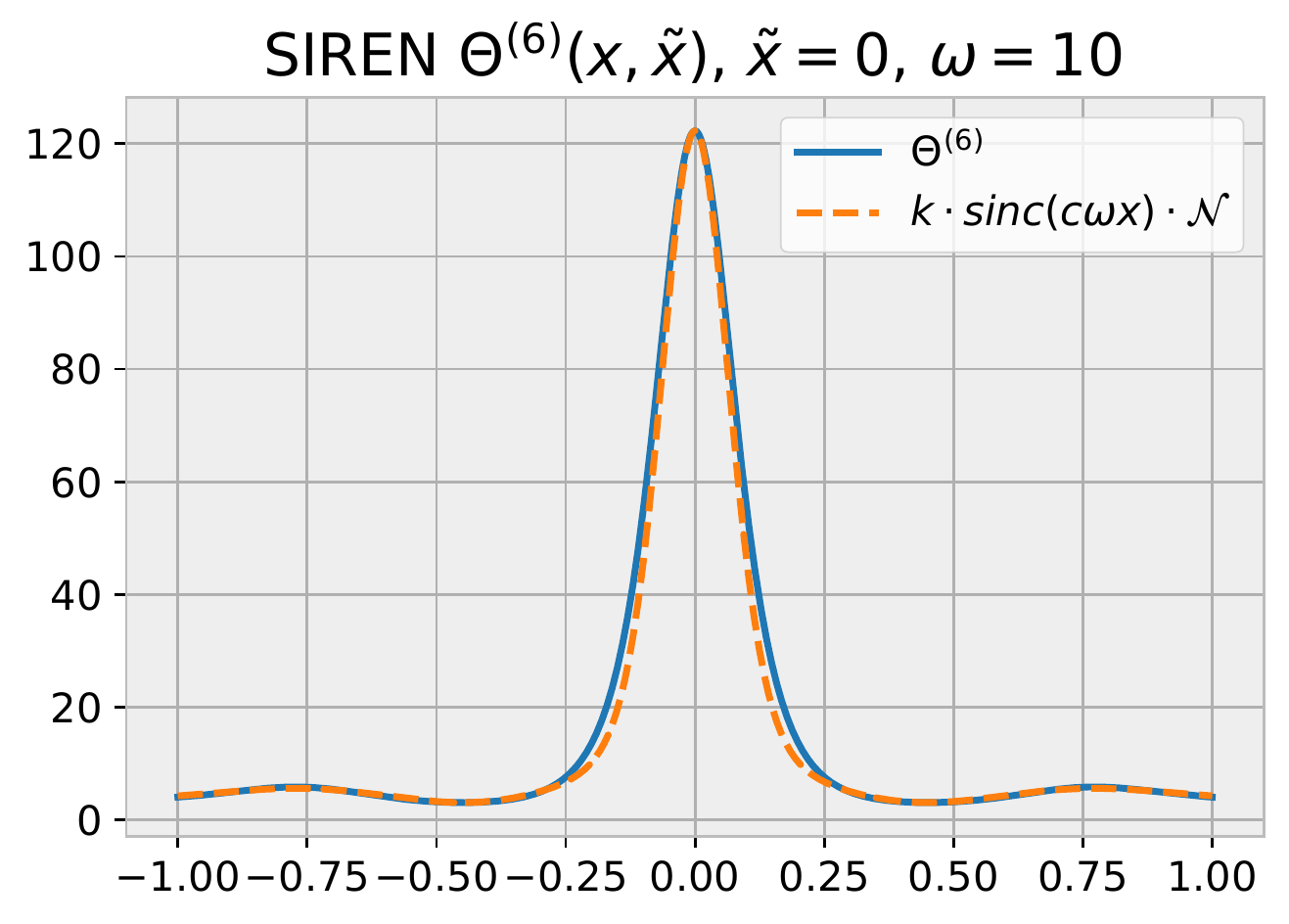}
    \caption*{$\omega=10$}
  \end{subfigure}
  \begin{subfigure}[c]{0.2\textwidth}
     \centering
     \includegraphics[width=\textwidth]{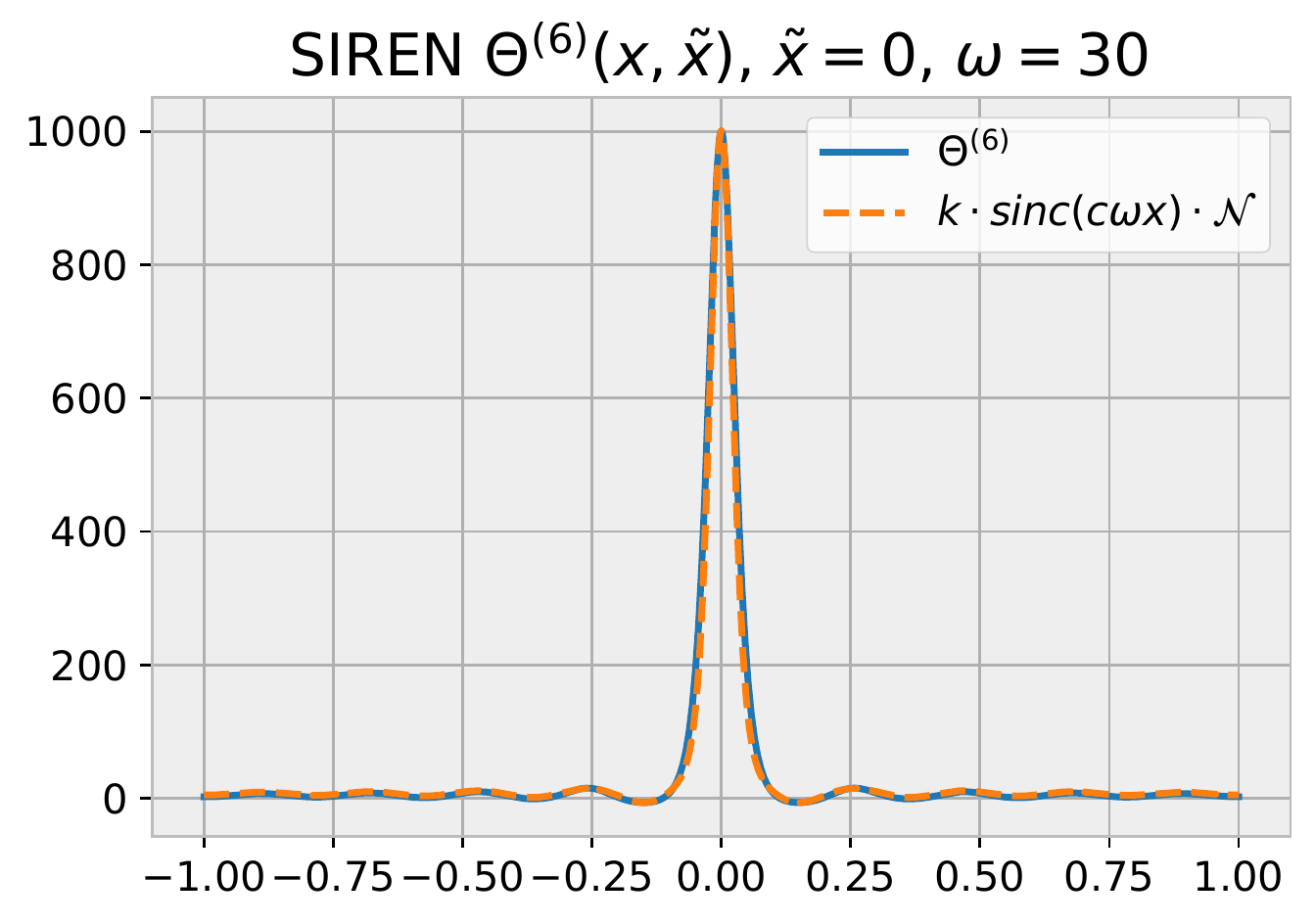}
    \caption*{$\omega=30$}
  \end{subfigure}
  \hfill
  \hfill

    \begin{subfigure}[c]{0.55\textwidth}
     \centering
     SSN
  \end{subfigure}
  \hfill
  \begin{subfigure}[c]{0.44\textwidth}
     \centering
     SIREN
  \end{subfigure}

  \caption{\looseness=-1 The NTK for SIREN and SSN at different $\omega$. \textit{Top:} Kernel values for pairs $(x, \tx) \in [-1, 1]^2$. \textit{Bottom:} Slice at fixed $\tx = 0$. SSN plots show a superimposed Gaussian kernel with variance $\omega^{-2}$ scaled to match the max and min values of the NTK. Similarly, SIREN plots show a $\sinc$ function.}
  \label{fig:kernels}
\end{figure*}

\section{Empirical analysis}
\label{sec:empirical}

As shown above, neural tangent kernel theory suggests that sinusoidal networks work as low-pass filters, with their bandwidth controlled by the parameter $\omega$. 
In this section, we demonstrate empirically that we can observe this predicted behavior even in real sinusoidal networks.
For this experiment, we generate a $512 \times 512$ monochromatic image by super-imposing two orthogonal sinusoidal signals, each consisting of a single frequency, $f(x, y) = \cos(128 \pi x) + \cos(32 \pi y)$.
This function is sampled in the domain $[-1, 1]^2$ to generate the image on the left of Figure~\ref{fig:freqs_omegas_exs}.

\begin{figure*}[t]
  \centering
  \begin{subfigure}[b]{0.15\textwidth}
     \centering
     \includegraphics[width=\textwidth]{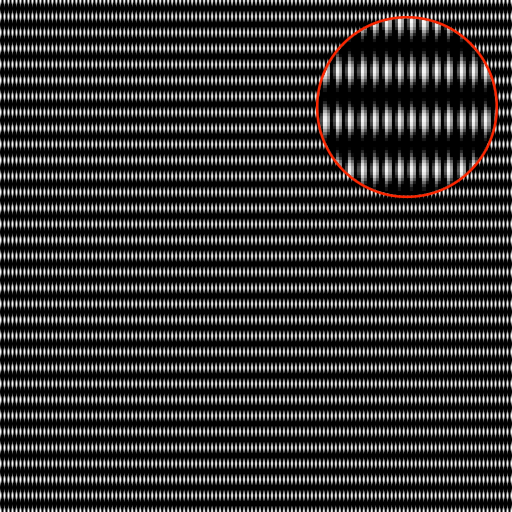}
     \caption*{Ground truth}
  \end{subfigure}
  \hspace{5pt} 
  \rulesep
  \hspace{5pt}
  \begin{subfigure}[b]{0.15\textwidth}
     \centering
     \includegraphics[width=\textwidth]{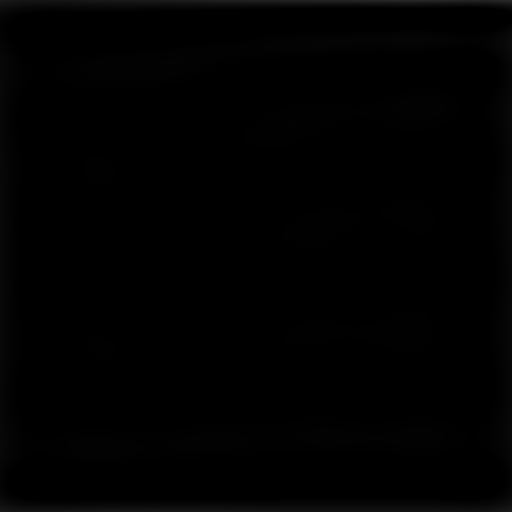}
     \caption*{$\omega = 1$}
  \end{subfigure}
  \begin{subfigure}[b]{0.15\textwidth}
     \centering
     \includegraphics[width=\textwidth]{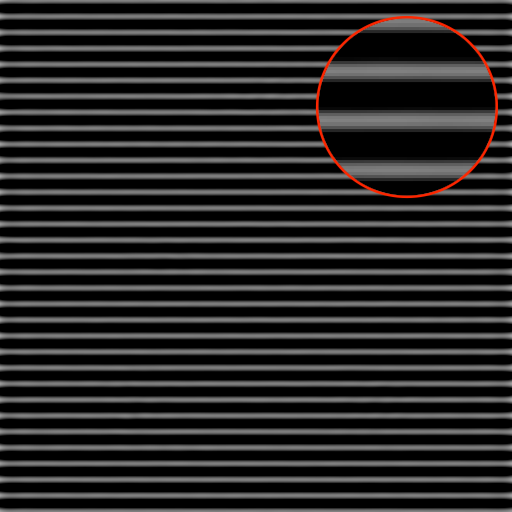}
     \caption*{$\omega = 8$}
  \end{subfigure}
  \begin{subfigure}[b]{0.15\textwidth}
     \centering
     \includegraphics[width=\textwidth]{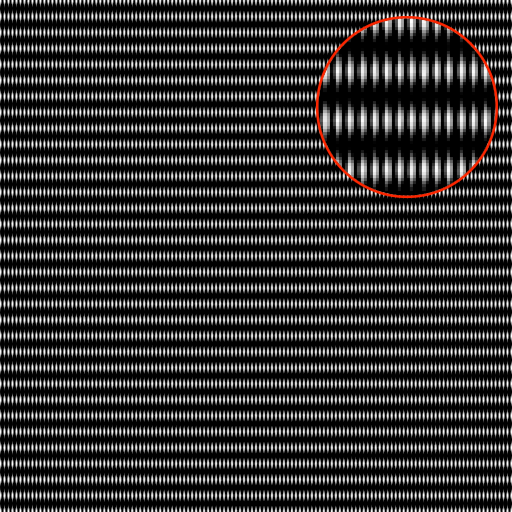}
     \caption*{$\omega = 64$}
  \end{subfigure}
  
  \caption{\looseness=-1 \textit{Left:} The test signal used to analyze the behavior of sinusoidal networks. It is created from two orthogonal single frequencies, $f(x,y)=\cos(128\pi x)+\cos(32\pi y)$.
  \textit{Right:} Examples of the reconstructed signal from networks with different $\omega$, demonstrating each of the loss levels in Figure~\ref{fig:freqs_dft}.}
  \label{fig:freqs_omegas_exs}
\end{figure*}

\looseness=-1
To demonstrate what we can expect from applying low-pass filters of different bandwidths to this signal, we perform a discrete Fourier transform (DFT), cut off frequencies above a certain value, and perform an inverse transform to recover the (filtered) image. 
The MSE of the reconstruction, as a function of the cutoff frequency, is shown in Figure~\ref{fig:freqs_dft}. 
We can see that due to the simple nature of the signal, containing only two frequencies, there are only three loss levels.
If indeed the NTK analysis is correct and sinusoidal networks act as low-pass filters, with bandwidth controlled by $\omega$, we should be able to observe similar behavior with sinusoidal networks with different $\omega$ values.
We plot the final training loss and training curves for sinusoidal networks with different $\omega$ in Figure~\ref{fig:freqs_dft}. 
We can observe, again, that there are three consistent loss levels following the magnitude of the $\omega$ parameter, in line with the intuition that the sinusoidal network is working as a low-pass filter. 
This is also observable in Figure~\ref{fig:freqs_omegas_exs}, where we see example reconstructions for networks of various $\omega$ values after training. 

\looseness=-1 
However, unlike with the DFT low-pass filter (which does not involve any learning), we see in Figure~\ref{fig:freqs_dft} that during training some sinusoidal networks shift from one loss level to a lower one. 
This demonstrates that sinusoidal networks differ from true low-pass filters in that their weights can change, which implies that the bandwidth defined by $\omega$ also changes with learning.
We know the weights $W_1$ in the first layer of a sinusoidal network, given by $f_1(x) = \sin{\left(\omega \cdot W_1^T x + b_1\right)}$,
will change with training. 
Empirically, we observed that the spectral norm of $W_1$ increases throughout training for small $\omega$ values. 
We can interpret that as the overall magnitude of the term $\omega \cdot W_1^Tx$ increasing, which is functionally equivalent to an increase in $\omega$ itself.
In Figure~\ref{fig:freqs_dft}, we observe that sinusoidal networks with smaller values of $\omega$ take a longer time to achieve a lower loss (if at all). 
Intuitively, this happens because, due to the effect described above, lower $\omega$ values require a larger increase in magnitude by the weights $W_1$. 
Given that all networks were trained with the same learning rate, the ones with a smaller $\omega$ require their weights to move a longer distance, and thus take more training steps to achieve a lower loss. 

\begin{figure}[t]
 \centering
  \begin{subfigure}[c]{0.3\textwidth}
    \centering
    \includegraphics[width=\linewidth]{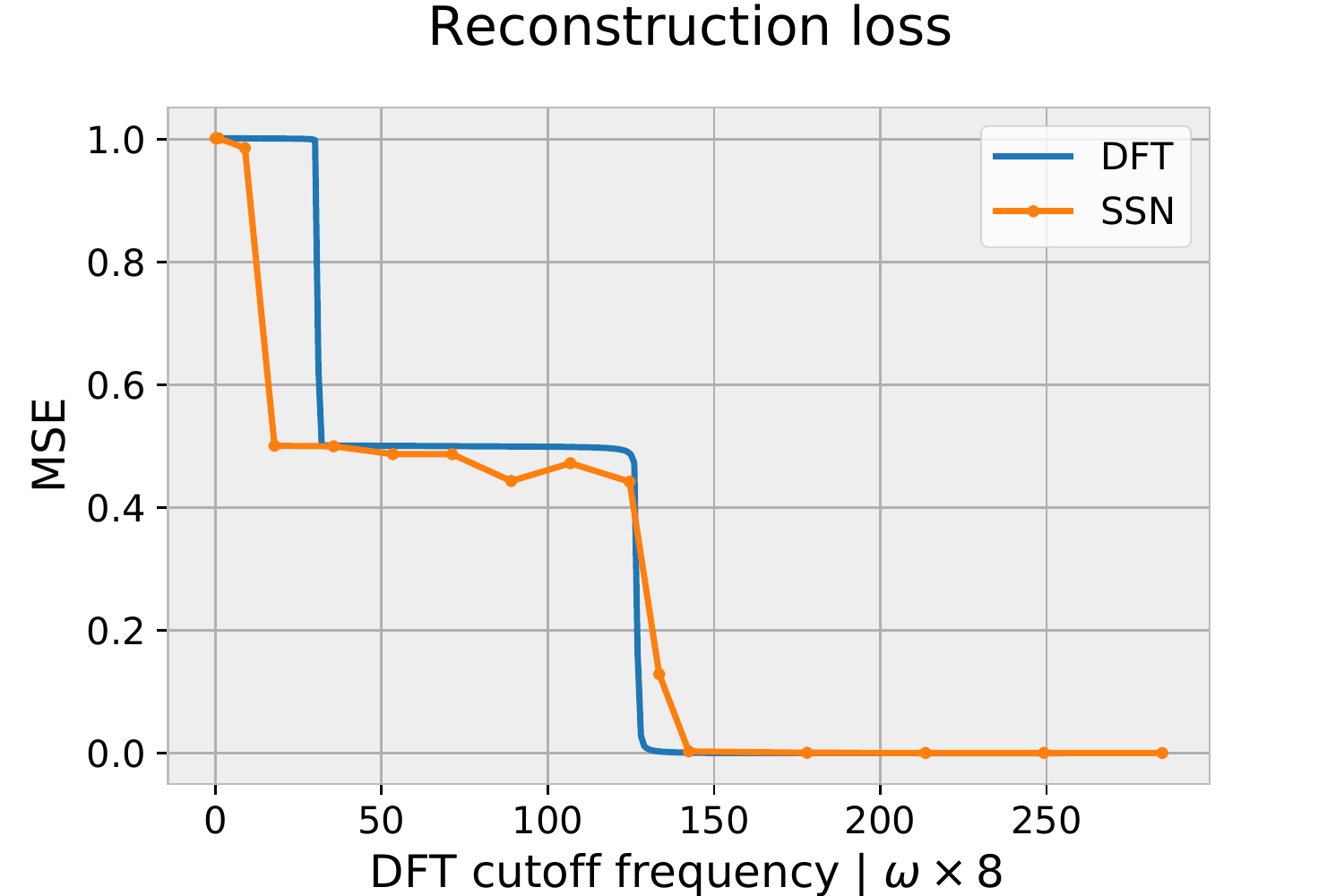}
  \end{subfigure}
  \begin{subfigure}[c]{0.3\textwidth}
    \centering
    \includegraphics[width=\linewidth]{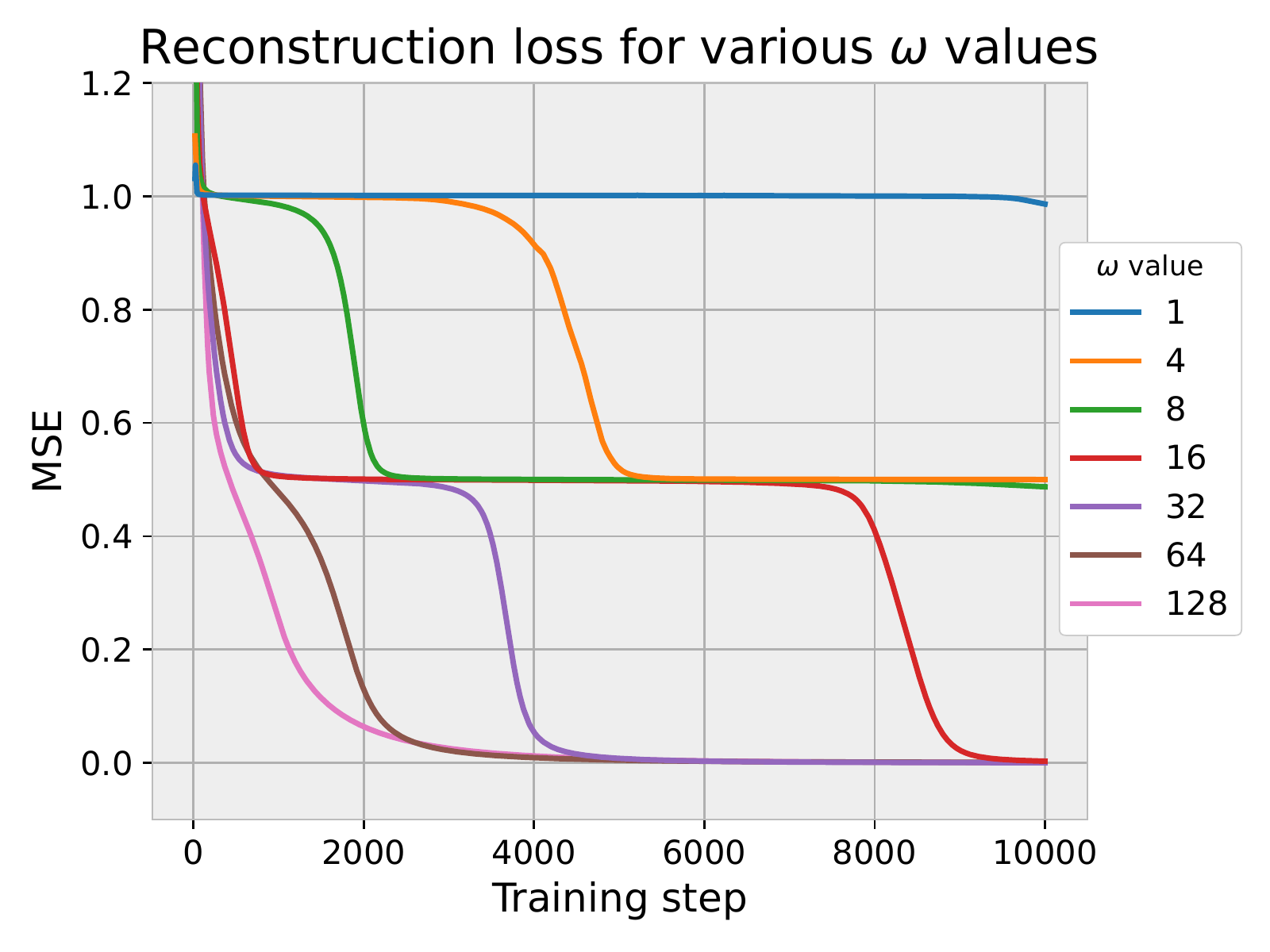}
  \end{subfigure}
  \caption{\looseness=-1 \textit{Left:} Reconstruction loss for different cutoff frequencies for a low-pass filter applied to Figure~\ref{fig:freqs_omegas_exs}. The three loss levels reflect the 2 frequencies present in the simple signal: $128$ and $32$.
  Superimposed is the final loss, after fitting the same input image with sinusoidal networks, for different values of $\omega$ (scaled to align with the frequencies). 
  \textit{Right:} Training curves for the sinusoidal networks. The three loss levels reflect the 2 frequencies present in the simple signal,  and demonstrate that the sinusoidal network is indeed acting as a low-pass filter with bandwidth defined by $\omega$.}
  \label{fig:freqs_dft}
\end{figure}

\section{Tuning \texorpdfstring{$\omega$}{w}}
\label{sec:tuning_omega}

As shown in the previous section, though the bandwidth of a network can change throughout training, the choice of $\omega$ still influences how easily and quickly (if at all) it can learn a given signal. 
The value of the $\omega$ parameter is thus crucial for the learning of the network.
Despite this fact, in SIRENs, for example, this value is not adjusted for each task (except for the audio fitting experiments), and is simply set empirically to an arbitrary value.
In this section, we seek to justify a proper initialization for this parameter, such that it can be chosen appropriately for each given task.

Moreover, it is often not the case that we simply want to fit only the exact training samples but instead want to find a good interpolation (\textit{i.e.}, generalize well).
Setting $\omega$ too high, and thus allowing the network to model frequencies that are much larger than the ones present in the actual signal is likely to cause overfitting.
This is demonstrated empirically in Figure~\ref{fig:freqs_omegas_img}.

Consequently, we want instead to tune the network to the highest frequency present in the signal. 
However, we do not always have the knowledge of what is the value of the highest frequency in the true underlying signal of interest. 
Moreover, we have also observed that, since the network learns and its weights change in magnitude, that value in fact changes with training.
Therefore, the most we can hope for is to have a good heuristic to guide the choice of $\omega$.
Nevertheless, having a reasonable guess for $\omega$ is also likely sufficient for good performance, precisely due to the ability of the network to adapt during training and compensate for a possibly slightly suboptimal choice.

\looseness=-1 
\textbf{Choosing $\omega$ from the Nyquist frequency.}\label{sec:nyquist} \;\;
One source of empirical information on the relationship between $\omega$ and the sinusoidal network's ``learnable frequencies'' is the previous section's empirical analysis. 
Taking into account the scaling, we can see from Fig.~\ref{fig:freqs_dft} that around $\omega=16$ the network starts to be able to learn the full signal (freq. $128$). 
We can similarly note that at about $\omega=4$ the sinusoidal network starts to be able to efficiently learn a signal with frequency $32$, but not the one with frequency $128$.
This scaling suggests a heuristic of setting $\omega$ to about $1/8$ of the signal's maximum frequency.

\looseness=-1
For natural signals, such as pictures, it is common for frequencies up to the Nyquist frequency of the discrete sampling to be present. 
We provide an example for the ``camera'' image we have utilized so far in Figure~\ref{fig:camera_freqs} in Appendix~\ref{sec:appendix_exps}, where we can see that the reconstruction loss through a low-pass filter continues to decrease significantly up to the Nyquist frequency for the image resolution. 

In light of this information, analyzing the choices of $\omega$ for the experiments in Section~\ref{sec:simple_siren_exp} again suggests that $\omega$ should be set around $1/8$ of the Nyquist frequency of the signal. These values of $\omega$ are summarized in Table~\ref{tab:gen_exp} in the ``Fitting $\omega$'' column. 
For example, the image fitting experiment shows that, for an image of shape $512 \times 512$ (and thus Nyquist frequency of $256$ for each dimension), this heuristic suggests an $\omega$ value of $256 / 8 = 32$, which is the value found to work best empirically through search.
We find similar results for the audio fitting experiments. 
The audio signals used in the audio fitting experiment contained approximately $300,000$ and $500,000$ points, and thus maximum frequencies of approximately $150,00$ and $250,000$. 
This suggests reasonable values for $\omega$ of $18,750$ and $31,250$, which are close to the ones found empirically to work well.
In examples such as the video fitting experiments, in which each dimension has a different frequency, it is not completely clear how to pick a single $\omega$ to fit all dimensions. This suggests that having independent values of $\omega$ for each dimension might be useful for such cases, as discussed in the next section.

Finally, when performing the generalization experiments in Section~\ref{sec:experiments}, we show the best performing $\omega$ ended up being half the value of the best $\omega$ used in the fitting tasks from Section~\ref{sec:simple_siren_exp}. This follows intuitively, since for the generalization task we set apart half the points for training and the other half for testing, thus dividing the maximum possible frequency in the training sample in half, providing further evidence of the relationship between $\omega$ and the maximum frequency in the input signal.

\paragraph{Multi-dimensional $\omega$.}\label{sec:multi_dim_omega}
In many problems, such as the video fitting and PDE problems, not only is the input space multi-dimensional, it also contains time and space dimensions (which are additionally possibly of different shape).
This suggests that employing a multi-dimensional $\omega$, specifying different frequencies for each dimension might be beneficial.
In practice, if we employ a scaling factor $\lambda = \begin{bmatrix} \lambda_1 & \lambda_2 & \dots & \lambda_d \end{bmatrix}^T$, we have the first layer of the sinusoidal network given by
\begin{align}
\label{eq:multi_dim_omega}
    f_1(x) &= \sin{\left(\omega \left( W_1 \left(\lambda \odot x\right) + b_1 \right) \right)} = \sin{\left( W_1 \left(\Omega \odot x\right)  + \omega b_1 \right)},
\end{align}
where 
$\Omega = \begin{bmatrix} \lambda_1 \omega & \lambda_2 \omega & \dots & \lambda_d \omega \end{bmatrix}^T$ works as a multi-dimensional $\omega$.
In the following experiments, we employ this approach to three-dimensional problems, in which we have time and differently shaped space domains, namely the video fitting and physics-informed neural network PDE experiments. For these experiments, we report the $\omega$ in the form of the (already scaled) $\Omega$ vector for simplicity.

\paragraph{Choosing $\omega$ from available information}\label{sec:prior_omega}
Finally, in many problems we do have some knowledge of the underlying signal we can leverage, such as in the case of inverse problems. 
For example, let's say we have velocity fields for a fluid and we are trying to solve for the coupled pressure field and the Reynolds number using a physics-informed neural network (as done in Section~\ref{sec:experiments}). 
In this case, we have access to two components of the solution field. 
Performing a Fourier transform on the training data we have can reveal the relevant spectrum and inform our choice of $\omega$. 
If the maximum frequency in the signal is lower than the Nyquist frequency implied by the sampling, this can lead to a more appropriate choice of $\omega$ than suggested purely from the sampling. 

\section{Experiments}
\label{sec:experiments}

In this section, we first perform experiments to demonstrate how the optimal value of $\omega$ influences the generalization error of a sinusoidal network, following the discussion in Section~\ref{sec:tuning_omega}. 
After that, we demonstrate that sinusoidal networks with properly tuned $\omega$ values outperform traditional physics-informed neural networks in classic PDE tasks.

\subsection{Evaluating generalization}

\looseness=-1
We now evaluate the simple sinusoidal network generalization capabilities. To do this, in all experiments in this section we segment the input signal into training and test sets using a checkerboard pattern -- along all axis-aligned directions, points alternate between belonging to train and test set. 
We perform audio, image and video fitting experiments.
When performing these experiments, we search for the best performing $\omega$ value for generalization (defined as performance on the held-out points). 
We report the best values on Table~\ref{tab:gen_exp}. 
We observe that, as expected from the discussion in Section~\ref{sec:tuning_omega}, the best performing $\omega$ values follow the heuristic discussed above, and are in fact  half the best-performing value found in the previous fitting experiments from Section~\ref{sec:simple_siren_exp}, confirming our expectation.
This is also demonstrated in the plot in Figure~\ref{fig:freqs_omegas_img}.
Using a higher $\omega$ leads to overfitting and poor generalization outside the training points. 
This is demonstrated in Figure~\ref{fig:freqs_omegas_img}, in which we can see that choosing an appropriate $\omega$ value from the heuristics described previously leads to a good fit and interpolation. 
Setting $\omega$ too high leads to interpolation artifacts, due to overfitting of spurious high-frequency components.

\looseness=-1 For the video signals, which have different size along each axis, we employ a multi-dimensional $\omega$. 
We scale each dimension of $\omega$ proportional to the size of the input signal along the corresponding axis. 

\begin{table}[t]
  \caption{\looseness=-1 Generalization results and the respective tuned $\omega$ value. Generalization values are mean squared error (MSE). We can observe the best performing $\omega$ for generalization is half the $\omega$ used previously for fitting the full signal due to the fact that this task used half the sample points from previously.}
  \label{tab:gen_exp}
  \centerline{
  \begin{tabular}{l | c c | c c}
    \toprule
    Experiment & SIREN & SSN & $\omega$ & Fitting $\omega$ \\
    \midrule
    
    Image & $2.76 \cdot 10^{-4}$ & $\mathbf{1.25 \cdot 10^{-4}}$ & 16 & 32 \\
    Audio (Bach) & $4.55 \cdot 10^{-6}$ & $\mathbf{3.87 \cdot 10^{-6}}$ & 8,000 & 15,000 \\
    Audio (counting) & $1.37 \cdot 10^{-4}$ & $\mathbf{5.97 \cdot 10^{-5}}$ &  16,000 & 32,000 \\
    
    Video (cat) & $3.40 \cdot 10^{-3}$ & $\mathbf{1.76 \cdot 10^{-3}}$  & $\begin{bmatrix} 4 & 8 & 8 \end{bmatrix}$ & 8\\
    Video (bikes) & $2.74  \cdot 10^{-3}$ & $\mathbf{8.79 \cdot 10^{-4}}$  & $\begin{bmatrix} 4 & 4 & 8   \end{bmatrix}$ & 8 \\
    
    \bottomrule
  \end{tabular}
  }
\end{table}

\begin{figure*}[t]
  \centering
\begin{subfigure}[c]{0.3\textwidth}
     \centering
     \includegraphics[width=\textwidth]{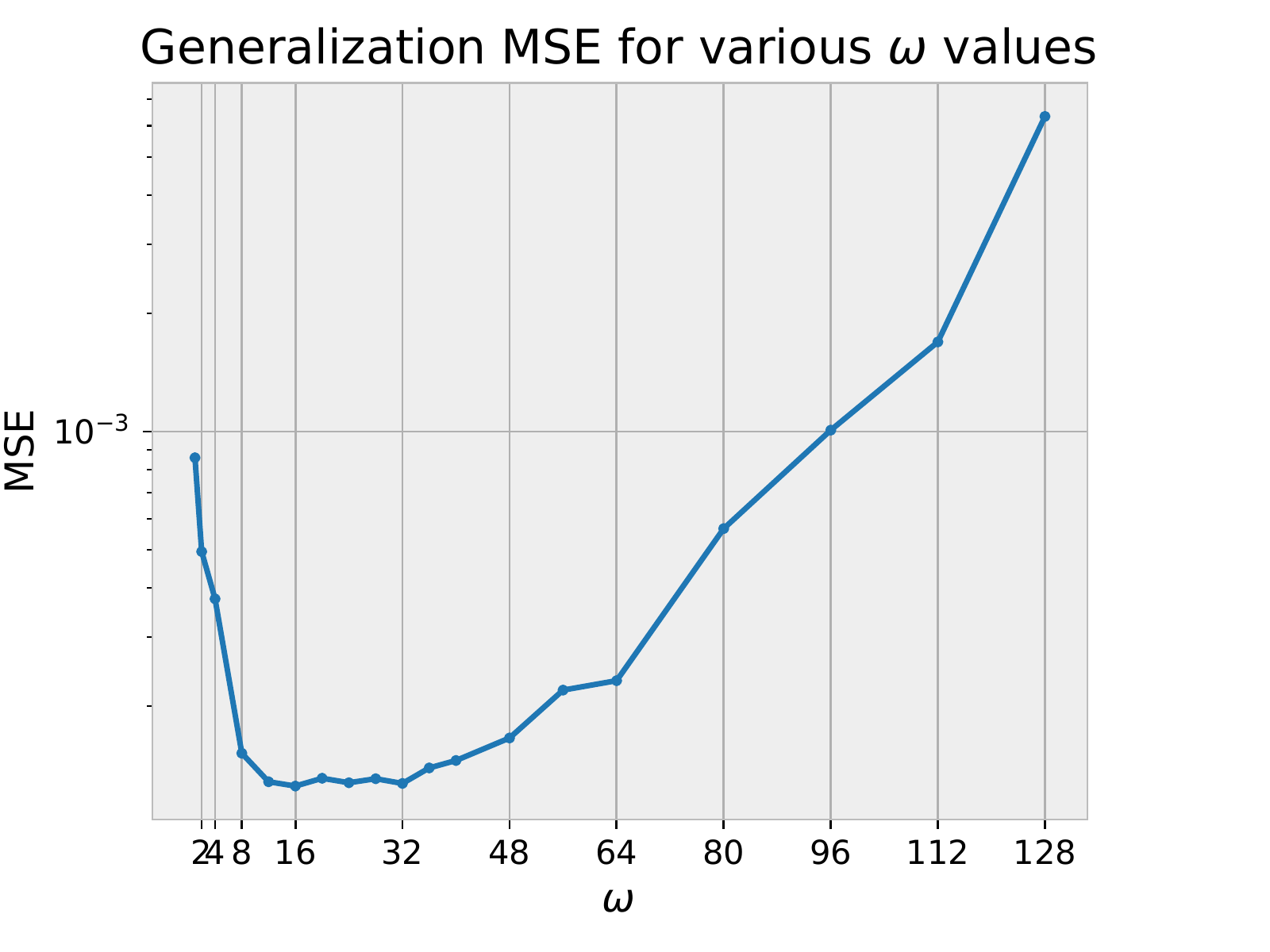}
  \end{subfigure}
  \rulesep
  \hfill
  \begin{subfigure}[c]{0.2\textwidth}
     \centering
     \includegraphics[width=\textwidth]{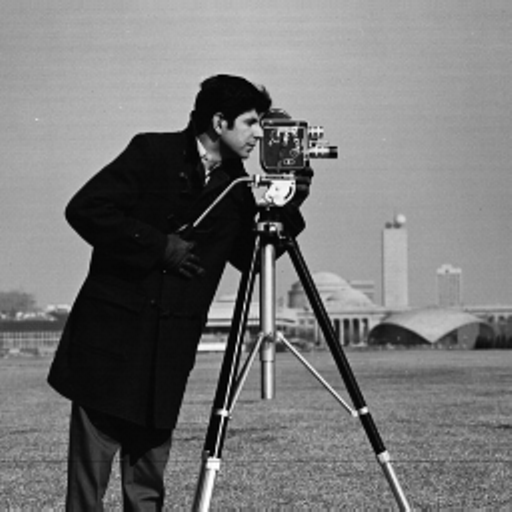}
     \caption*{Ground truth}
  \end{subfigure}
  \begin{subfigure}[c]{0.2\textwidth}
     \centering
     \includegraphics[width=\textwidth]{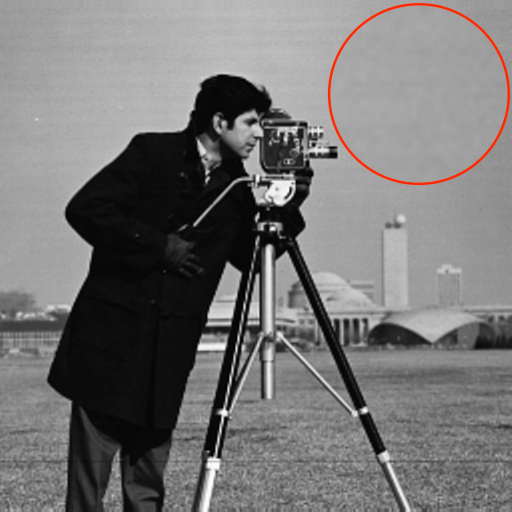}
     \caption*{$\omega = 16$}
  \end{subfigure}
  \begin{subfigure}[c]{0.2\textwidth}
     \centering
     \includegraphics[width=\textwidth]{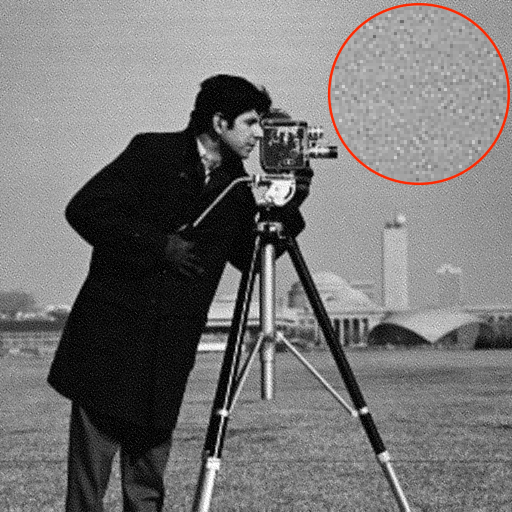}
     \caption*{$\omega=128$}
  \end{subfigure}
  \caption{\textit{Left:} Final training loss for different values of $\omega$ in the image fitting generalization experiment.
  \textit{Right:} Examples of generalization from half the points using sinusoidal networks with different values of $\omega$. Even though both networks achieve equivalent training loss, the rightmost one, with $\omega$ higher than what would be suggested from the Nyquist frequency of the input signal, overfits the data, causing high-frequency noise  artifacts in the reconstruction (\textit{e.g.}, notice the sky).}
  \label{fig:freqs_omegas_img}
\end{figure*}

\subsection{Solving differential equations}
\label{sec:pinn_exps}

\begin{table}[t]
  \caption{\looseness=-1 Comparison of the sinusoidal network and MLP with tanh non-linearity on PINN experiments from \citep{raissi_physics-informed_2019, sitzmann_implicit_2020}. Values are percent error relative to ground truth value for each parameter for identification problems and mean squared error (MSE) for inference problems. The Helmholtz experiment is the same from Section~\ref{sec:simple_siren_exp}.}
  \label{tab:pinn_exp}
  \centerline{
  \begin{tabular}{l | c c c}
    \toprule
    Experiment &  Baseline & SSN & $\omega$ \\
    \midrule
    
    Burgers (Identification) & $[0.0521\%, 0.4522\%]$ & $\mathbf{[0.0071\%, 0.0507\%]}$ & 10 \\
    Navier-Stokes (Identification) & $[0.0046\%, 2.093\%]$ & $\mathbf{[0.0038\%, 1.782\%]}$ &    $\begin{bmatrix} 0.6 & 0.3 & 1.2 \end{bmatrix}$ \\
    Schrödinger (Inference) & $1.04 \cdot 10^{-3}$ & $\mathbf{4.30 \cdot 10^{-4}}$ & 4 \\
    Helmholtz (Inference) & $5.97 \cdot 10^{-2}$ & $\mathbf{5.94 \cdot 10^{-2}}$ &  16 \\

    \bottomrule
  \end{tabular}
  }
\end{table}

\looseness=-1 Finally, we apply our analysis to physics-informed learning. 
We compare the performance of simple sinusoidal networks to the $\tanh$ networks that are commonly used for these tasks. 
Results are summarized in Table~\ref{tab:pinn_exp}. Details for the Schrödinger and Helmholtz experiments are presented in Appendix~\ref{sec:appendix_exps}.

\subsubsection{Burgers equation (Identification)}

This experiment reproduces the Burgers equation identification experiment from \citet{raissi_physics-informed_2019}. Here we are identifying the parameters $\lambda_1$ and $\lambda_2$ of a 1D Burgers equation, $u_t + \lambda_1 u u_x - \lambda_2 u_{xx} = 0$, given a known solution field.
The ground truth value of the parameters are $\lambda_1 = 1.0$ and $\lambda_2 = 0.01/\pi$. 

\looseness=-1 In order to find a good value for $\omega$, we perform a low-pass reconstruction of the solution as before. We can observe in Figure~\ref{fig:burgers_freqs} that the solution does not have high bandwidth, with most of the loss being minimized with only the lower half of the spectrum. Note that the sampling performed for the training data ($N=2,000$) is sufficient to support such frequencies. This suggests an $\omega$ value in the range $8-10$. 
Indeed, we observe that $\omega=10$ gives the best identification of the desired parameters, with errors of $0.0071\%$ and $0.0507\%$ for $\lambda_1$ and $\lambda_2$ respectively, against errors of $0.0521\%$ and $0.4522\%$ of the baseline. This value of $\omega$ also achieves the lowest reconstruction loss against the known solution, with an MSE of $8.034 \cdot 10^{-4}$.
Figure~\ref{fig:burgers_freqs} shows the reconstructed solution using the identified parameters.

\begin{figure}[t]
  \centering
  \begin{subfigure}[c]{0.35\textwidth}
    \centering
    \includegraphics[width=\textwidth]{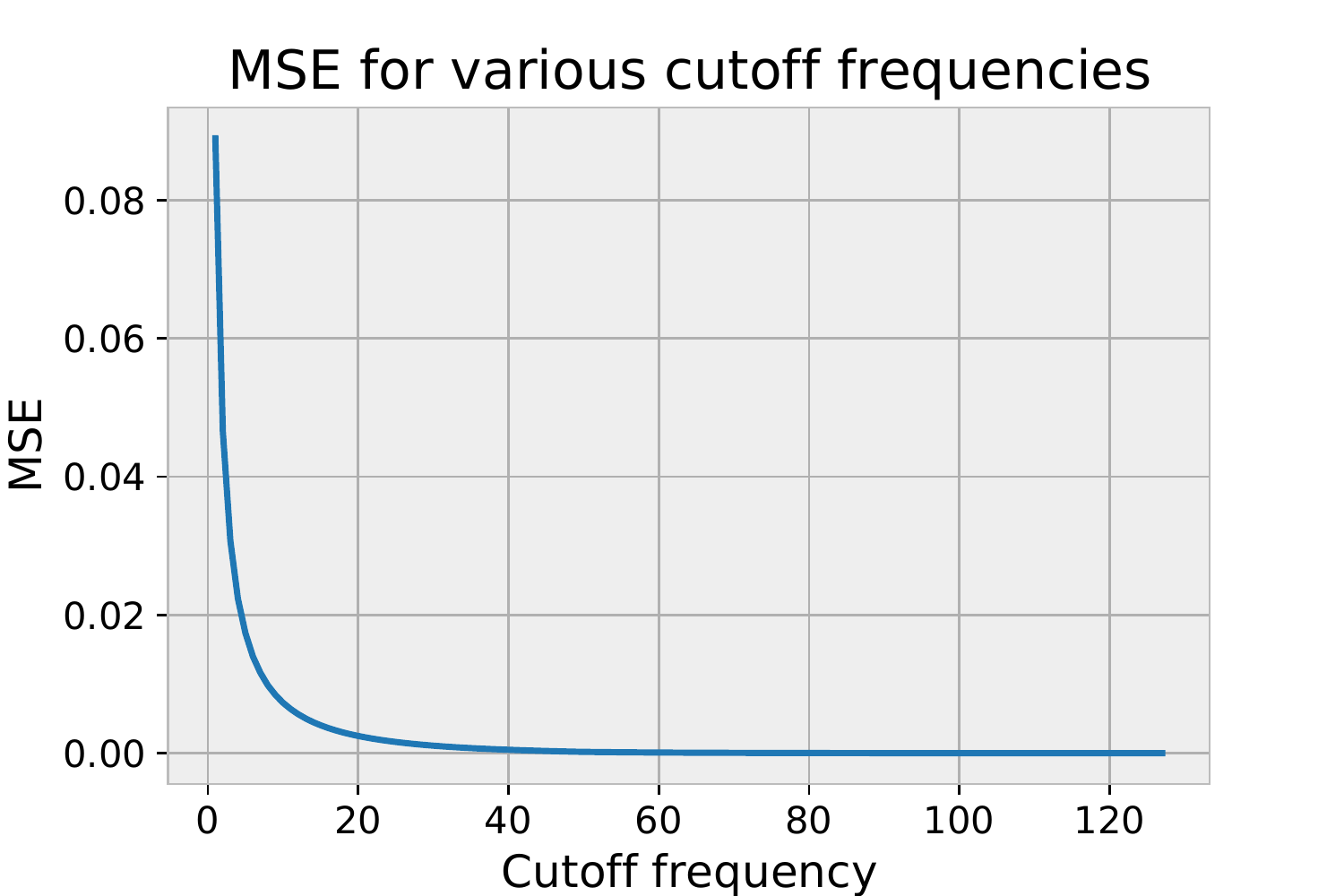}
  \end{subfigure}   
  \rulesep
  \hspace{4pt}
  \begin{subfigure}[c]{0.38\textwidth}
    \centering
    \includegraphics[trim={0cm 3cm 1.5cm 0cm},clip,width=\textwidth]{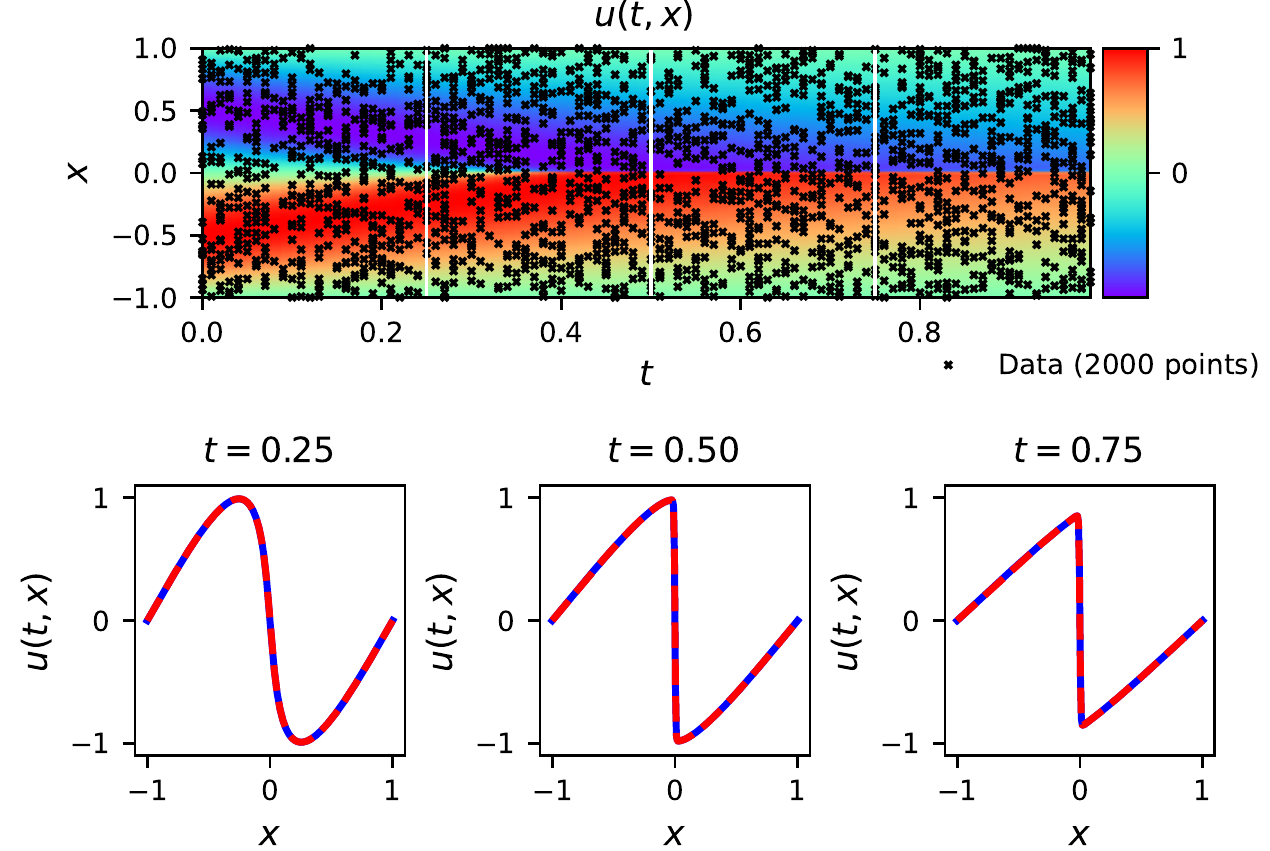}
  \end{subfigure}   
  \caption{\looseness=-1 \textit{Left:}Reconstruction loss for different cutoff frequencies for a low-pass filter applied to the solution of the Burgers equation.
  \textit{Right:} Reconstructed solution of the Burgers equation using the identified parameters with the sinusoidal network, together with the position of the sampled training points.}
  \label{fig:burgers_freqs}
\end{figure}


\subsubsection{Navier-Stokes (Identification)}

This experiment reproduces the Navier-Stokes identification experiment from  \citet{raissi_physics-informed_2019}. In this experiment, we are trying to identify, the parameters $\lambda_1, \lambda_2$ and the pressure field $p$ of the 2D Navier-Stokes equations given by $\frac{\partial u}{\partial t} + \lambda_1 u \cdot \nabla u = -\nabla p + \lambda_2 \nabla^2 u$, given known velocity fields $u$ and $v$.
The ground truth value of the parameters are $\lambda_1 = 1.0$ and $\lambda_2 = 0.01$. 

\looseness=-1
Unlike the 1D Burgers case, in this case the amount of points sampled for the training set ($N=5,000$) is not high, compared to the size of the full solution volume, and is thus the limiting factor for the bandwidth of the input signal. 
Given the random sampling of points from the full solution, the generalized sampling theorem applies. 
The original solution has dimensions of $100 \times 50 \times 200$. With the $5,000$ randomly sampled points, the average sampling rate per dimension is approximately $17$, on average, corresponding to a Nyquist frequency of approximately $8.5$.
Furthermore, given the multi-dimensional nature of this problem, with both spatial and temporal axes, we employ an independent scaling to $\omega$ for each dimension. 
The analysis above suggests an average $\omega \approx 1$, with the dimensions of the problem suggesting scaling factors of $\begin{bmatrix} 0.5 & 1 & 2\end{bmatrix}^T$.
Indeed, we observe that $\Omega=\begin{bmatrix} 0.3 & 0.6 & 1.2\end{bmatrix}^T$ gives the best results.
With with  errors of $0.0038\%$ and $1.782\%$ for $\lambda_1$ and $\lambda_2$ respectively, against errors of $0.0046\%$ and $2.093\%$ of the baseline.
Figure~\ref{fig:ns_data} shows the identified pressure field. 
Note that given the nature of the problem, this field can only be identified up to a constant.


\begin{figure}[t!]
    \begin{subfigure}[c]{0.33\textwidth}
      \centering
      \includegraphics[trim={1.5cm 0.5cm 2cm 0cm},clip,width=\textwidth]{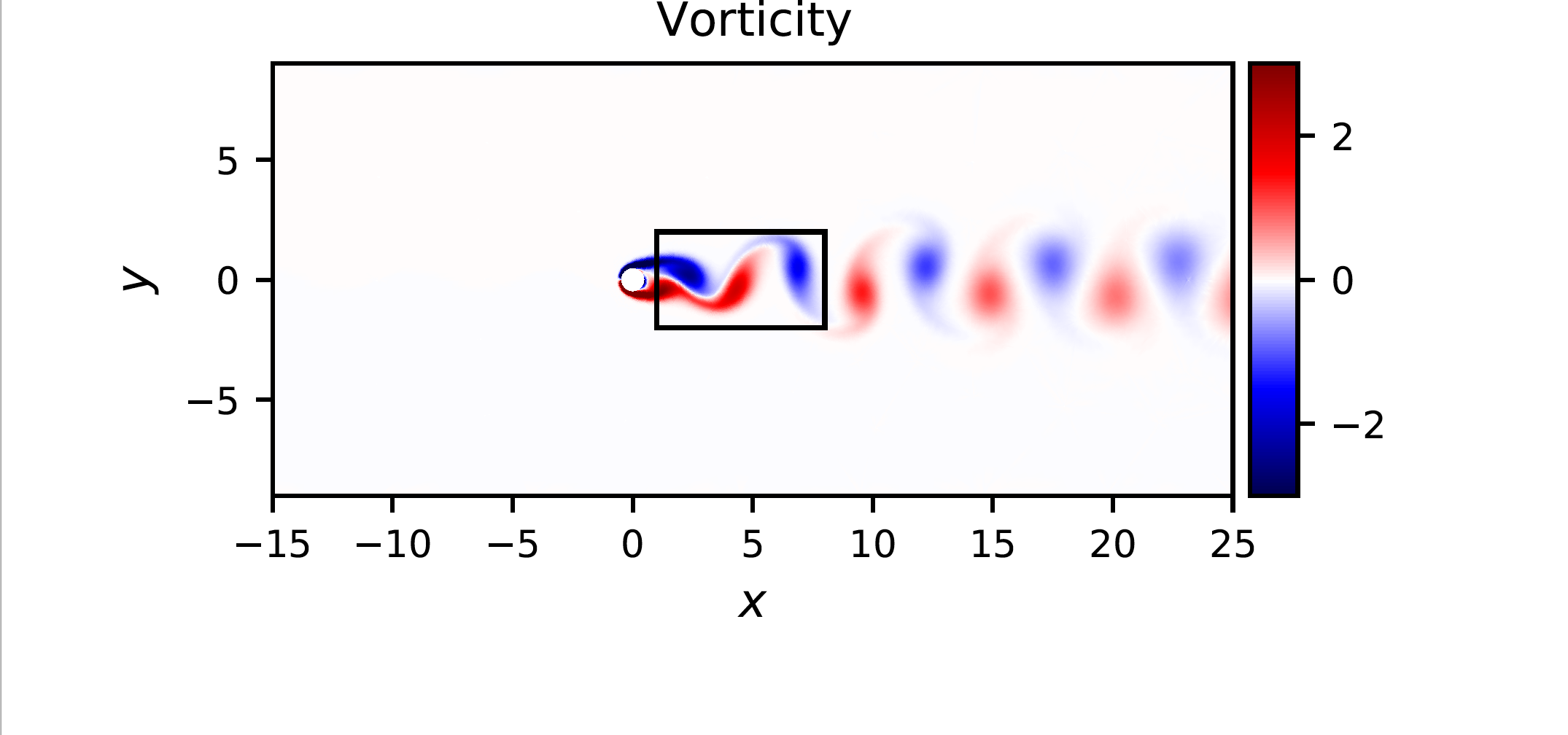}
    \end{subfigure}   
    \hfill
    \rulesep
    \hfill
    \begin{subfigure}[c]{0.63\textwidth}
        \centering
        \includegraphics[trim={0cm 3cm 3cm 0cm},clip,width=\textwidth]{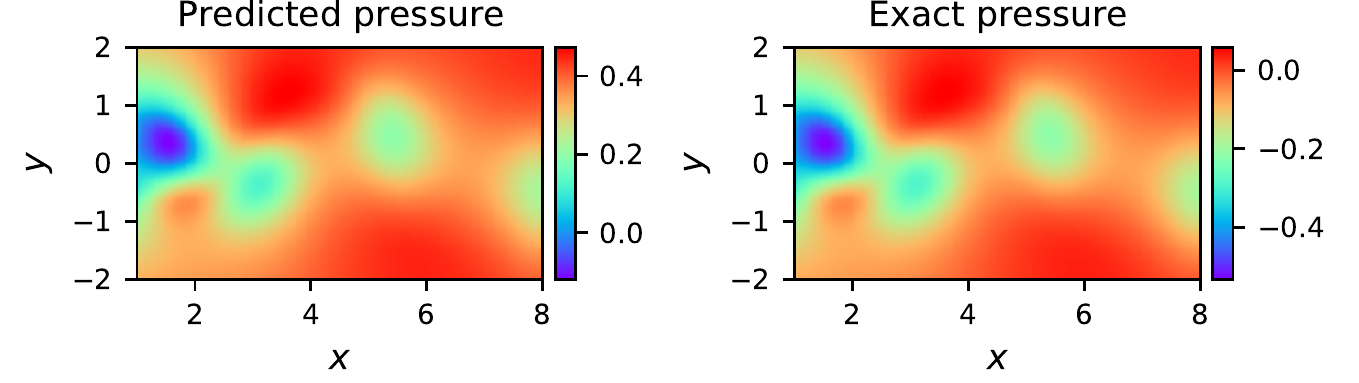}
    \end{subfigure}   
    \caption{\textit{Left:} One timestep of the ground truth Navier-Stokes solution. The black rectangle indicates the domain region used for the task.
    \textit{Right:} Identified pressure field for the Navier-Stokes equations using the sinusoidal network. Notice that the identification is only possible up to a constant.}
    \label{fig:ns_data}
\end{figure}


\section{Conclusion}
\label{sec:conclusion}

In this work, we have present a simplified formulation for sinusoidal networks. Analysis of this architecture from the neural tangent kernel perspective, combined with empirical results, reveals that the kernel for sinusoidal networks corresponds to a low-pass filter with adjustable bandwidth. We leverage this information in order to initialize these networks appropriately, choosing their bandwidth such that it is tuned to the signal being learned. 
Employing this strategy, we demonstrated  improved results in both implicit modelling and physics-informed learning tasks.

\bibliography{refs.bib}
\bibliographystyle{iclr2023_conference}

\clearpage
\appendix
\section{Simple Sinusoidal Network Initialization}
\label{sec:ssn_init}

We present here the proofs for the initialization scheme of the simple sinusoidal network from Section~\ref{sec:ssn}.

\begin{lemma}
\label{lm:init_var_appendix}
    Given any $c$, for $X \sim \mathcal{N}\left(0, \frac{1}{3} c^2 \right)$ and $Y \sim \mathcal{U}\left( -c, c \right)$, we have $\Var[X] = \Var[Y] = \frac{1}{3}c^2$.
\end{lemma}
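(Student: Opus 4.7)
The statement is an equality of two variances, and both can be read off from textbook formulas for the Gaussian and uniform distributions, so the plan is to verify each side separately and observe that they coincide. The approach is purely computational: no probabilistic argument beyond the definitions is needed.

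First, for $X \sim \mathcal{N}(0, \tfrac{1}{3}c^2)$, the variance is given directly by the second parameter of the Gaussian, so $\Var[X] = \tfrac{1}{3}c^2$ by definition. This is immediate and requires no calculation.

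Second, for $Y \sim \mathcal{U}(-c,c)$, I would invoke the standard formula $\Var[\mathcal{U}(a,b)] = (b-a)^2/12$, which follows from computing $\E[Y^2] = \tfrac{1}{2c}\int_{-c}^{c} y^2 \, dy = \tfrac{c^2}{3}$ together with $\E[Y] = 0$ by symmetry. Plugging in $a=-c$, $b=c$ gives $(2c)^2/12 = c^2/3$. If a self-contained derivation is preferred over citing the formula, the one-line integral above suffices.

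Combining the two gives $\Var[X] = \Var[Y] = \tfrac{1}{3}c^2$, as claimed. There is no real obstacle here; the lemma is essentially a bookkeeping statement whose only role is to match the variance of the proposed Kaiming-normal initialization to that of the uniform initialization used in SIREN, so that the downstream argument of Theorem~1.8 of \citet{sitzmann_implicit_2020} (which depends on the initialization only through its variance) transfers verbatim.
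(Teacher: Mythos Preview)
Your proposal is correct and matches the paper's own proof essentially line for line: the paper also reads off $\Var[X]=\tfrac{1}{3}c^2$ from the Gaussian parameter and then applies the textbook formula $\Var[\mathcal{U}(a,b)]=\tfrac{1}{12}(b-a)^2$ with $a=-c$, $b=c$ to get $\tfrac{1}{3}c^2$. Your additional remark about the role of the lemma (transferring the SIREN initialization argument via equal variance) is also exactly how the paper uses it.
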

\begin{proof}
    By definition, $\Var[X] = \sigma^2 = \frac{1}{3}c^2$. For $Y$, we know that the variance of a uniformly distributed random variable with bound $[a, b]$ is given by $\frac{1}{12}(b-a)^2$. Thus, $\Var[Y]~=~\frac{1}{12} (2c)^2~=~\frac{1}{3}c^2.$
\end{proof}

\begin{theorem}
    For a uniform input in $[-1, 1]$, the activations throughout a sinusoidal networks are approximately standard normal distributed before each sine non-linearity and arcsine-distributed after each sine non-linearity, irrespective of the depth of the network, if the weights are distributed normally, with mean $0$ and variance $\frac{2}{n}$ with $n$ is the layer's fan-in.
\end{theorem}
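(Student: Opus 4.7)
The plan is to follow the argument of Theorem 1.8 in Sitzmann et al.\ (2020), which was originally proved for their uniform initialization, and observe that the same argument transfers to our normal initialization because it depends on the weight distribution only through its first two moments. Lemma~\ref{lm:init_var_appendix} already establishes that $\mathcal{N}(0, 2/n)$ and $\mathcal{U}(-\sqrt{6/n}, \sqrt{6/n})$ share mean $0$ and variance $2/n$; since both distributions are symmetric about zero, Sitzmann et al.'s derivation applies \emph{mutatis mutandis}, with our proposed normal init playing the role of their uniform init.

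Concretely, I would proceed by induction on the layer index $l$. For the base case $l=1$, the input $x$ is uniform on $[-1,1]^{n_0}$ with per-coordinate variance $1/3$, and each pre-activation coordinate $(W_1 x + b_1)_i$ is a sum of $n_0$ independent mean-zero terms of total variance $n_0 \cdot (2/n_0) \cdot (1/3) = 2/3$. A CLT-type argument then says this pre-activation is approximately Gaussian; after multiplying by $\omega$ and applying $\sin$, the post-activation is approximately arcsine-distributed, using that $\omega \cdot \mathcal{N}(0, 2/3)$ reduced modulo $2\pi$ is close to uniform. For the inductive step, assume the input to layer $l$ is approximately arcsine on $[-1,1]$, which has variance $1/2$. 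Then each pre-activation coordinate $(W_l x + b_l)_i$ is a sum of $n$ independent mean-zero terms of total variance $n \cdot (2/n) \cdot (1/2) = 1$, and by the CLT is approximately $\mathcal{N}(0,1)$. Applying $\sin$ to a standard normal (viewed modulo $2\pi$) again yields approximately the arcsine distribution, closing the induction.

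The main obstacle, and the reason the paper explicitly avoids calling the main-body version a theorem, is that every step above is only an \emph{approximation}: the CLT gives a Gaussian only in the infinite-width limit, and $\sin(Z)$ for $Z \sim \mathcal{N}(0,1)$ is not exactly arcsine (a quick calculation gives $\Var[\sin Z] = (1 - e^{-2})/2 \approx 0.432$ rather than $1/2$), so small approximation errors can in principle compound with depth. Making the statement fully quantitative --- for instance, bounding the Wasserstein distance to the target distributions layer by layer --- would require a non-asymptotic CLT bound together with a quantitative estimate of how close $\sin(\mathcal{N}(0, \sigma^2)) \bmod 2\pi$ is to the uniform law. For our purposes, however, the qualitative claim suffices: the real content of the proof is the variance-matching observation in Lemma~\ref{lm:init_var_appendix}, which is precisely what is needed for the existing SIREN argument to carry over to the normal initialization of the simple sinusoidal network without modification.
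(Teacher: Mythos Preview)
Your proposal is correct and takes essentially the same approach as the paper: the paper's own proof is a one-liner stating that the argument of Theorem~1.8 in Sitzmann et al.\ carries over verbatim once Lemma~\ref{lm:init_var_appendix} supplies the matching variance, which is exactly your strategy. You have simply unpacked the induction and CLT steps that the paper leaves implicit, and your caveat about the informal, approximation-level nature of the claim mirrors the paper's own footnote in the main text.
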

\begin{proof}
    The proof follows exactly the proof for Theorem 1.8 in \citet{sitzmann_implicit_2020}, only using Lemma~\ref{lm:init_var_appendix} when necessary to show that the initialization proposed here has the same variance necessary for the proof to follow.
\end{proof}

\section{Empirical evaluation of SSN initialization}
\label{sec:empirical_init}

Here we report an empirical analysis the initialization scheme of simple sinusoidal networks, referenced in Section~\ref{sec:ssn}. For this analysis we use a sinusoidal MLP with 6 hidden layers of 2048 units, and  single-dimensional input and output. This MLP is initialized using the simplified scheme described above. For testing, $2^8$ equally spaced inputs from the range $[-1, 1]$ are passed through the network. We then plot the histogram of activations after each linear operation (before the sine non-linearity) and after each sine non-linearity. To match the original plot, we also plot the 1D Fast Fourier Transform of all activations in a layer, and the gradient of this output with respect to each activation. These results are presented in Figure~\ref{fig:simple_acts}. 
The main conclusion from this figure is that the distribution of activations matches the predicted normal (before the non-linearity) and arcsine (after the non-linearity) distributions, and that this behavior is stable across many layers. We also reproduced the same result up to 50 layers.

We then perform an additional experiment in which the exact same setup as above is employed, yet the 1D inputs are shifted by a large value (\textit{i.e.}, $x \rightarrow x + 1000$). We the show the same plot as before in Figure~\ref{fig:simple_shift}. We can see that there is essentially no change from the previous plot, which demonstrates the sinusoidal networks shift-invariance in the input space, one of its important desirable properties, as discussed previously. 

\afterpage{
    \begin{figure}[!ht]
      \centering
      \includegraphics[trim={3cm 4cm 3.5cm 4cm},clip]{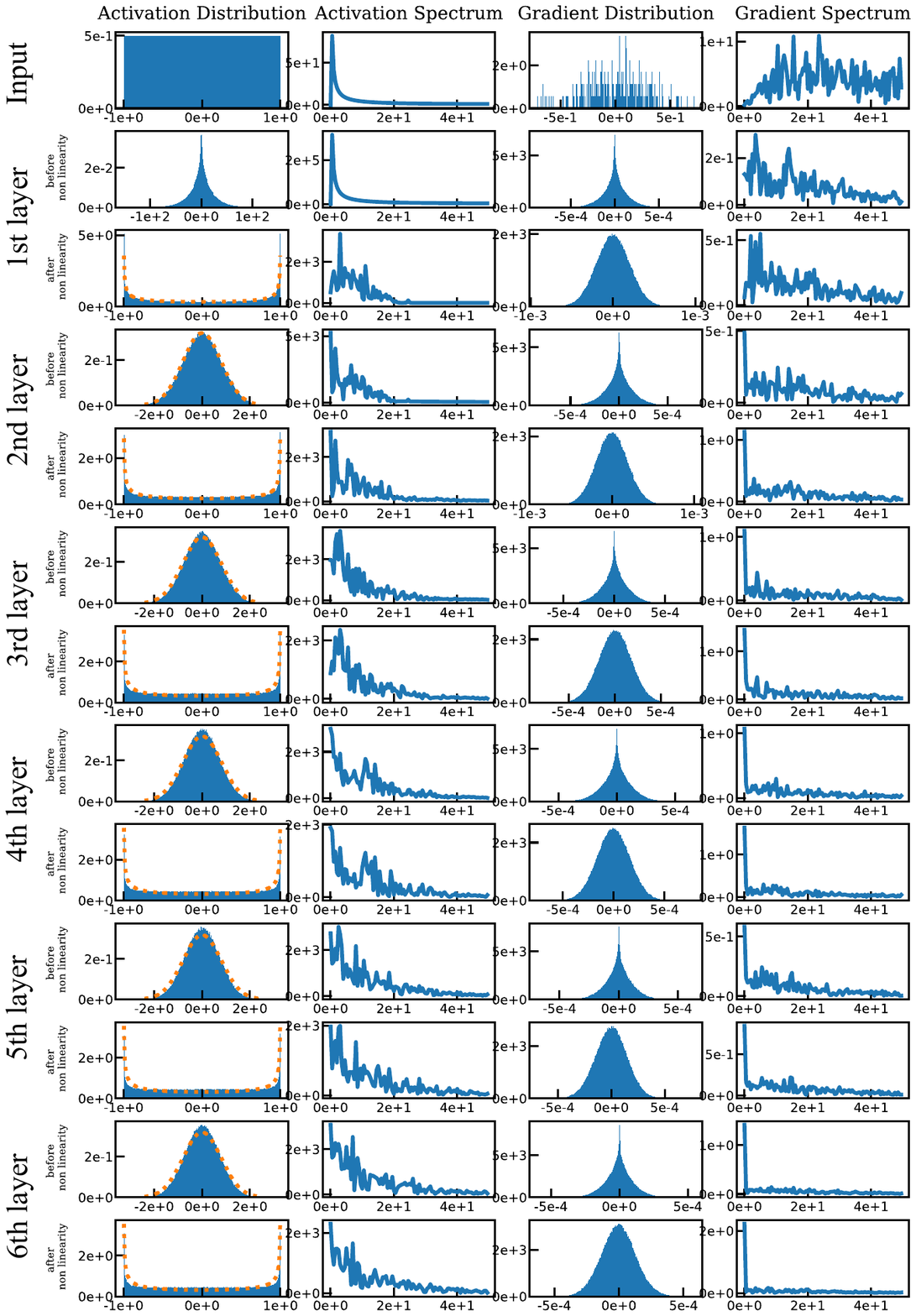}
      \caption{Activation statistics for 6 layers of a simplified sinusoidal network, demonstrating the observed distributions matched the theoretical expectation and preserves the properties from the SIREN initialization.}
      \label{fig:simple_acts}
    \end{figure}

    \clearpage
}

\afterpage{
    \begin{figure}[ht]
      \centering
      \includegraphics[trim={3cm 4cm 3.5cm 4cm},clip]{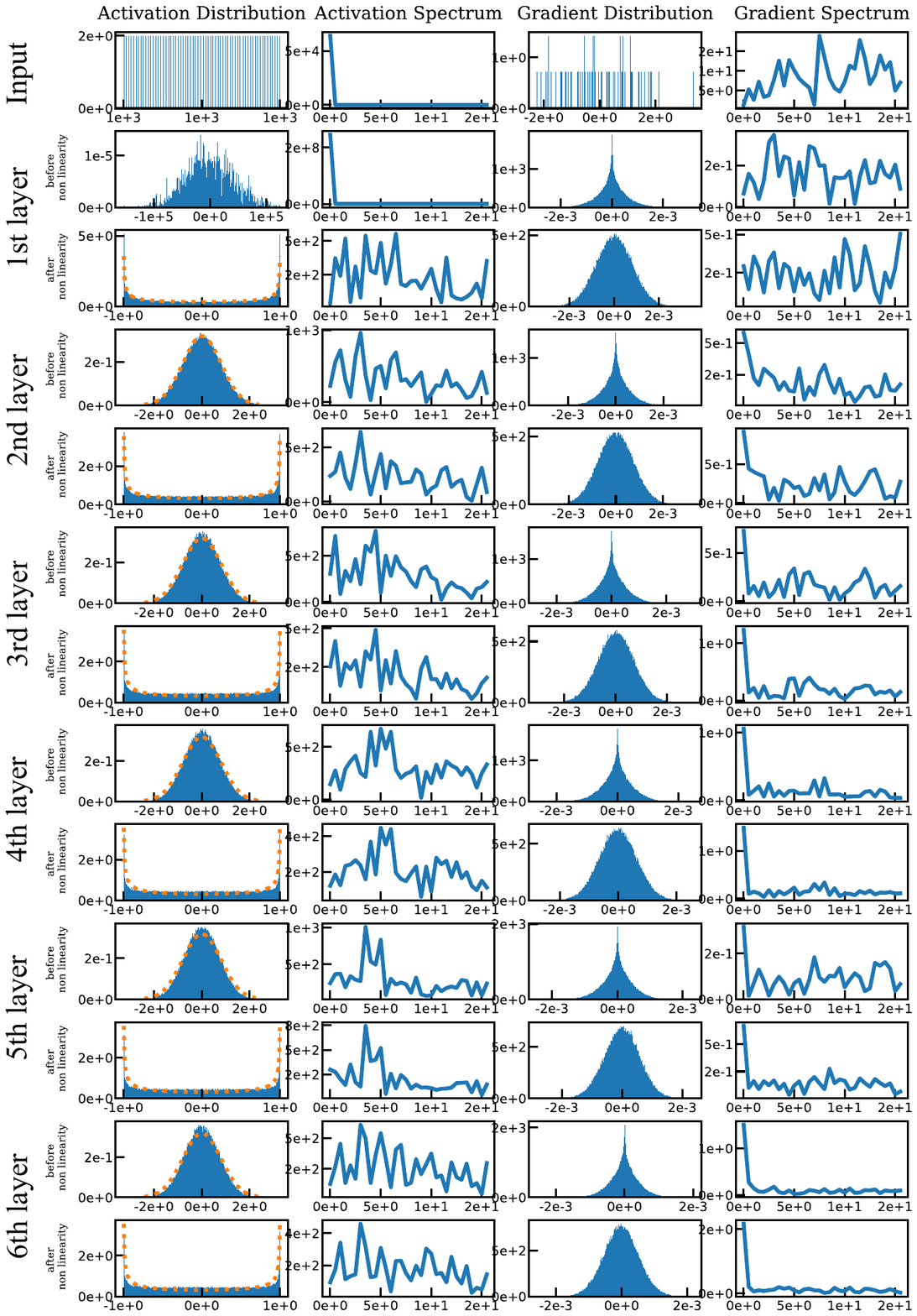}
      \caption{Activations for 6 layers of a simplified sinusoidal network in which the input has been shifted by a large value, \textit{i.e.}, $x \rightarrow x + 1000$. The distribution characteristics are preserved, demonstrating the sinusoidal network's shift-invariance.}
      \label{fig:simple_shift}
    \end{figure}

    \clearpage
}

\section{Experimental Details for Comparison to SIREN}
\label{sec:appendix_comparison}

\begin{table}[htbp]
  \caption{Comparison of the simple sinusoidal network and SIREN on some experiments, with a longer training duration. The specific durations are described below in the details for each experiment. We can see that the simple sinusoidal network has stronger asymptotic performance. Values above the horizontal center line are peak signal to noise ratio (PSNR), values below are mean squared error (MSE). $\;^\dagger$Audio experiments utilized a different learning rate for the first layer, see the full description below for details.}
  \label{tab:simple_siren_long_exp}
  \centerline{
  \begin{tabular}{l | c c}
    \toprule
    Experiment & Simple Sinusoidal Network & SIREN [ours] \\
    \midrule
    
    Image & \textbf{54.70} & 52.43 \\
    Poisson (Gradient) & \textbf{39.51} & 38.70 \\
    Poisson (Laplacian) & \textbf{22.09} & 20.82 \\
    Video (cat) & \textbf{34.64} & 32.26\\
    Video (bikes) & \textbf{37.71} & 34.07 \\        
    \midrule
    Audio (Bach)$^\dagger$ & $\mathbf{5.66 \cdot 10^{-7}}$ & $3.02 \cdot 10^{-6}$ \\
    Audio (counting)$^\dagger$ & $\mathbf{4.02 \cdot 10^{-5}}$ & $6.33 \cdot 10^{-5}$ \\
    
    \bottomrule
  \end{tabular}
  }
\end{table}

Below, we present qualitative results and describe experimental details for each experiment. As these are a reproduction of the experiments in \citet{sitzmann_implicit_2020}, we refer to their details as well for further information.

\subsection{Image}

\begin{figure*}[ht]
  \centering
  \begin{subfigure}[b]{0.25\textwidth}
     \centering
     \includegraphics[width=\textwidth]{imgs/camera_gt.png}
  \end{subfigure}
  \begin{subfigure}[b]{0.25\textwidth}
     \centering
     \includegraphics[width=\textwidth]{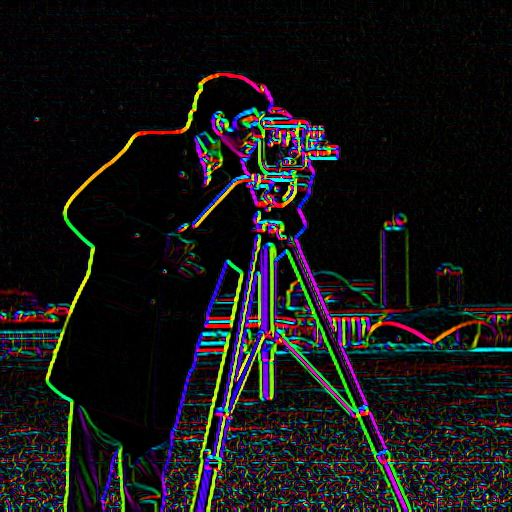}
  \end{subfigure}
  \begin{subfigure}[b]{0.25\textwidth}
     \centering
     \includegraphics[width=\textwidth]{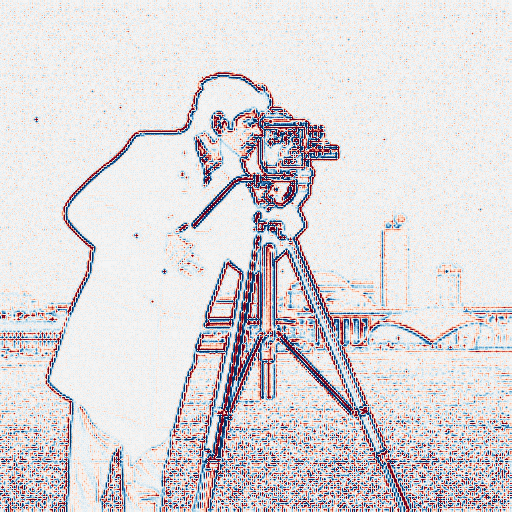}
  \end{subfigure}
  
  \begin{subfigure}[b]{0.25\textwidth}
     \centering
     \includegraphics[width=\textwidth]{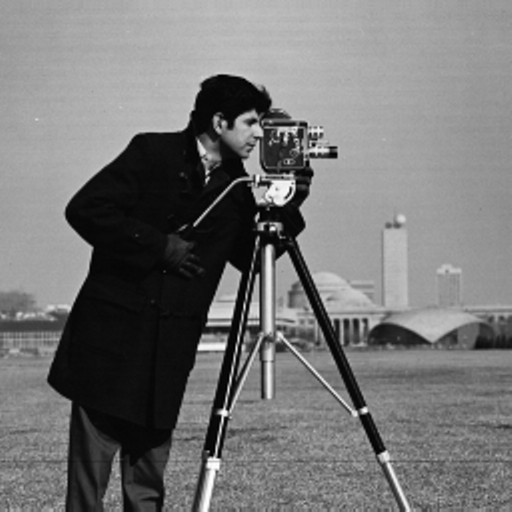}
     \caption*{Image}
  \end{subfigure}
  \begin{subfigure}[b]{0.25\textwidth}
     \centering
     \includegraphics[width=\textwidth]{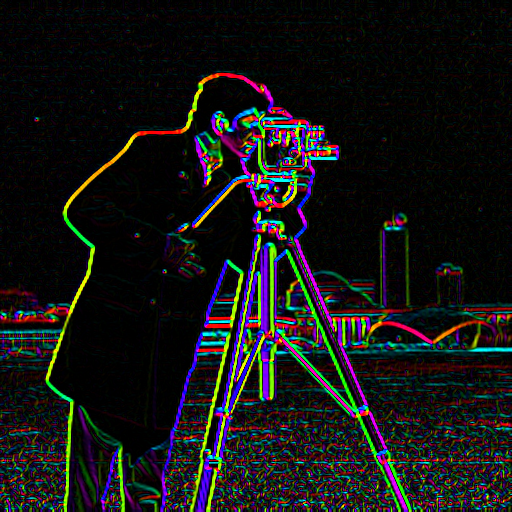}
     \caption*{Gradient}
  \end{subfigure}
  \begin{subfigure}[b]{0.25\textwidth}
     \centering
     \includegraphics[width=\textwidth]{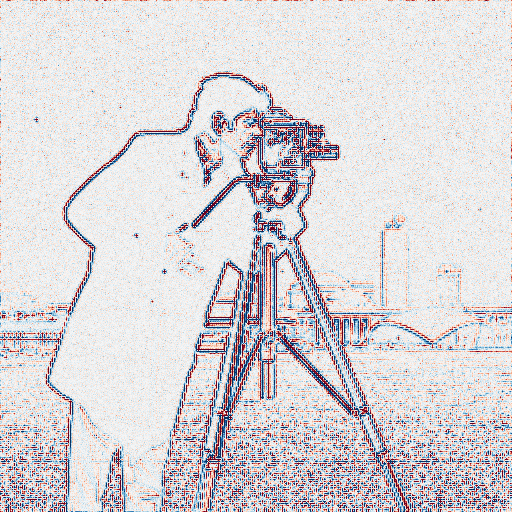}
     \caption*{Laplacian}
  \end{subfigure}
  \caption{Top row: Ground truth image. Bottom: Reconstructed with sinusoidal network.}
  \label{fig:simple_camera}
\end{figure*}

In the image fitting experiment, we treat an image as a function from the spatial domain to color values $(x, y) \rightarrow (r, g, b)$. In the case of a monochromatic image, used here, this function maps instead to one-dimensional intensity values. We try to learn a function $f: \mathbb{R}^2 \rightarrow \mathbb{R}$, parametrized as a sinusoidal network, in order to fit such an image.

Figure~\ref{fig:simple_camera} shows the image used in this experiment, and the reconstruction from the fitted sinusoidal network. The gradient and Laplacian for the learned function are also presented, demonstrating that higher order derivatives are also learned appropriately.

\paragraph{Training parameters.} The input image used is $512 \times 512$, mapped to an input domain $[-1, 1]^2$. The sinusoidal network used is a 5-layer MLP with hidden size $256$, following the proposed initialization scheme above. The parameter $\omega$ is set to $32$. The Adam optimizer is used with a learning rate of $3 \cdot 10^{-3}$, trained for $10,000$ steps in the short duration training results and for $20,000$ steps in the long duration training results.

\subsection{Poisson}

\begin{figure*}[ht]
  \centering
  \begin{subfigure}[b]{0.25\textwidth}
     \centering
     \includegraphics[width=\textwidth]{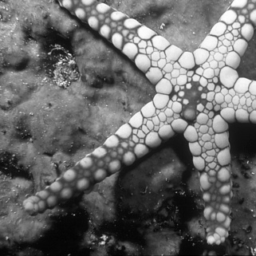}
  \end{subfigure}
  \begin{subfigure}[b]{0.25\textwidth}
     \centering
     \includegraphics[width=\textwidth]{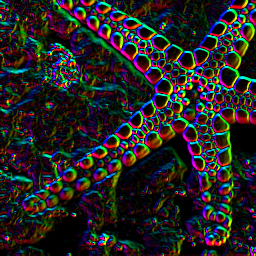}
  \end{subfigure}
  \begin{subfigure}[b]{0.25\textwidth}
     \centering
     \includegraphics[width=\textwidth]{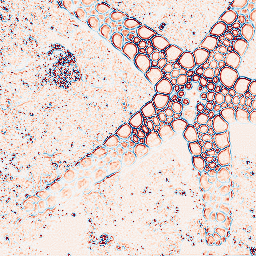}
  \end{subfigure}
  
  \begin{subfigure}[b]{0.25\textwidth}
     \centering
     \includegraphics[width=\textwidth]{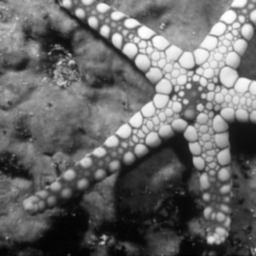}
  \end{subfigure}
  \begin{subfigure}[b]{0.25\textwidth}
     \centering
     \includegraphics[width=\textwidth]{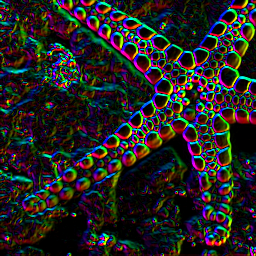}
  \end{subfigure}
  \begin{subfigure}[b]{0.25\textwidth}
     \centering
     \includegraphics[width=\textwidth]{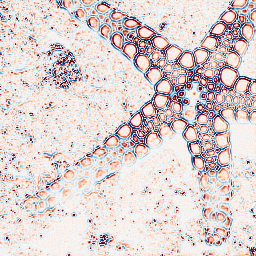}
  \end{subfigure}
  
  \begin{subfigure}[b]{0.25\textwidth}
     \centering
     \includegraphics[width=\textwidth]{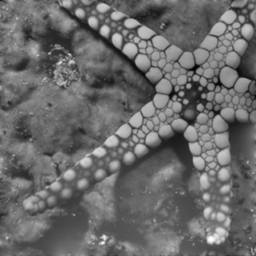}
     \caption*{Image}
  \end{subfigure}
  \begin{subfigure}[b]{0.25\textwidth}
     \centering
     \includegraphics[width=\textwidth]{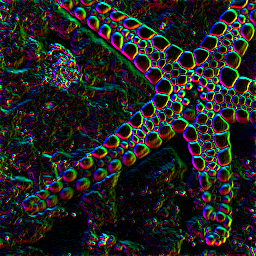}
     \caption*{Gradient}
  \end{subfigure}
  \begin{subfigure}[b]{0.25\textwidth}
     \centering
     \includegraphics[width=\textwidth]{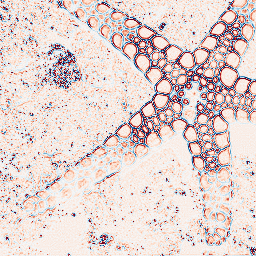}
     \caption*{Laplacian}
  \end{subfigure}
  \caption{Top row: Ground truth image. Mid: Image reconstructed from sinusoidal network supervised on gradient. Bottom: Image reconstructed from sinusoidal network supervised on Laplacian.}
  \label{fig:simple_poisson}
\end{figure*}

These tasks are similar to the image fitting experiment, but instead of supervising directly on the ground truth image, the learned fitted sinusoidal network is supervised on its derivatives, constituting a Poisson problem. We perform the experiment by supervising both on the input image's gradient and Laplacian, and report the reconstruction of the image and it's gradients in each case.

Figure~\ref{fig:simple_poisson} shows the image used in this experiment, and the reconstruction from the fitted sinusoidal networks. Since reconstruction from derivatives can only be correct up to a scaling factor, we scale the reconstructions for visualization. As in the original SIREN results, we can observe that the reconstruction from the gradient is of higher quality than the one from the Laplacian.
    
\paragraph{Training parameters.} The input image used is of size $256 \times 256$, mapped from an input domain $[-1, 1]^2$. The sinusoidal network used is a 5-layer MLP with hidden size $256$, following the proposed initialization scheme above. For both experiments, the parameter $\omega$ is set to $32$ and the Adam optimizer is used. For the gradient experiments, in short and long training results, a learning rate of $1 \cdot 10^{-4}$ is used, trained for $10,000$ and $20,000$ steps respectively. For the Laplace experiments, in short and long training results, a learning rate of $1 \cdot 10^{-3}$ is used, trained for $10,000$ and $20,000$ steps respectively.

\subsection{Video}

\begin{figure}[h!]
  \centering
  \includegraphics[width=0.75\textwidth]{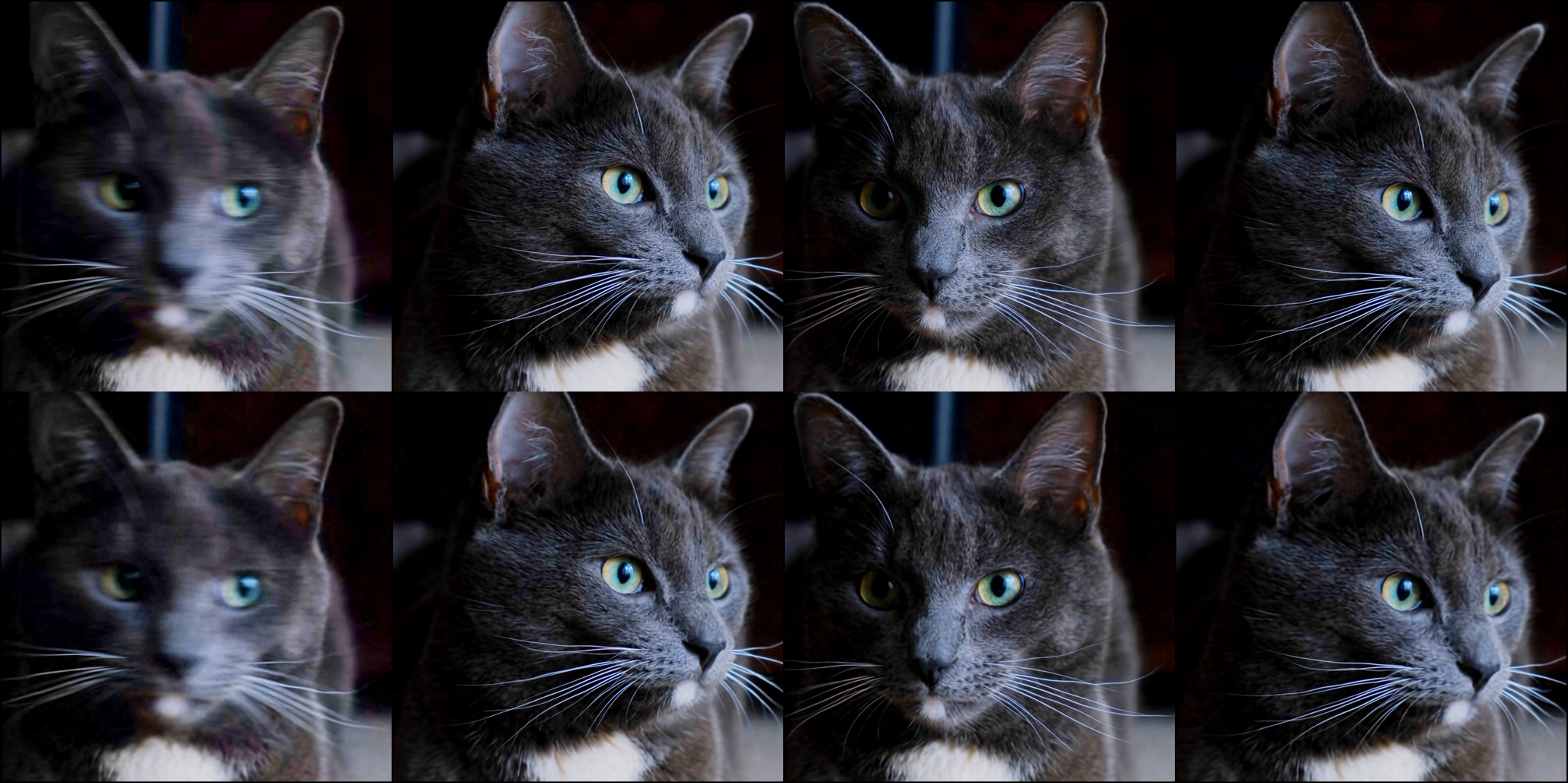}
  \caption{Top row: Frames from ground truth ``cat'' video. Bottom: Video reconstructed from sinusoidal network.}
  \label{fig:video_cat}
\end{figure}

\begin{figure}[h!]
  \centering
  \includegraphics[width=\textwidth]{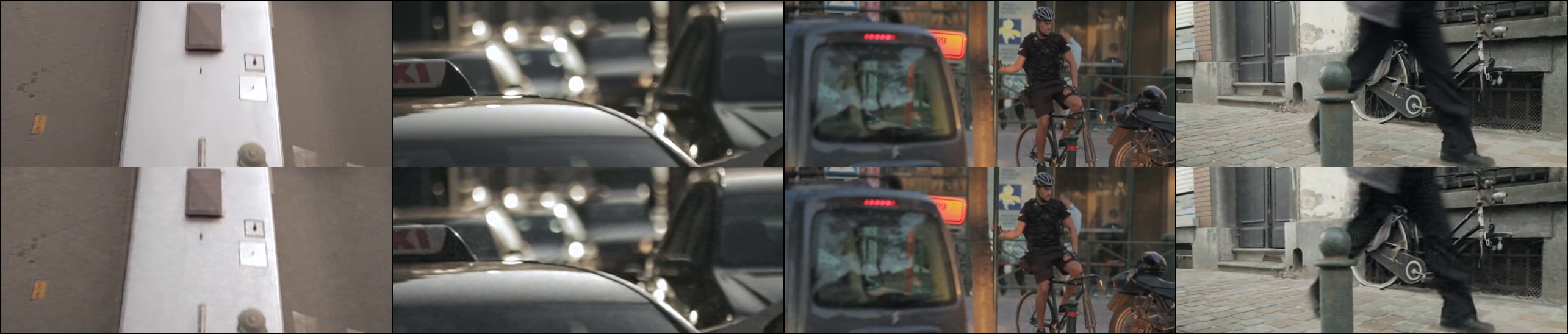}
  \caption{Top row: Frames from ground truth ``bikes'' video. Bottom: Video reconstructed from sinusoidal network.}
  \label{fig:video_bikes}
\end{figure}

These tasks are similar to the image fitting experiment, but we instead fit a video, which also has a temporal input dimension, $(t, x, y) \rightarrow (r, g, b)$. We learn a function $f: \mathbb{R}^3 \rightarrow \mathbb{R}^3$, parametrized as a sinusoidal network, in order to fit such a video.

Figures~\ref{fig:video_cat}~and~\ref{fig:video_bikes} show sampled frames from the videos used in this experiment, and their respective reconstructions from the fitted sinusoidal networks. 

\paragraph{Training parameters.} The cat video contains $300$ frames of size $512 \times 512$. The bikes video contains $250$ frames of size $272 \times 640$. These signals are fitted from the input domain $[-1, 1]^3$. The sinusoidal network used is a 5-layer MLP with hidden size $1024$, following the proposed initialization scheme above. The parameter $\omega$ is set to $8$. The Adam optimizer is used, with a learning rate of $3 \cdot 10^{-4}$ trained for $100,000$ steps in the short duration training results and for $200,000$ steps in the long duration training results.

\subsection{Audio}

\begin{figure*}[ht]
  \centering
  \begin{subfigure}[b]{0.475\textwidth}
     \centering
     \includegraphics[trim={0cm 6cm 0cm 0cm},clip,width=\textwidth]{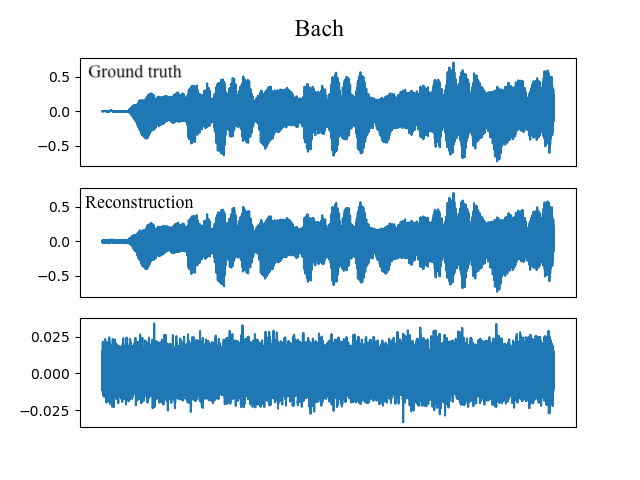}
  \end{subfigure}
  \begin{subfigure}[b]{0.475\textwidth}
     \centering
     \includegraphics[trim={0cm 6cm 0cm 0cm},clip,width=\textwidth]{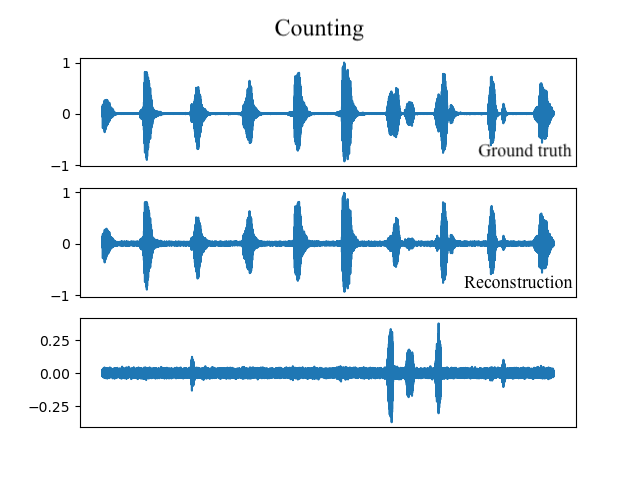}
  \end{subfigure}

  \caption{Ground truth and reconstructed waveforms for ``Bach'' and ``counting'' audios.}
  \label{fig:simple_audio}
\end{figure*}

In the audio experiments, we fit an audio signal in the temporal domain as a waveform $t \rightarrow w$. We to learn a function $f: \mathbb{R} \rightarrow \mathbb{R}$, parametrized as a sinusoidal network, in order to fit the audio.

Figure~\ref{fig:simple_audio} shows the waveforms for the input audios and the reconstructed audios from the fitted sinusoidal network.

In this experiment, we utilized a lower learning rate for the first layer compared to the rest of the network. 
This was used to compensate the very large $\omega$ used (in the $15,000-30,000$ range, compared to the $10-30$ range for all other experiments). 
One might argue that this is re-introducing complexity, counteracting the purpose the proposed simplification. 
However, we would claim (1) that this is only limited to cases with extremely high $\omega$, which was not present in any case except for fitting audio waves, and (2) that adjusting the learning rate for an individual layer is still an approach that is simpler and more in line with standard machine learning practice compared to multiplying all layers by a scaling factor and then adjusting their initialization variance by the same amount.

\paragraph{Training parameters.} Both audios use a sampling rate of $44100$Hz. The Bach audio is $7$s long and the counting audio is approximately $12$s long. These signals are fitted from the input domain $[-1, 1]$. The sinusoidal network used is a 5-layer MLP with hidden size $256$, following the proposed initialization scheme above. For short and long training results, training is performed for $5,000$ and $50,000$ steps respectively.
For the Bach experiment, the parameter $\omega$ is set to $15,000$. The Adam optimizer is used, with a general learning rate of $3 \cdot 10^{-3}$. A separate learning rate of $1 \cdot 10^{-6}$ is used for the first layer to stabilize training due to the large $\omega$ value. 
For the counting experiment, the parameter $\omega$ is set to $32,000$. The Adam optimizer is used, with a general learning rate of $1 \cdot 10^{-3}$ and a first layer learning rate of $1 \cdot 10^{-6}$.

\subsection{Helmholtz equation}

\begin{figure*}[ht]
  \centering
  \begin{subfigure}[b]{0.25\textwidth}
     \centering
     \includegraphics[trim={1.25cm 1.25cm 1.25cm 1.25cm},clip,width=\textwidth]{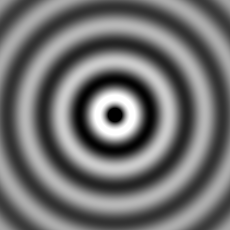}
  \end{subfigure}
  \begin{subfigure}[b]{0.25\textwidth}
     \centering
     \includegraphics[trim={1.25cm 1.25cm 1.25cm 1.25cm},clip,width=\textwidth]{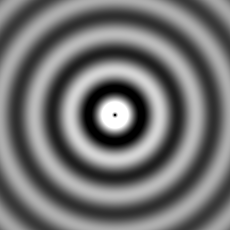}
  \end{subfigure}
  
  \begin{subfigure}[b]{0.25\textwidth}
     \centering
     \includegraphics[trim={1.25cm 1.25cm 1.25cm 1.25cm},clip,width=\textwidth]{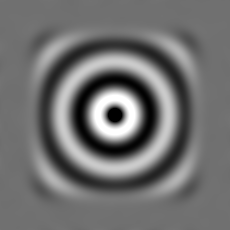}
     \caption*{Real}
  \end{subfigure}
  \begin{subfigure}[b]{0.25\textwidth}
     \centering
     \includegraphics[trim={1.25cm 1.25cm 1.25cm 1.25cm},clip,width=\textwidth]{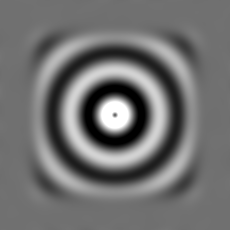}
     \caption*{Imaginary}
  \end{subfigure}

  \caption{Top row: Ground truth real and imaginary fields. Bottom: Reconstructed with sinusoidal network.}
  \label{fig:helmholtz_exp}
\end{figure*}

In this experiment we solve for the unknown wavefield $\Phi: \mathbb{R}^2 \rightarrow \mathbb{R}^2$ in the Helmholtz equation
\begin{align}
    (\Delta + k^2) \Phi(x) = -f(x),
\end{align}
with known wavenumber $k$ and source function $f$ (a Gaussian with $\mu=0$ and $\sigma^2 = 10^{-4}$). We solve this differential equation using a sinusoidal network supervised with the physics-informed loss $\int_\Omega \|  (\Delta + k^2) \Phi(x) + f(x) \|_1 dx$, evaluated at random points sampled uniformly in the domain $\Omega = [-1, 1]^2$. 

Figure~\ref{fig:helmholtz_exp} shows the real and imaginary components of the ground truth solution to the differential equation and the solution recovered by the fitted sinusoidal network. 

\paragraph{Training parameters.} The sinusoidal network used is a 5-layer MLP with hidden size $256$, following the proposed initialization scheme above. The parameter $\omega$ is set to $16$. The Adam optimizer is used, with a learning rate of $3 \cdot 10^{-4}$ trained for $50,000$ steps.

\subsection{Signed distance function (SDF)}
In these tasks we learn a 3D signed distance function. We learn a function $f: \mathbb{R}^3 \rightarrow \mathbb{R}$, parametrized as a sinusoidal network, to model a signed distance function representing a 3D scene. This function is supervised indirectly from point cloud data of the scene. Figures~\ref{fig:sdf_statue}~and~\ref{fig:sdf_room} show 3D renderings of the volumes inferred from the learned SDFs. 

\paragraph{Training parameters.} 
The statue point cloud contains $4,999,996$ points.
The room point cloud contains $10,250,688$ points.
These signals are fitted from the input domain $[-1, 1]^3$. 
The sinusoidal network used is a 5-layer MLP with hidden size $256$ for the statue and $1024$ for the room. 
The parameter $\omega$ is set to $4$. 
The Adam optimizer is used, with a learning rate of $8 \cdot 10^{-4}$ and a batch size of $1400$. All models are trained for $190,000$ steps for the statue experiment and for $410,000$ steps for the room experiment.

\begin{figure*}[ht]
  \centering
    \begin{subfigure}[b]{0.49\textwidth}
     \centering
     \includegraphics[width=\textwidth]{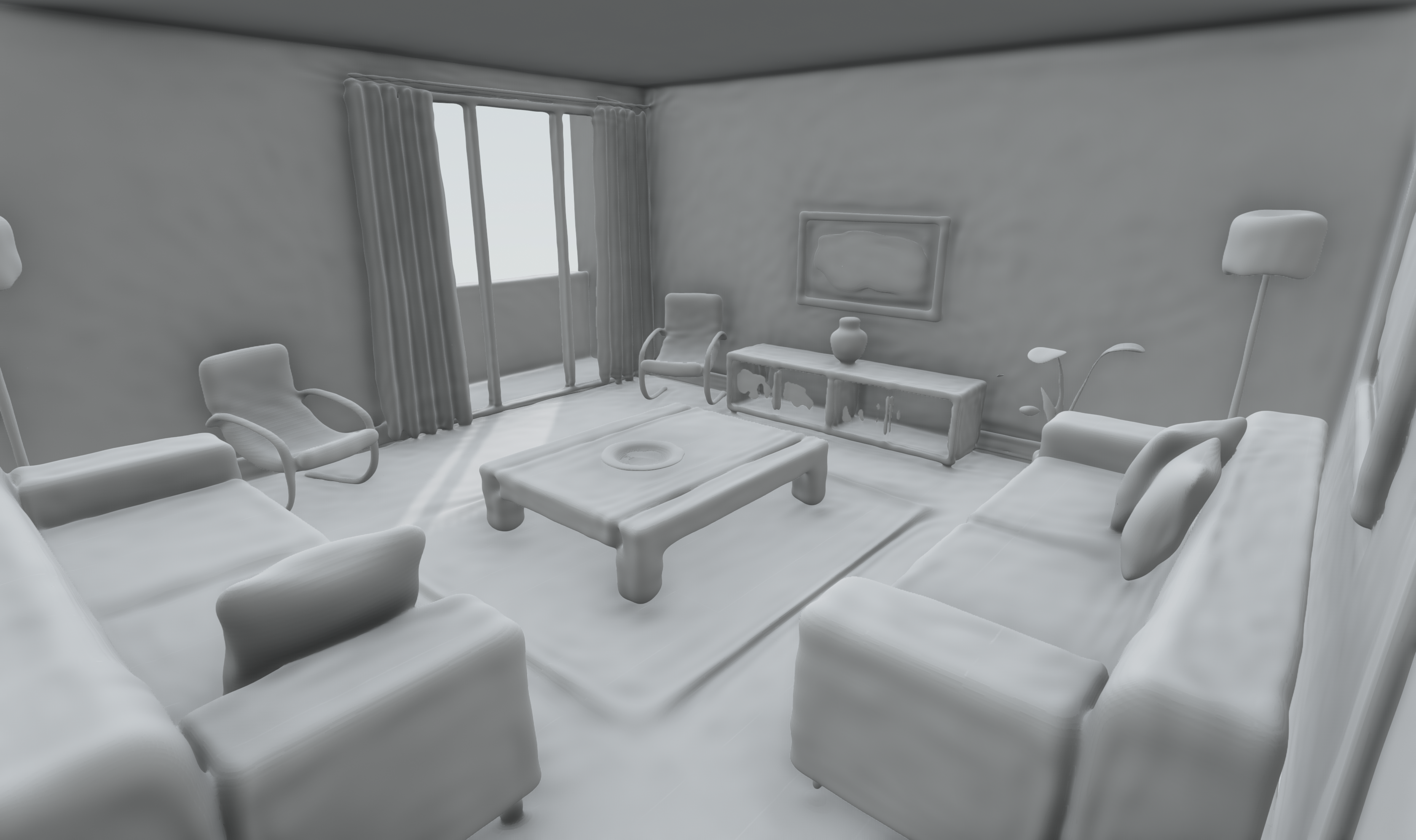}
  \end{subfigure}
  \begin{subfigure}[b]{0.49\textwidth}
     \centering
     \includegraphics[width=\textwidth]{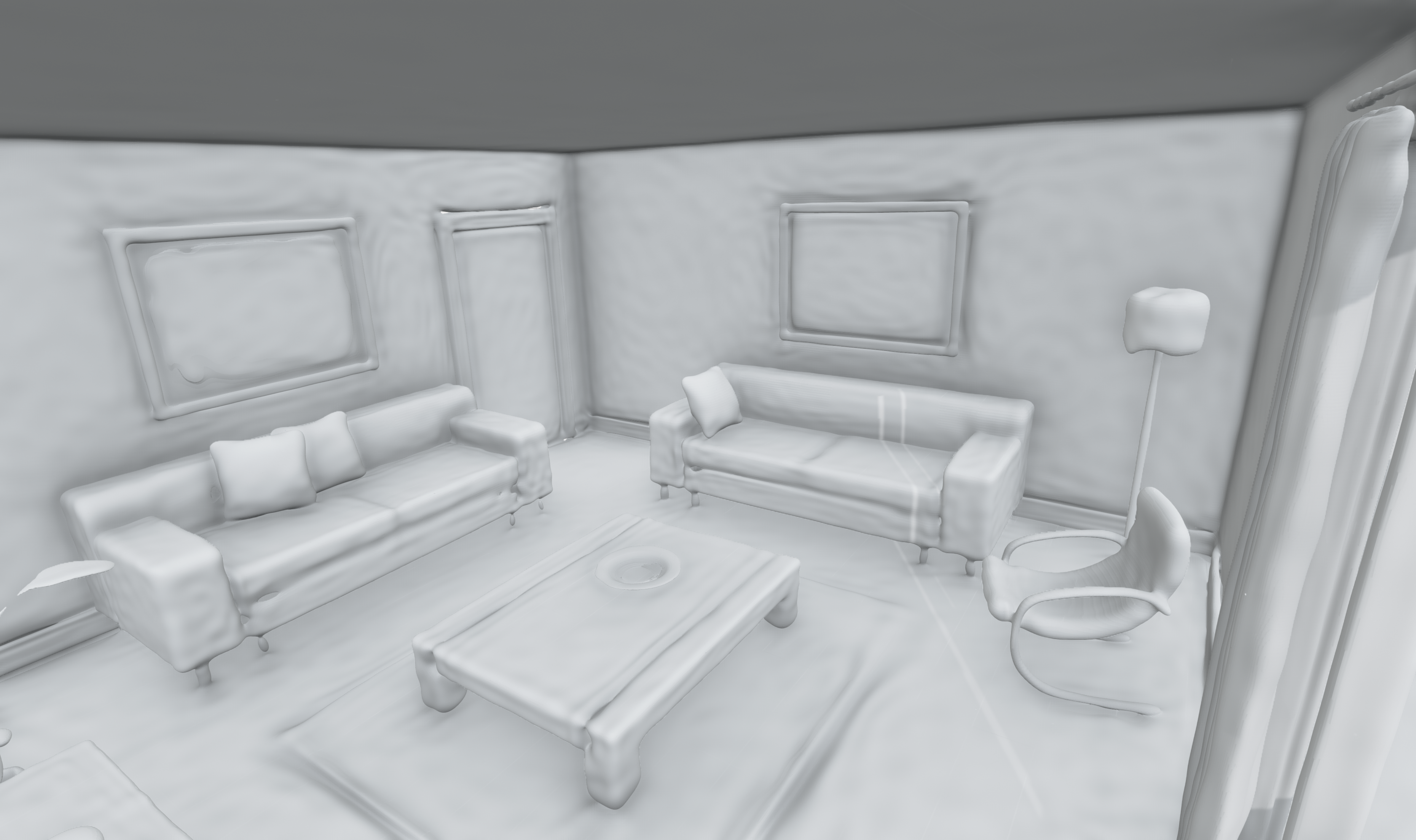}
  \end{subfigure}
    
  \caption{Rendering of the ``room'' 3D scene SDF learned by the sinusoidal network from a point cloud.}
  \label{fig:sdf_room}
\end{figure*}

\begin{figure*}[ht]
  \centering
  \begin{subfigure}[b]{0.49\textwidth}
     \centering
     \includegraphics[width=\textwidth]{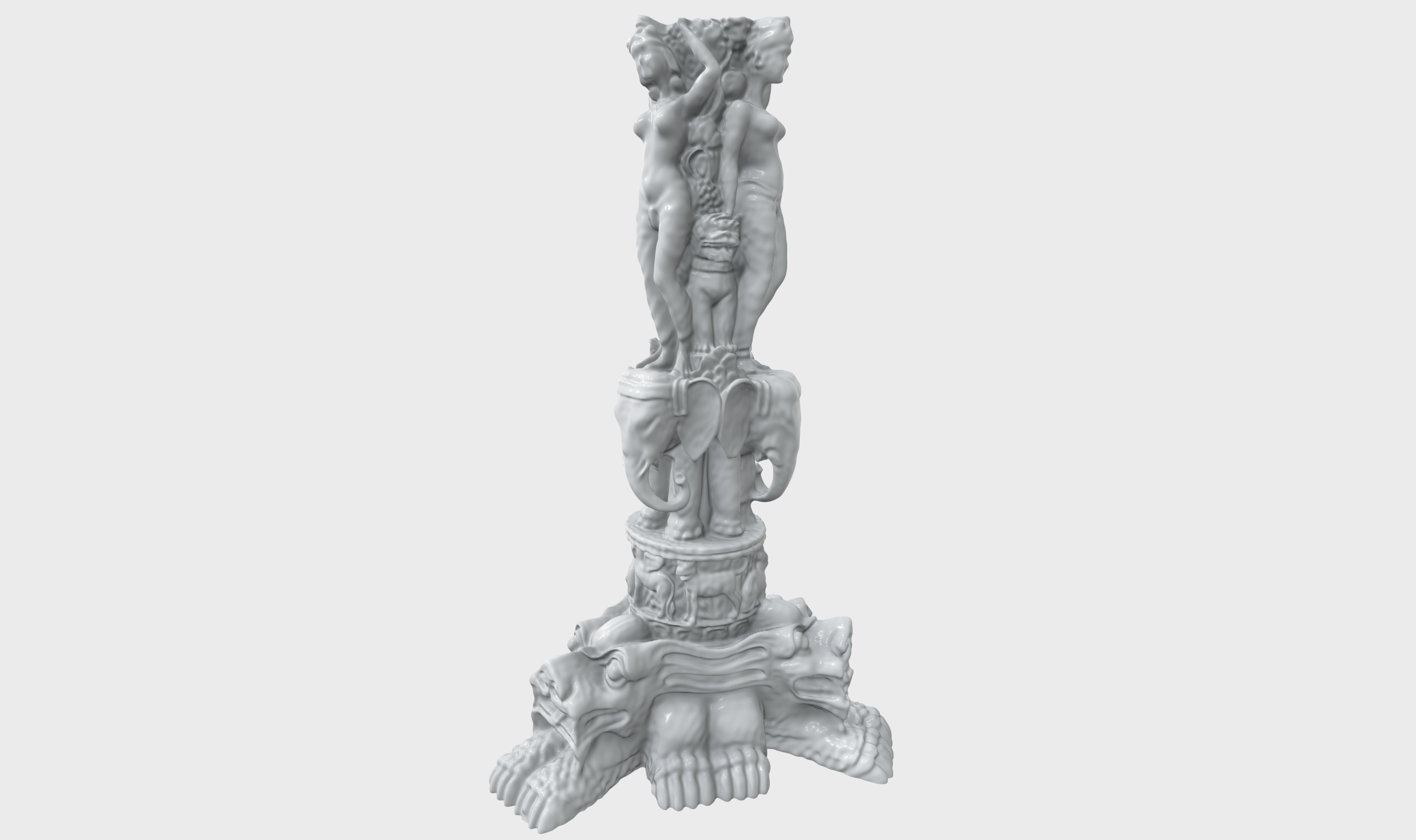}
  \end{subfigure}
  \begin{subfigure}[b]{0.49\textwidth}
     \centering
     \includegraphics[width=\textwidth]{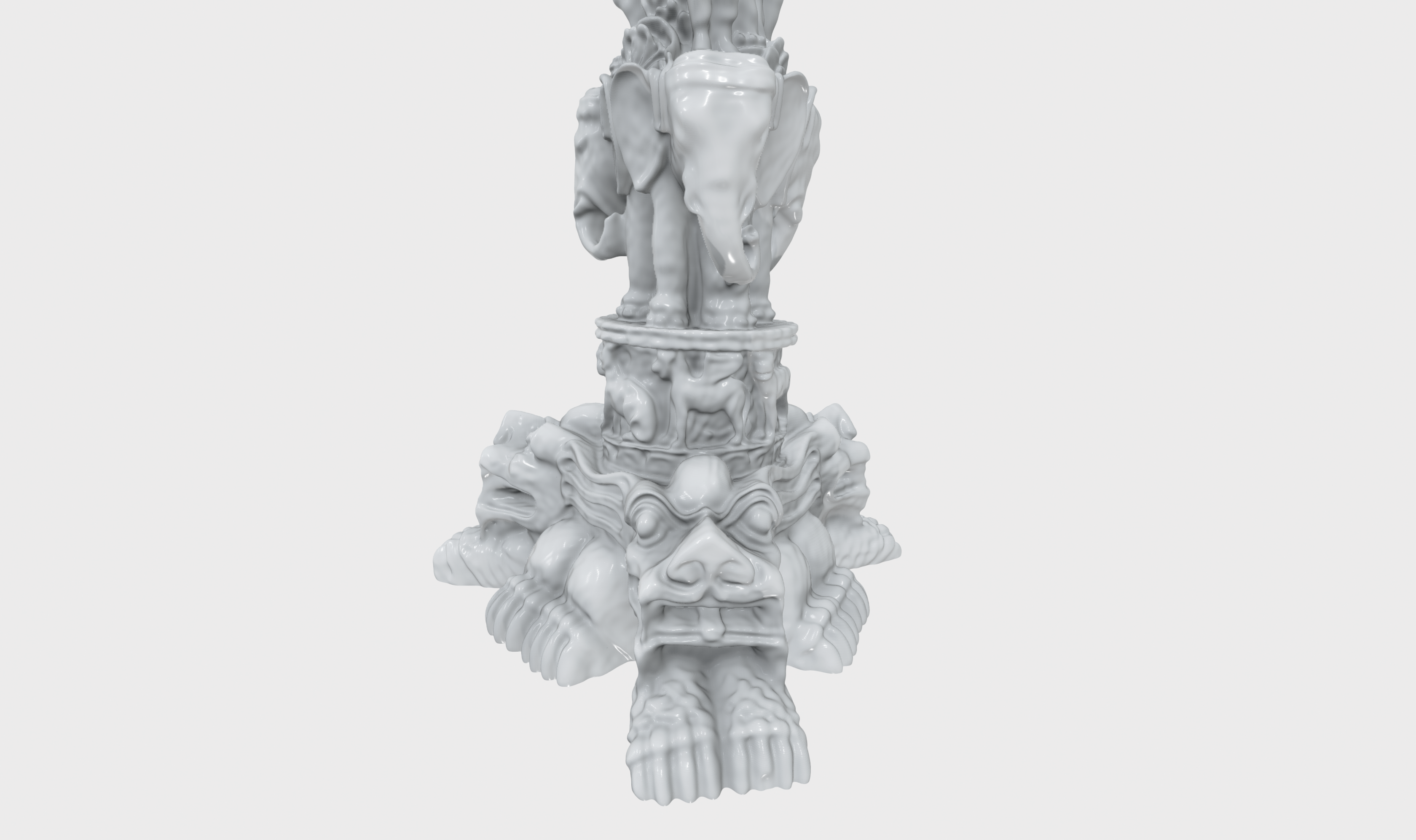}
  \end{subfigure}
    
  \caption{Rendering of the ``statue'' 3D scene SDF learned by the sinusoidal network from a point cloud.}
  \label{fig:sdf_statue}
\end{figure*}

\section{Neural Tangent Kernel Analysis and Proofs}
\label{sec:ntk_proofs}

\subsection{Preliminaries}

In order to perform the subsequent NTK analysis, we first need to formalize definitions for simple sinusoidal networks and SIRENs. The definitions used here adhere to the common NTK analysis practices, and thus differ slightly from practical implementation.

\begin{definition}
\label{def:ntk_nn}
    For the purposes of the following proofs, a (sinusoidal) fully-connected neural network with $L$ hidden layers that takes as input $x \in \bbR^{n_0}$, is defined as the function $f^{(L)}: \bbR^{n_0} \rightarrow \bbR^{n_{L+1}}$, recursively given by
    \begin{align*}
        f^{(0)}(x) &= \omega \, \left( W^{(0)} x + b^{(0)} \right), \\
        f^{(L)}(x) &= W^{(L)} \frac{1}{\sqrt{n_L}} \sin{\left( f^{(L-1)} \right)} + b^{(L)},
    \end{align*}
    where $\omega \in \bbR$. The parameters $\left\{ W^{(j)} \right\}_{j=0}^{L}$ have shape $n_{j+1} \times n_{j}$ and all have each element sampled independently either from $\ccN(0, 1)$ (for simple sinusoidal networks) or from $\ccU(-c, c)$ with some bound $c \in \bbR$ (for SIRENs). The $\left\{ b^{(j)} \right\}_{j=0}^{L}$ are $n_{j+1}$-dimensional vectors sampled independently from $\ccN(0, I_{n_{j+1}})$.
\end{definition}
\bigskip

With this definition, we now state the general formulation of the NTK, which applies in general to fully-connected networks with Lipschitz non-linearities, and consequently in particular to the sinusoidal networks studied here as well. Let us first define the NNGP, which has covariance recursively defined by
\begin{align*}
    \Sigma^{(L+1)}(x,\tx) &= \Exp{\sigma(f(x)) \sigma(f(\tx))}{f \sim \ccN(0, \Sigma^{(L)})} + \beta^2,
\end{align*}
with base case $\Sigma^{(1)}(x,\tx) = \frac{1}{n_0} x^T\tx + \beta^2$, and where $\beta$ gives the variance of the bias terms in the neural network layers \citep{neal_priors_1994,lee_deep_2018}. Now the NTK is given by the following theorem.

\begin{theorem}
\label{th:full_ntk}
        For a neural network with $L$ hidden layers $f^{(L)}: \bbR^{n_0} \rightarrow \bbR^{n_{L+1}}$ following Definition~\ref{def:ntk_nn}, as the size of the hidden layers $n_1, \dots, n_L \rightarrow \infty$ sequentially, the neural tangent kernel (NTK) of $f^{(L)}$ converges in probability to the deterministic kernel $\Theta^{(L)}$ defined recursively as 
    \begin{align*}
        \Theta^{(0)}(x, \tx) &= \Sigma^{(0)}(x, \tx) = \omega^2 \left( x^T \tx + 1 \right), \\
        \Theta^{(L)}(x, \tx) &= \Theta^{(L-1)}(x, \tx) \dot{\Sigma}^{(L)}(x, \tx) + \Sigma^{(L)}(x, \tx),
    \end{align*}
    where $\left\{ \Sigma^{(l)} \right\}_{l=0}^L$ are the neural network Gaussian processes (NNGPs) corresponding to each $f^{(l)}$ and 
    \begin{align*}
        \dot{\Sigma}^{(l)}(x, \tx) = \Exp{\cos{\left(u\right)}\cos{\left(v\right)}}{(u,v) \sim \Sigma^{(l-1)}(x,
        \tx)}.
    \end{align*}
\end{theorem}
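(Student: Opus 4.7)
The statement is a direct specialization of the general NTK recursion of \citet{jacot_neural_2018} to the sine non-linearity, so my approach would be to follow that classical argument and track how the sine activation (together with the first-layer $\omega$ scaling) propagates through the recursion. The plan splits into: (i) establishing the NNGP limit $\Sigma^{(l)}$ layer by layer, (ii) reading off the derivative kernel $\dot\Sigma^{(l)}$ for the sine activation, and (iii) combining these into the NTK recursion via the layerwise chain-rule decomposition of $\langle \nabla_\theta f^{(L)}(x), \nabla_\theta f^{(L)}(\tilde x)\rangle$.

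\textbf{Base case and NNGP.} First I would compute $\Sigma^{(0)}$ directly: since $f^{(0)}(x) = \omega(W^{(0)}x + b^{(0)})$ with independent standard normal $W^{(0)}$ entries and independent standard normal biases, each output coordinate is a centred Gaussian with covariance $\omega^2(x^T\tilde x + 1)$, giving the claimed base case. For $l\ge 1$, assuming the inductive hypothesis that the pre-activations of layer $l-1$ converge (coordinate-wise, as $n_{l-1}\to\infty$) to a Gaussian process with covariance $\Sigma^{(l-1)}$, I would apply the multivariate CLT to the sum $f^{(l)}_i(x) = \sum_{j} W^{(l)}_{ij} n_l^{-1/2}\sin(f^{(l-1)}_j(x)) + b^{(l)}_i$ with $n_l \to \infty$, obtaining the recursion
\begin{equation*}
  \Sigma^{(l)}(x,\tilde x) = \mathbb{E}_{f\sim\ccN(0,\Sigma^{(l-1)})}\bigl[\sin(f(x))\sin(f(\tilde x))\bigr] + 1.
\end{equation*}
The Lipschitz property $|\sin' | \le 1$ ensures the integrands are well-behaved and the usual exchange of limits is valid.

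\textbf{NTK recursion.} For the NTK itself, I would decompose $\nabla_\theta f^{(L)}$ into contributions from parameters in each layer. Writing $\delta^{(l)}$ for the backpropagated Jacobian from layer $l$ to the output, one gets the standard block decomposition
\begin{equation*}
    \Theta^{(L)}(x,\tilde x) = \sum_{l=0}^{L} \left\langle \partial_{\theta^{(l)}} f^{(L)}(x), \partial_{\theta^{(l)}} f^{(L)}(\tilde x)\right\rangle.
\end{equation*}
Each block factors (in the infinite-width limit) as a product of an NNGP-type kernel coming from the layer-$l$ pre-activations and a $\dot\Sigma$-type kernel per subsequent layer coming from the activation derivative. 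Since $\sin'(u) = \cos(u)$, each such factor becomes $\dot\Sigma^{(k)}(x,\tilde x) = \mathbb{E}_{(u,v)\sim\Sigma^{(k-1)}}[\cos u \cos v]$. Collecting the telescoping product yields exactly the stated recursion $\Theta^{(L)} = \Theta^{(L-1)}\dot\Sigma^{(L)} + \Sigma^{(L)}$, with the additive $\Sigma^{(L)}$ term contributed by the top-layer weights and bias.

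\textbf{Limit argument and main obstacle.} The remaining technical work is to upgrade these formal expectations to convergence in probability as $n_1,\dots,n_L\to\infty$ sequentially. This is where I would invoke the standard machinery: uniform boundedness and Lipschitzness of $\sin$ and $\cos$ yield concentration of the empirical inner products $n_l^{-1}\sum_j \sin(f^{(l-1)}_j(x))\sin(f^{(l-1)}_j(\tilde x))$ (and likewise for $\cos$) around their Gaussian expectations, and standard continuous-mapping and Slutsky-type arguments combine this with the inductive Gaussianity of the pre-activations. The genuinely non-routine point, and the main obstacle, is bookkeeping the first-layer $\omega$ factor correctly: it multiplies the pre-activations of layer $0$ but not the weight/bias parametrization, so it appears inside $\Sigma^{(0)}$ (hence propagates into every $\dot\Sigma^{(l)}$ through the covariance of its Gaussian argument) while leaving the subsequent layer recursions structurally unchanged. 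Once this scaling is tracked carefully, one recovers the kernel exactly as stated.
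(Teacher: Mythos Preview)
Your proposal is correct and takes essentially the same approach as the paper. The paper's own proof simply defers the recursive NTK argument to \citet{jacot_neural_2018} and singles out the base case $\Sigma^{(0)}=\omega^2(x^T\tilde x+1)$ arising from the $\omega$-scaled first layer as the only point requiring attention---exactly the ``main obstacle'' you identify; your write-up is in fact more explicit than the paper about the intermediate steps, but the strategy is identical.
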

\begin{proof}
    This is a standard general NTK theorem, showing that the limiting kernel recursively in terms of the network's NNGPs and the previous layer's NTK. For brevity we omit the proof here and refer the reader to, for example, \citet{jacot2020neural}.
    
    The only difference is for the base case $\Sigma^{(0)}$, due to the fact that we have an additional $\omega$ parameter in the first layer. It is simple to see that the neural network with $0$ hidden layers, \textit{i.e.} the linear model $\omega \left( W^{(0)} x + b^{(0)} \right)$ will lead to the same Gaussian process covariance kernel as the original proof, $x^T \tx + 1$, only adjusted by the additional variance factor $\omega^2$.
\end{proof}

Theorem~\ref{th:full_ntk} demonstrates that the NTK can be constructed as a recursive function of the NTK of previous layers and the network's NNGPs. In the following sections we will derive the NNGPs for the SIREN and the simple sinusoidal network directly. We will then use these NNGPs with Theorem~\ref{th:full_ntk} to derive their NTKs as well.

To finalize this preliminary section, we also provide two propositions that will be useful in following proofs in this section.

\begin{proposition}
\label{pr:exp_lemma}
    For any $\omega \in \bbR$, $x \in \bbR^d$,
    \begin{align*}
        \Exp{e^{i \omega \left(w^T x \right)}}{w \sim \ccN(0, I_d)} = e^{-\frac{\omega^2}{2} \| x \|^2_2}
    \end{align*}
\end{proposition}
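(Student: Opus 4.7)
The plan is to recognize that the left-hand side is essentially the characteristic function of a multivariate standard Gaussian, evaluated at the vector $\omega x$. Two routes are available: one can either exploit independence of coordinates to reduce to a one-dimensional problem, or invoke a rotational/linear-combination argument to reduce $w^T x$ to a single scalar Gaussian. I would use the independence route since it is the most self-contained.

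First I would write $w^T x = \sum_{j=1}^d w_j x_j$ where the $w_j$ are i.i.d.\ standard normal. By independence of the coordinates, the expectation factors:
\begin{align*}
    \Exp{e^{i \omega w^T x}}{w \sim \ccN(0, I_d)} = \prod_{j=1}^d \Exp{e^{i \omega w_j x_j}}{w_j \sim \ccN(0, 1)}.
\end{align*}
Each factor is the characteristic function of a standard normal evaluated at $\omega x_j$. The key scalar identity to verify (or cite) is that for $Z \sim \ccN(0,1)$ and any $t \in \bbR$, $\Exp{e^{itZ}}{} = e^{-t^2/2}$. This is a textbook computation: complete the square inside the Gaussian integral $\frac{1}{\sqrt{2\pi}} \int e^{itz - z^2/2}\, dz$, shift contour (or appeal directly to the analytic continuation from the real MGF $e^{t^2/2}$), and obtain $e^{-t^2/2}$. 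Applying this with $t = \omega x_j$ yields $e^{-\omega^2 x_j^2 / 2}$ for each factor.

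Finally, I would collect the product and use $\sum_j x_j^2 = \|x\|_2^2$ to obtain
\begin{align*}
    \prod_{j=1}^d e^{-\omega^2 x_j^2 / 2} = e^{-\frac{\omega^2}{2} \sum_j x_j^2} = e^{-\frac{\omega^2}{2} \|x\|_2^2},
\end{align*}
which is exactly the claimed right-hand side. There is no genuine obstacle here; the only step requiring any care is the scalar characteristic-function computation, which is entirely standard, and the proof is essentially one line once that lemma is in hand.
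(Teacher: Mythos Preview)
Your proposal is correct and follows essentially the same route as the paper: factor the expectation over independent coordinates, compute the scalar Gaussian characteristic function by completing the square, and collect the product using $\sum_j x_j^2 = \|x\|_2^2$. The paper carries out the completing-the-square step explicitly rather than citing it, but the structure is identical.
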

\begin{proof}
    Omitting ${w \sim \ccN(0, I_d)}$ from the expectation for brevity, we have
    \begin{align*}
        \Exp{e^{i\omega \left(w^T x \right)}}{} &= \Exp{e^{i \omega \sum_{j=1}^d w_j x_j}}{}.
    \end{align*}
    By independence of the components of $w$ and the definition of expectation,
    \begin{align*}
        \Exp{e^{i \omega \sum_{j=1}^d i w_j x_j}}{} &= \prod_{j=1}^d \Exp{e^{i \omega\, w_j x_j}}{} = \prod_{j=1}^d \frac{1}{\sqrt{2 \pi}} \int_{-\infty}^\infty e^{i \omega\, w_j x_j} e^{-\frac{w_j^2}{2}} dw_j.
    \end{align*}
    Completing the square, we get
    \begin{align*}
        \prod_{j=1}^d \frac{1}{\sqrt{2 \pi}} \int_{-\infty}^\infty e^{i \omega\, w_j x_j} e^{-\frac{1}{2}w_j^2} dw_j &= \prod_{j=1}^d \frac{1}{\sqrt{2 \pi}} \int_{-\infty}^\infty e^{\frac{1}{2} \left(i^2 \omega^2 x_j^2 - i^2 \omega^2 x_j^2 + 2 ix_jw_j - w_j^2  \right)} dw_j \\
            &= \prod_{j=1}^d e^{\frac{1}{2} i^2 \omega^2 x_j^2} \frac{1}{\sqrt{2 \pi}} \int_{-\infty}^\infty e^{-\frac{1}{2} \left(i^2 \omega^2 x_j^2 - 2i \omega^2 x_j w_j + w_j^2  \right)} dw_j \\
            &= \prod_{j=1}^d e^{-\frac{1}{2} \omega^2 x_j^2} \frac{1}{\sqrt{2 \pi}} \int_{-\infty}^\infty e^{-\frac{1}{2} \left(w_j - i \omega x_j  \right)^2} dw_j. \\
    \end{align*}
    Since the integral and its preceding factor constitute a Gaussian pdf, they integrate to 1, leaving the final result
    \begin{align*}
        \prod_{j=1}^d e^{-\frac{\omega^2}{2}  x_j^2} = e^{-\frac{\omega^2}{2} \sum_{j=1}^d x_j^2} = e^{-\frac{\omega^2}{2} \| x_j \|_2^2 }.
    \end{align*}
\end{proof}

\begin{proposition}
\label{pr:exp_lemma_uniform}
    For any $c, \omega \in \bbR$, $x \in \bbR^d$,
    \begin{align*}
        \Exp{e^{i \omega \left( w^T x \right)}}{w \sim \ccU_d(-c, c)} = \prod_{j=1}^{d} \sinc{\left( c \, \omega x_j \right)}.
    \end{align*}
\end{proposition}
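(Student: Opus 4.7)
The plan is to mirror the structure of Proposition~\ref{pr:exp_lemma}, exploiting that the coordinates of $w$ are independent so the $d$-dimensional expectation factorizes into a product of one-dimensional integrals, and then evaluate each one-dimensional integral in closed form.

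First I would write $w^T x = \sum_{j=1}^d w_j x_j$ and use independence of the $w_j$ to obtain
\begin{align*}
    \Exp{e^{i \omega (w^T x)}}{w \sim \ccU_d(-c,c)}
        &= \prod_{j=1}^{d} \Exp{e^{i \omega w_j x_j}}{w_j \sim \ccU(-c,c)}
         = \prod_{j=1}^{d} \frac{1}{2c} \int_{-c}^{c} e^{i \omega w_j x_j}\, dw_j.
\end{align*}
Each one-dimensional expectation is then a standard integral of a complex exponential. Evaluating the antiderivative on $[-c, c]$ gives
\begin{align*}
    \frac{1}{2c} \int_{-c}^{c} e^{i \omega w_j x_j}\, dw_j
        = \frac{1}{2c} \cdot \frac{e^{i c \omega x_j} - e^{-i c \omega x_j}}{i \omega x_j}
        = \frac{\sin(c \omega x_j)}{c \omega x_j}
        = \sinc(c \omega x_j),
\end{align*}
using the (unnormalized) convention $\sinc(u) = \sin(u)/u$ consistent with the SIREN NTK statement in the theorem above. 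The case $x_j = 0$ is handled by continuity, since the integrand equals $1$ and the limit of $\sinc$ at $0$ is $1$.

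Taking the product over $j$ then yields the claimed identity. There is really no main obstacle here: the step that requires a little care is simply ensuring the $\sinc$ convention matches the one used when stating the shallow SIREN NTK theorem, and noting that the $x_j = 0$ case is a removable singularity so the formula holds for all $x \in \bbR^d$. Unlike Proposition~\ref{pr:exp_lemma}, no completion-of-the-square trick is needed because the uniform density has compact support and integration against $e^{i\omega w_j x_j}$ is elementary.
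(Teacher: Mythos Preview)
Your proposal is correct and follows essentially the same approach as the paper: factorize by independence of the coordinates, then evaluate each one-dimensional integral $\frac{1}{2c}\int_{-c}^{c} e^{i\omega w_j x_j}\,dw_j$ to obtain $\sinc(c\omega x_j)$. The only cosmetic difference is that the paper splits the exponential into its real and imaginary parts and integrates $\cos$ and $\sin$ separately, whereas you compute the antiderivative of the complex exponential directly; your added remark on the $x_j=0$ case by continuity is a nice touch the paper omits.
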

\begin{proof}
    Omitting ${w \sim \ccU_d(-c, c)}$ from the expectation for brevity, we have
    \begin{align*}
        \Exp{e^{i\omega \left(w^T x \right)}}{} &= \Exp{e^{i \omega\sum_{j=1}^d w_j x_j}}{}.
    \end{align*}
    By independence of the components of $w$ and the definition of expectation,
    \begin{align*}
        \Exp{e^{i \omega \sum_{j=1}^d  w_j x_j}}{} &= \prod_{j=1}^d \Exp{e^{i \omega\, w_j x_j}}{} = \prod_{j=1}^d \int_{-c}^c e^{i \omega\, w_j x_j} \frac{1}{2c} dw_j = \prod_{j=1}^d \frac{1}{2c} \int_{-c}^c e^{i \omega\, w_j x_j} dw_j.
    \end{align*}
    Now, focusing on the integral above, we have 
    \begin{align*}
        \int_{-c}^c e^{i \omega\, w_j x_j} dw_j &= \int_{-c}^c \cos(\omega\, w_j x_j) dw_j + i \int_{-c}^c \sin{\left(\omega\, w_j x_j\right)} dw_j \\
            &= \frac{\sin{\left(\omega\, w_j x_j\right)}}{\omega x_j} \Biggr|_{-c}^{c} - i \frac{\cos(\omega\, w_j x_j)}{\omega x_j} \Biggr|_{-c}^{c} \\
            &= \frac{2 \sin{\left(c \, \omega  x_j\right)}}{\omega x_j}.
    \end{align*}
    Finally, plugging this back into the product above, we get
    \begin{align*}
        \prod_{j=1}^d \frac{1}{2c} \int_{-c}^c e^{i \omega\, w_j x_j} dw_j &= \prod_{j=1}^d \frac{1}{2c} \frac{2 \sin{\left(c \, \omega x_j\right)}}{\omega x_j} = \prod_{j=1}^d \sinc{(c \, \omega x_j)}.
    \end{align*}
\end{proof}


\subsection{Shallow sinusoidal networks}
\label{sec:ntk_shallow}


For the next few proofs, we will be focusing on neural networks with a single hidden layer, \textit{i.e.} $L=1$. Expanding the definition above, such a network is given by
\begin{align}
    f^{(1)}(x) = W^{(1)} \frac{1}{\sqrt{n_1}} \sin{\left( \omega \, \left( W^{(0)} x + b^{(0)} \right) \right)} + b^{(1)}.
\end{align}
The advantage of analysing such shallow networks is that their NNGPs and NTKs have formulations that are intuitively interpretable, providing insight into their characteristics. We later extend these derivations to networks of arbitrary depth.

\subsubsection{SIREN}

First, let us derive the NNGP for a SIREN with a single hidden layer.

\begin{theorem}
\label{th:sin_nngp_uniform}
    \textbf{Shallow SIREN NNGP.} For a single hidden layer SIREN $f^{(1)}: \bbR^{n_0} \rightarrow \bbR^{n_2}$ following Definition~\ref{def:ntk_nn}, as the size of the hidden layer $n_1 \rightarrow \infty$, $f^{(1)}$ tends (by law of large numbers) to the neural network Gaussian Process (NNGP) with covariance 
    \begin{align*}
        \Sigma^{(1)}(x, \tx) = \frac{c^2}{6} \left[ \prod_{j=1}^{n_0} \sinc{\left( c \, \omega \left( x_j - \tx_j \right) \right)} -  e^{-2\omega^2} \prod_{j=1}^{n_0} \sinc{\left( c \, \omega \left( x_j + \tx_j \right) \right)} \right] + 1.
    \end{align*}
\end{theorem}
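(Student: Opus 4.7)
The plan is to construct the limit covariance of $f^{(1)}$ directly, working from the definition rather than invoking the generic NNGP recursion (whose stated base case $\frac{1}{n_0}x^T \tx + \beta^2$ assumes unit-variance Gaussian weights). For each output index $i$ we have
$$f^{(1)}_i(x) = \frac{1}{\sqrt{n_1}}\sum_{j=1}^{n_1} W^{(1)}_{ij}\sin(f^{(0)}_j(x)) + b^{(1)}_i,$$
a sum of $n_1$ i.i.d.\ mean-zero terms with bounded variance (independence across $j$ coming from independence of the rows of $W^{(0)}$ and of the coordinates of $b^{(0)}$). The central limit theorem and law of large numbers then give, as $n_1 \to \infty$, convergence to a centered Gaussian process with
\begin{align*}
\Sigma^{(1)}(x, \tx) = \Var(W^{(1)}_{ij}) \cdot E + \Var(b^{(1)}_i) = \tfrac{c^2}{3}\, E + 1,
\end{align*}
where $E := \Exp{\sin(\omega(w^T x + b))\sin(\omega(w^T \tx + b))}{w \sim \ccU_{n_0}(-c,c),\, b \sim \ccN(0,1)}$. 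The $c^2/3$ prefactor replaces the unit variance implicit in the generic recursion and is the first place bookkeeping matters.

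Next I would reduce $E$ to characteristic-function form. Applying $\sin A\sin B = \tfrac{1}{2}\left[\cos(A-B) - \cos(A+B)\right]$ with $A = \omega(w^T x + b)$ and $B = \omega(w^T \tx + b)$, the shared bias $b$ cancels in the first term and accumulates as $2\omega b$ in the second. Writing each cosine as $\operatorname{Re}(e^{i\cdot})$ and using independence of $w$ and $b$ yields
\begin{align*}
E = \tfrac{1}{2}\operatorname{Re}\Exp{e^{i\omega w^T(x-\tx)}}{w} - \tfrac{1}{2}\operatorname{Re}\!\left[\Exp{e^{i\omega w^T(x+\tx)}}{w}\cdot \Exp{e^{2i\omega b}}{b}\right].
\end{align*}
Proposition~\ref{pr:exp_lemma_uniform} evaluates the two $w$-expectations to $\prod_j \sinc(c\omega(x_j - \tx_j))$ and $\prod_j \sinc(c\omega(x_j + \tx_j))$ respectively, and these are real-valued since $\ccU(-c,c)$ is symmetric about zero, so the $\operatorname{Re}$ operators become trivial. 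Proposition~\ref{pr:exp_lemma} applied to the scalar Gaussian $b$ with frequency $2\omega$ gives $e^{-2\omega^2}$. Multiplying through by the outer $c^2/3$ absorbs the $\tfrac{1}{2}$ into the final coefficient $c^2/6$, producing exactly the claimed expression.

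The substantive analytic work is packed entirely into the two preliminary propositions, so the expected obstacles are organizational rather than conceptual: carefully tracking the $c^2/3$ variance correction from the uniform output weights (distinct from the Gaussian parametrization underlying Theorem~\ref{th:full_ntk}), exploiting the independence of $w$ and $b$ to factor the second expectation, and confirming that Proposition~\ref{pr:exp_lemma_uniform} applies once the shared first-layer bias has been peeled off into a separable Gaussian factor.
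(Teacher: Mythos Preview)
Your proposal is correct and follows essentially the same route as the paper: establish the Gaussian-process limit via CLT/LLN, identify the covariance as $\tfrac{c^2}{3}E+1$, and evaluate $E$ by reducing to characteristic functions handled by Propositions~\ref{pr:exp_lemma} and~\ref{pr:exp_lemma_uniform}. The only cosmetic difference is that the paper expands each sine as $\tfrac{1}{2i}(e^{i\cdot}-e^{-i\cdot})$ and multiplies out four exponential terms, whereas you first apply the product-to-sum identity $\sin A\sin B=\tfrac12[\cos(A-B)-\cos(A+B)]$ and then pass to $\operatorname{Re}(e^{i\cdot})$; both reductions land on the same expectations and the same final expression.
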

\begin{proof}
    We first show that despite the usage of a uniform distribution for the weights, this initialization scheme still leads to an NNGP. In this initial part, we follow an approach similar to \citet{lee_deep_2018}, with the modifications necessary for this conclusion to hold.
    
    From our neural network definition, each element $f^{(1)}(x)_j$ in the output vector is a weighted combination of elements in $W^{(1)}$ and $b^{(1)}$. Conditioning on the outputs from the first layer ($L=0$), since the sine function is bounded and each of the parameters is uniformly distributed with finite variance and zero mean, the $f^{(1)}(x)_j$ become normally distributed with mean zero as $n_1 \rightarrow \infty$ by the (Lyapunov) central limit theorem (CLT). Since any subset of elements in $f^{(1)}(x)$ is jointly Gaussian, we have that this outer layer is described by a Gaussian process.
    
    Now that we have concluded that this initialization scheme still entails an NNGP, we have that its covariance is determined by $\sigma^2_{W} \Sigma^{(1)} + \sigma^2_{b} = \frac{c^2}{3} \Sigma^{(1)} + 1$, where
    \begin{align*}
            \Sigma^{(1)}(x, \tx) &= \lim_{n_1 \rightarrow \infty} \left[ \frac{1}{n_1} \left\langle  \sin{\left( f^{(0)}(x) \right)}, \sin{\left( f^{(0)}(\tx) \right)} \right\rangle \right] \\
            &= \lim_{n_1 \rightarrow \infty} \left[ \frac{1}{n_1} \sum_{j=1}^{n_1}  \sin{\left( f^{(0)}(x) \right)}_j \sin{\left( f^{(0)}(\tx) \right)}_j \right] \\
            &= \lim_{n_1 \rightarrow \infty} \left[ \frac{1}{n_1} \sum_{j=1}^{n_1}  \sin{\left( \omega \left(W^{(0)}_j x + b^{(0)}_j \right)  \right)}  \sin{\left( \omega \left(W^{(0)}_j \tx + b^{(0)}_j \right)  \right)} \right].
    \end{align*}
    Now by the law of large number (LLN) the limit above converges to
    \begin{align*}
        \Exp { \sin{\left( \omega \left(w^T x + b \right)  \right)} \sin{\left( \omega \left(w^T \tx + b \right) \right)}}{w \sim \ccU_{n_0}(-c, c),\, b \sim \ccN(0, 1)},
    \end{align*}
    where $w \in \bbR^{n_0}$ and $b \in \bbR$.
    Omitting the distributions from the expectation for brevity and expanding the exponential definition of sine, we have
    \begin{align*}
        & \Exp{\frac{1}{2i} \left(e^{i\omega \left(w^Tx+b\right) } - e^{-i\omega \left(w^Tx+b\right)} \right) \frac{1}{2i} \left( e^{i\omega \left(w^T\tx+b\right)}-e^{-i\omega \left(w^T\tx+b\right)} \right)}{} \\ 
            &  = -\frac{1}{4} \Exp{e^{i\omega \left(w^Tx+b\right)+i\omega \left(w^T\tx+b\right)} - e^{i\omega \left(w^Tx+b\right)-i\omega \left(w^T\tx+b\right)} - e^{-i\omega \left(w^Tx+b\right)+i\omega \left(w^T\tx+b\right)} + e^{-i\omega \left(w^Tx+b\right)-i\omega \left(w^T\tx+b\right)}}{} \\
            &  = -\frac{1}{4} \left[ \Exp{e^{i\omega\left( w^T\left(x+\tx\right)\right)}}{}\Exp{e^{2i\omega b}}{} - \Exp{e^{i\omega\left( w^T\left(x-\tx\right)\right)}}{} - \Exp{e^{i\omega\left( w^T\left(\tx-x\right)\right)}}{} + \Exp{e^{i\omega\left( w^T\left(-x-\tx\right)\right)}}{}\Exp{e^{-2i\omega b}}{}  \right] \\
    \end{align*}
    Applying Propositions~\ref{pr:exp_lemma}~and~\ref{pr:exp_lemma_uniform} to each expectation above and noting that the $\sinc$ function is even, we are left with
    \begin{align*}
        & -\frac{1}{4} \left[ 2 \prod_{j=1}^{n_0} \sinc{\left( c \, \omega \left( x_j + \tx_j \right) \right)} - 2 e^{-2\omega^2} \prod_{j=1}^{n_0} \sinc{\left( c \, \omega \left( x_j - \tx_j \right) \right)} \right] \\
            & \;\;\;\;\;\;\;\;\;\; = \frac{1}{2} \left[ \prod_{j=1}^{n_0} \sinc{\left( c \, \omega \left( x_j - \tx_j \right) \right)} -  e^{-2\omega^2} \prod_{j=1}^{n_0} \sinc{\left( c \, \omega \left( x_j + \tx_j \right) \right)} \right].
    \end{align*}
\end{proof}

For simplicity, if we take the case of a one-dimensional output (\textit{e.g.}, an audio signal or a monochromatic image) with the standard SIREN setting of $c=\sqrt{6}$, the NNGP reduces to 
\begin{align*}
    \Sigma^{(1)}(x, \tx) = \sinc{\left( \sqrt{6}\, \omega \left( x - \tx \right) \right)} -  e^{-2\omega^2} \sinc{\left( \sqrt{6}\, \omega \left( x + \tx \right) \right)} + 1.
\end{align*}
We can already notice that this kernel is composed of $\sinc$ functions. The $\sinc$ function is the ideal low-pass filter. For any value of $\omega > 1$, we can see the the first term in the expression above will completely dominate the expression, due to the exponential $e^{-2\omega^2}$ factor. In practice, $\omega$ is commonly set to values at least one order of magnitude above $1$, if not multiple orders of magnitude above that in certain cases (\textit{e.g.}, high frequency audio signals). This leaves us with simply
\begin{align*}
    \Sigma^{(1)}(x, \tx) = \sinc{\left( \sqrt{6}\, \omega \left( x - \tx \right) \right)} + 1.
\end{align*}
Notice that not only does our kernel reduce to the $\sinc$ function, but it also reduces to a function solely of $\Delta x = x - \tx$. This agrees with the shift-invariant property we observe in SIRENs, since the NNGP is dependent only on $\Delta x$, but not on the particular values of $x$ and $\tx$. 
Notice also that $\omega$ defines the bandwidth of the $\sinc$ function, thus determining the maximum frequencies it allows to pass. 

The general $\sinc$ form and the shift-invariance of this kernel can be visualized in Figure~\ref{fig:siren_shallow_nngp}, along with the effect of varying $\omega$ on the bandwidth of the NNGP kernel. 

\begin{figure*}[ht]
  \centering
  \begin{subfigure}[c]{0.3\textwidth}
     \centering
     \includegraphics[width=\textwidth]{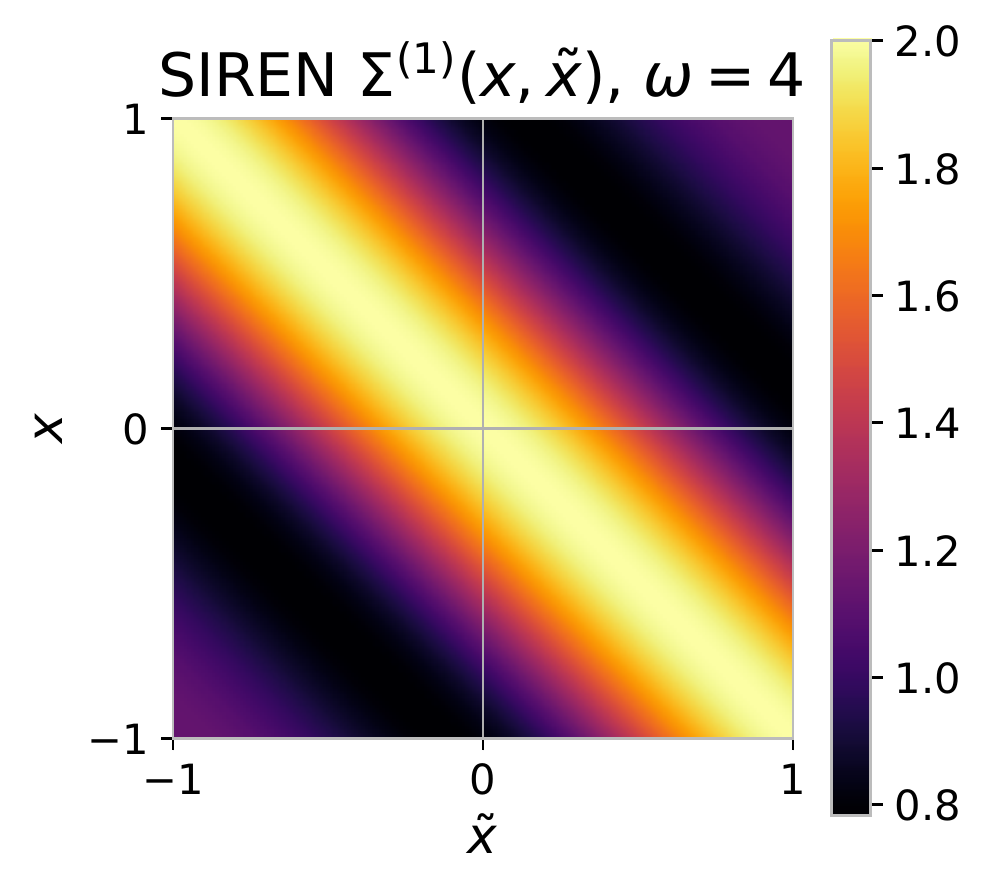}
  \end{subfigure}
    \begin{subfigure}[c]{0.3\textwidth}
     \centering
     \includegraphics[width=\textwidth]{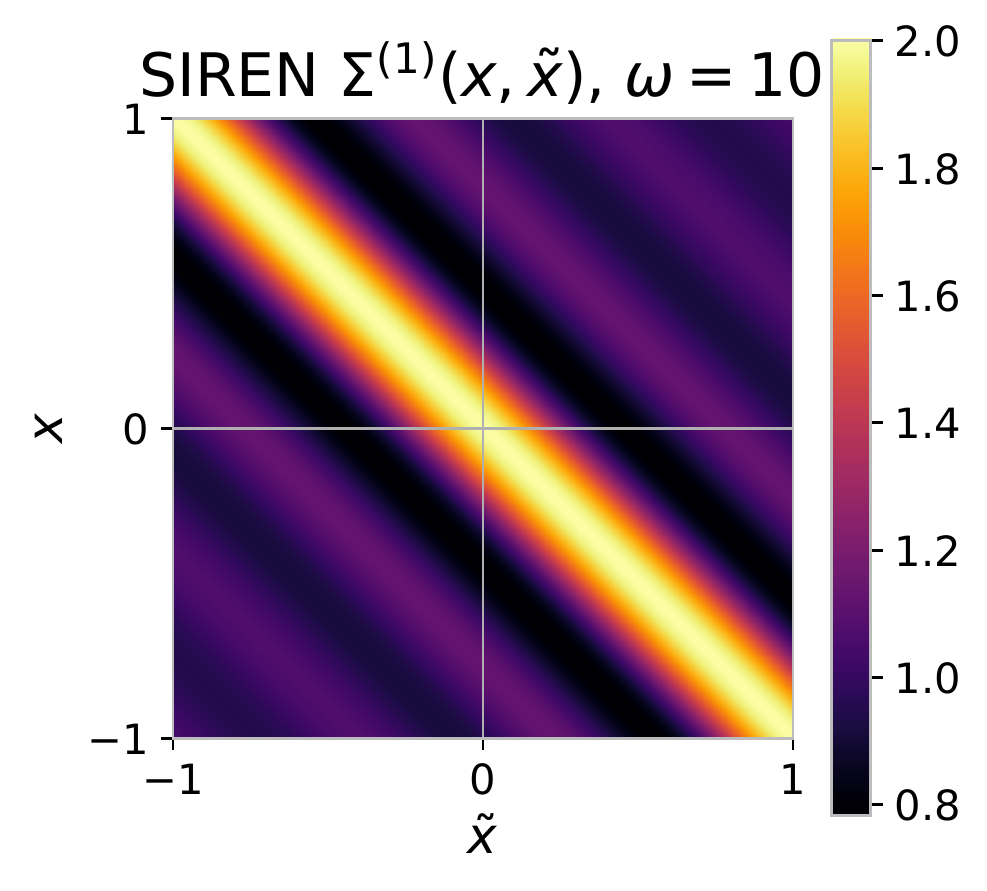}
  \end{subfigure}
    \begin{subfigure}[c]{0.3\textwidth}
     \centering
     \includegraphics[width=\textwidth]{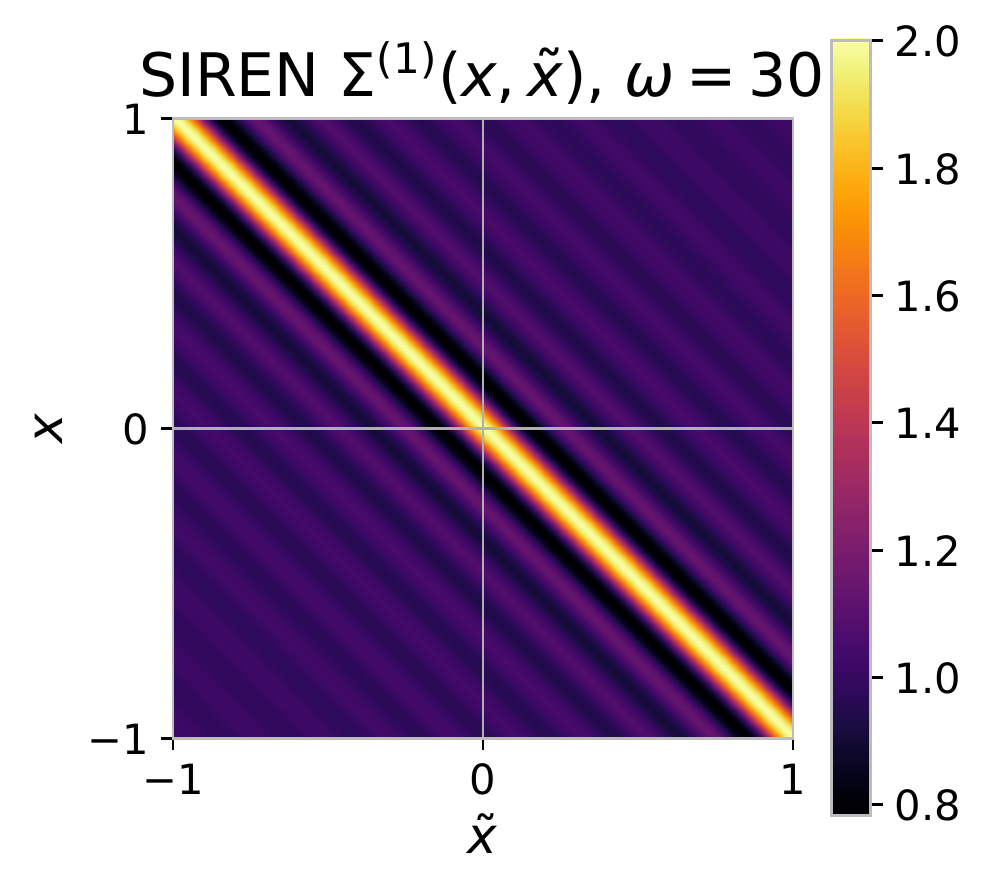}
  \end{subfigure}

  \begin{subfigure}[c]{0.3\textwidth}
     \centering
     \includegraphics[width=\textwidth]{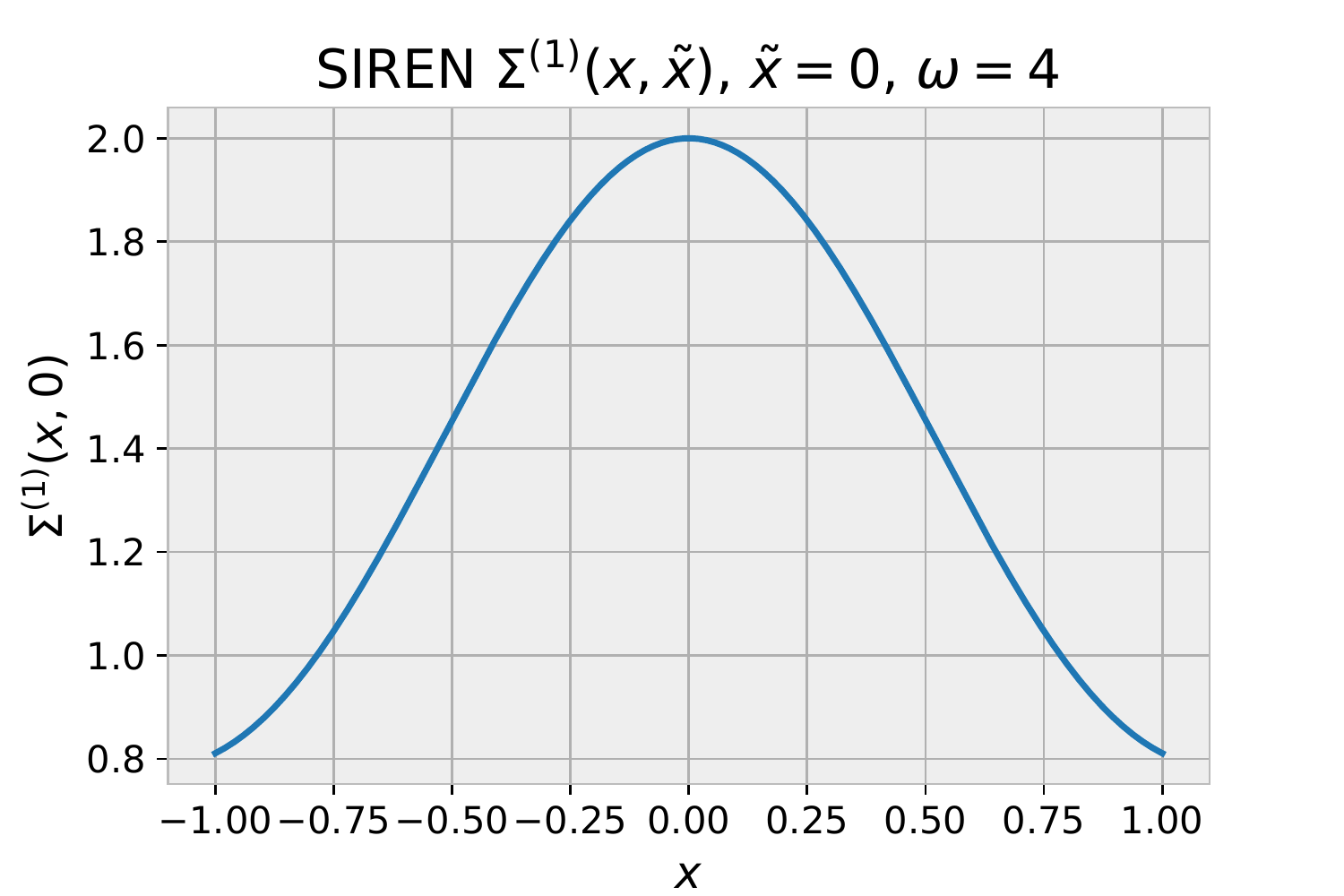}
  \end{subfigure}
    \begin{subfigure}[c]{0.3\textwidth}
     \centering
     \includegraphics[width=\textwidth]{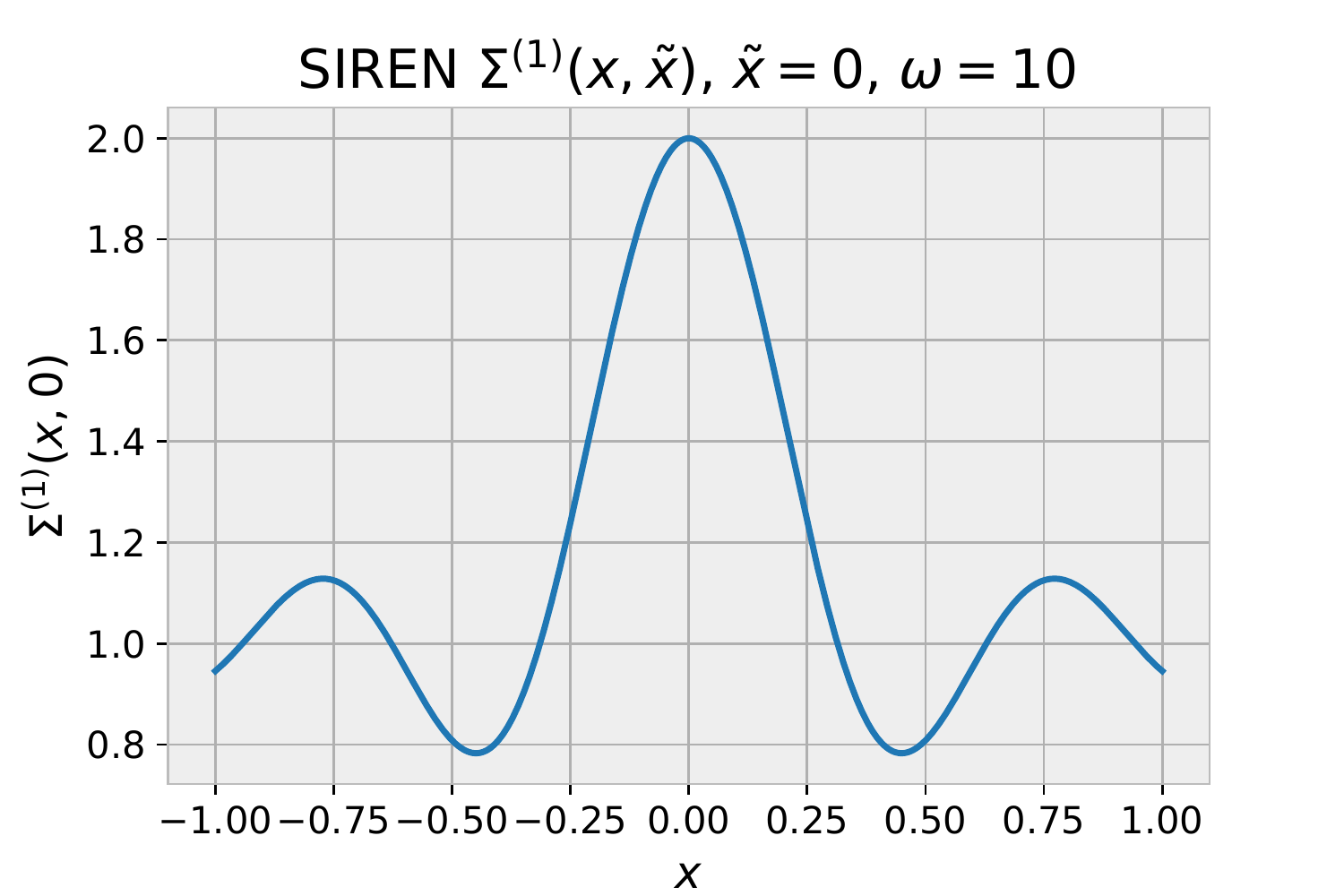}
  \end{subfigure}
  \begin{subfigure}[c]{0.3\textwidth}
     \centering
     \includegraphics[width=\textwidth]{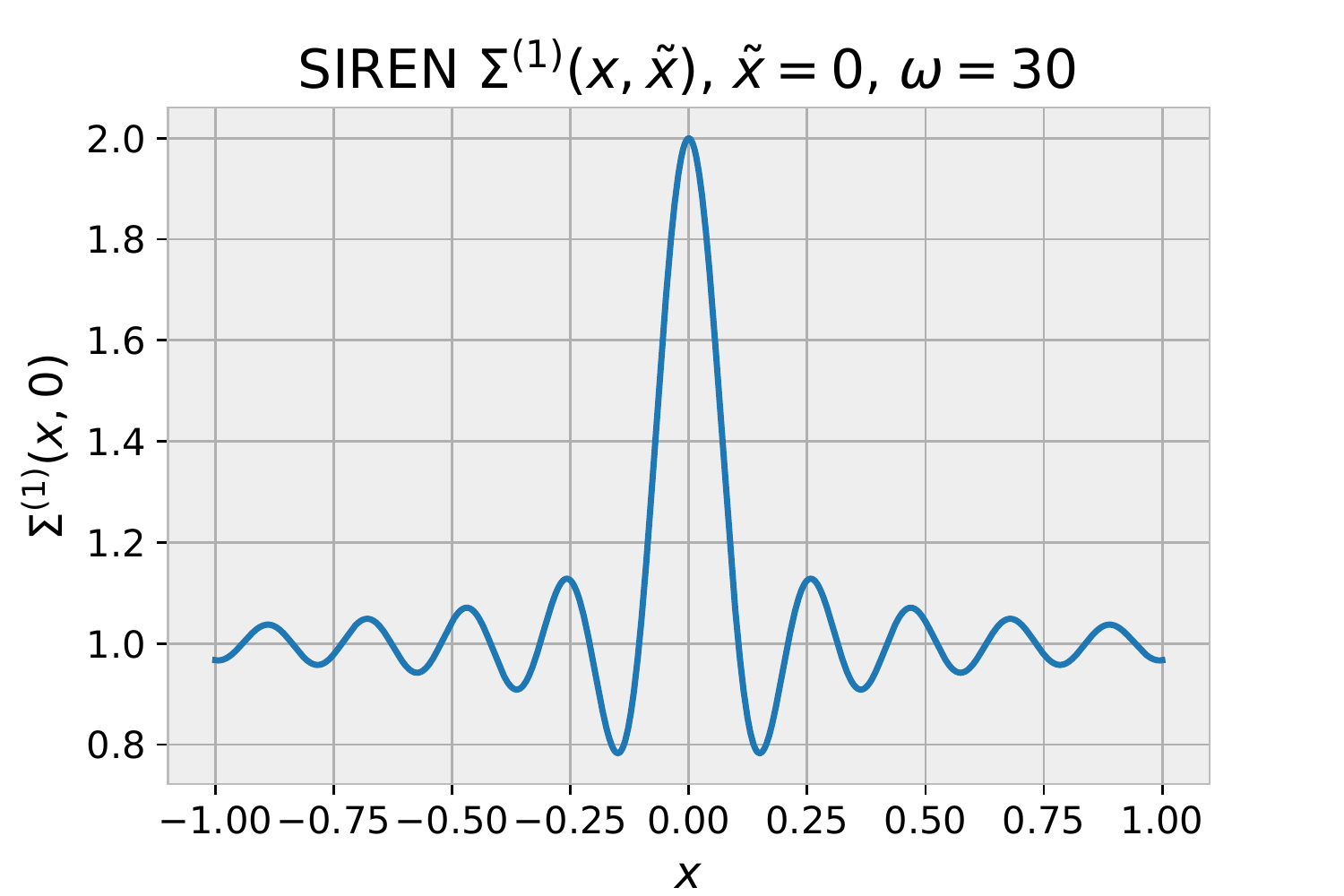}
  \end{subfigure}

  \caption{The NNGP for SIREN at different $\omega$ values. The top row shows the kernel values for pairs $(x, \tx) \in [-1, 1]^2$. Bottom row shows a slice at fixed $\tx = 0$.}
  \label{fig:siren_shallow_nngp}
\end{figure*}

We can see that the NTK of the shallow SIREN, derived below, maintains the same relevant characteristics as the NNGP. We first derive $\dot{\Sigma}$ in the Lemma below.

\begin{lemma}
\label{lm:cos_nngp_uniform}
    For $\omega \in \bbR$, $\dot{\Sigma}^{(1)}(x, \tilde{x}): \bbR^{n_0} \times \bbR^{n_0} \rightarrow \bbR$ 
    is given by
    \begin{align*}
        \Sigma^{(1)}(x, \tx) = \frac{c^2}{6} \left[ \prod_{j=1}^{n_0} \sinc{\left( c \, \omega \left( x_j - \tx_j \right) \right)} +  e^{-2\omega^2} \prod_{j=1}^{n_0} \sinc{\left( c \, \omega \left( x_j + \tx_j \right) \right)} \right] + 1.
    \end{align*}
\end{lemma}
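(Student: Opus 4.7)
The plan is to mirror the derivation of $\Sigma^{(1)}$ in Theorem~\ref{th:sin_nngp_uniform}, replacing the sine nonlinearity by cosine. This is natural because $\dot\Sigma^{(l)}$ in Theorem~\ref{th:full_ntk} is an expectation of a product of $\cos$'s (the derivatives of the $\sin$ activations), so structurally the calculation is identical to the SIREN NNGP computation with $\sin \mapsto \cos$.

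First I would invoke the same LLN/CLT argument used in Theorem~\ref{th:sin_nngp_uniform} to justify the $n_1 \to \infty$ limit and to pick up the outer-layer variance factors $\sigma_W^2 = c^2/3$ and $\sigma_b^2 = 1$ coming from the uniform weights and Gaussian biases in the second layer. This reduces the problem to computing
\begin{align*}
\dot\Sigma^{(1)}(x,\tx) = \tfrac{c^2}{3}\,\mathbb{E}_{w \sim \mathcal{U}_{n_0}(-c,c),\; b \sim \mathcal{N}(0,1)}\!\bigl[\cos(\omega(w^T x + b))\,\cos(\omega(w^T \tx + b))\bigr] + 1.
\end{align*}

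Next I would expand the cosine product via Euler's formula $\cos(z) = \tfrac{1}{2}(e^{iz}+e^{-iz})$, obtaining four complex-exponential terms. By independence of $w$ and $b$, each term factors into a $w$-expectation of the form $\mathbb{E}_w[e^{\pm i\omega w^T(x\pm\tx)}]$ and a $b$-expectation of the form $\mathbb{E}_b[e^{\pm 2i\omega b}]$ (or $1$, for the two cross-terms where the bias contributions cancel). I would then apply Proposition~\ref{pr:exp_lemma_uniform} to evaluate the $w$-expectations as products of $\sinc$'s, and Proposition~\ref{pr:exp_lemma} to evaluate the surviving $b$-expectations as $e^{-2\omega^2}$. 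Using that $\sinc$ is even collapses the four terms into two, producing $\tfrac{1}{2}\bigl[\prod_j \sinc(c\omega(x_j-\tx_j)) + e^{-2\omega^2}\prod_j \sinc(c\omega(x_j+\tx_j))\bigr]$ inside the expectation; multiplying by $c^2/3$ and adding the $+1$ bias term yields the claimed formula.

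The only substantive departure from the proof of Theorem~\ref{th:sin_nngp_uniform} is a sign change: the $\sin$-expansion there carries a prefactor $1/(2i)^2 = -1/4$ which produces mixed signs across the four cross-terms, whereas the $\cos$-expansion here carries a uniform $+1/4$. This is precisely what turns the ``$-\,e^{-2\omega^2}\prod_j\sinc(c\omega(x_j+\tx_j))$'' appearing in the NNGP into a ``$+\,e^{-2\omega^2}\prod_j\sinc(c\omega(x_j+\tx_j))$'' here. I expect the sign bookkeeping in this cross-term expansion to be the only place requiring care; no new analytical machinery beyond the two propositions already invoked in Theorem~\ref{th:sin_nngp_uniform} is needed.
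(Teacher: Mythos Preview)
Your proposal is correct and takes essentially the same approach as the paper: the paper's proof is a one-sentence remark that ``the proof follows the same pattern as Theorem~\ref{th:sin_nngp_uniform}, with the only difference being a few sign changes after the exponential expansion of the trigonometric functions, due to the different identities for sine and cosine,'' and you have spelled out exactly that, correctly identifying that the $\cos$ expansion carries a $+1/4$ prefactor instead of $-1/4$, which flips the sign on the $e^{-2\omega^2}\prod_j\sinc(c\omega(x_j+\tx_j))$ term.
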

\begin{proof}
    The proof follows the same pattern as Theorem~\ref{th:sin_nngp_uniform}, with the only difference being a few sign changes after the exponential expansion of the trigonometric functions, due to the different identities for sine and cosine. 
\end{proof}

Now we can derive the NTK for the shallow SIREN.

\begin{corollary}
    \textbf{Shallow SIREN NTK.} For a single hidden layer SIREN $f^{(1)}: \bbR^{n_0} \rightarrow \bbR^{n_2}$ following Definition~\ref{def:ntk_nn}, its neural tangent kernel (NTK), as defined in Theorem~\ref{th:full_ntk}, is given by 
    \begin{align*}
        \Theta^{(1)}(x, \tx) &= \left( \omega^2 \left( x^T \tx + 1 \right) \right) \left(  \frac{c^2}{6} \left[ \prod_{j=1}^{n_0} \sinc{\left( c \, \omega \left( x_j - \tx_j \right) \right)} -  e^{-2\omega^2} \prod_{j=1}^{n_0} \sinc{\left( c \, \omega \left( x_j + \tx_j \right) \right)} \right] + 1 \right) \\
                &\quad\quad\quad + \frac{c^2}{6} \left[ \prod_{j=1}^{n_0} \sinc{\left( c \, \omega \left( x_j - \tx_j \right) \right)} +  e^{-2\omega^2} \prod_{j=1}^{n_0} \sinc{\left( c \, \omega \left( x_j + \tx_j \right) \right)} \right] + 1 \\ 
            &= \frac{c^2}{6}  \left( \omega^2 \left( x^T \tx + 1 \right) + 1 \right)  \prod_{j=1}^{n_0} \sinc{\left( c \, \omega \left( x_j - \tx_j \right) \right)}  \\
                &\quad\quad\quad - \frac{c^2}{6} \left( \omega^2 \left( x^T \tx + 1 \right) - 1 \right) e^{-2\omega^2} \prod_{j=1}^{n_0} \sinc{\left( c \, \omega \left( x_j + \tx_j \right) \right)} +  \omega^2 \left( x^T \tx + 1 \right)  + 1.
    \end{align*}
\end{corollary}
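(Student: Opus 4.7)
My plan is to treat this claim as a direct corollary of Theorem~\ref{th:full_ntk}, specialized to depth $L=1$. The NTK recursion reads $\Theta^{(L)}(x,\tx) = \Theta^{(L-1)}(x,\tx)\,\dot{\Sigma}^{(L)}(x,\tx) + \Sigma^{(L)}(x,\tx)$, so for a single hidden layer I only need the three ingredients that have already been established: the base case $\Theta^{(0)}(x,\tx) = \omega^2(x^T\tx + 1)$ supplied by Theorem~\ref{th:full_ntk} itself, the NNGP kernel $\Sigma^{(1)}$ derived in Theorem~\ref{th:sin_nngp_uniform}, and the companion ``derivative'' kernel $\dot{\Sigma}^{(1)}$ derived in Lemma~\ref{lm:cos_nngp_uniform}. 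No new probabilistic computation is needed; the proof is essentially a substitution followed by algebraic simplification.

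Concretely, I would start by writing the recursion in its unreduced form,
\begin{align*}
    \Theta^{(1)}(x,\tx) = \omega^2(x^T\tx+1)\,\dot{\Sigma}^{(1)}(x,\tx) + \Sigma^{(1)}(x,\tx),
\end{align*}
and plug in the two bracketed expressions verbatim. This is already the first line of the stated corollary (after distributing the $+1$ in $\dot{\Sigma}^{(1)}$ and in $\Sigma^{(1)}$), so it gives the claim up to cosmetics. To pass to the compact second line I would collect like terms. Because $\Sigma^{(1)}$ and $\dot{\Sigma}^{(1)}$ are structurally identical apart from the sign preceding the $e^{-2\omega^2}\prod_j \sinc(c\omega(x_j+\tx_j))$ piece, the $\prod_j \sinc(c\omega(x_j-\tx_j))$ contributions collapse into a single term with coefficient $\tfrac{c^2}{6}(\omega^2(x^T\tx+1)+1)$, while the $e^{-2\omega^2}\prod_j \sinc(c\omega(x_j+\tx_j))$ contributions collapse (with sign cancellation) into a single term with coefficient proportional to $\tfrac{c^2}{6}(\omega^2(x^T\tx+1)-1)$. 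Finally, the constant ``$+1$'' tails of the two kernels, combined with the $\omega^2(x^T\tx+1)$ scaling of the $\dot{\Sigma}^{(1)}$ factor, aggregate to the additive $\omega^2(x^T\tx+1)+1$ appearing at the end.

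The only real obstacle here is careful sign-and-coefficient bookkeeping: one must track which of the two bracketed expressions is multiplied by $\Theta^{(0)}$ and which is added afterward, and be precise about how the ``$+1$'' bias contributions in both $\Sigma^{(1)}$ and $\dot{\Sigma}^{(1)}$ recombine under the recursion, since these produce both the unscaled constant $1$ and the $\omega^2(x^T\tx+1)$ term that show up at the end of the stated closed form. Once this accounting is handled correctly, the claimed expression drops out mechanically, and no further machinery beyond Theorem~\ref{th:full_ntk}, Theorem~\ref{th:sin_nngp_uniform}, and Lemma~\ref{lm:cos_nngp_uniform} is invoked.
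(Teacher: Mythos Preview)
Your proposal is correct and matches the paper's own proof essentially verbatim: the paper states that the result ``follows trivially by applying Theorem~\ref{th:sin_nngp_uniform} and Lemma~\ref{lm:cos_nngp_uniform} to Theorem~\ref{th:full_ntk},'' which is precisely the substitution $\Theta^{(1)} = \Theta^{(0)}\dot{\Sigma}^{(1)} + \Sigma^{(1)}$ that you describe, followed by the same term-collection you outline.
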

\begin{proof}
    Follows trivially by applying Theorem~\ref{th:sin_nngp_uniform} and Lemma~\ref{lm:cos_nngp_uniform} to Theorem~\ref{th:full_ntk}.
\end{proof}

Though the expressions become more complex due to the formulation of the NTK, we can see that many of the same properties from the NNGP still apply. Again, for reasonable values of $\omega$, the term with the exponential factor $e^{-2\omega^2}$ will be of negligible relative magnitude. With $c=\sqrt{6}$, this leaves us with
\begin{align*}
     \left( \omega^2 \left( x^T \tx + 1 \right) + 1 \right) \prod_{j=1}^{n_0} \sinc{\left( \sqrt{6}\, \omega \left( x_j - \tx_j \right) \right)} + \omega^2 \left( x^T \tx + 1 \right)  + 1,
\end{align*}
which is of the same form as the NNGP, with some additional linear terms $x^T\tx$. Though these linear terms break the pure shift-invariance, we still have a strong diagonal and the $\sinc$ form with bandwidth determined by $\omega$, as can be seen in Figure~\ref{fig:siren_shallow_ntk}.

Similarly to the NNGP, the SIREN NTK suggests that training a shallow SIREN is approximately equivalent to performing kernel regression with a $\sinc$ kernel, a low-pass filter, with its bandwidth defined by $\omega$. This agrees intuitively with the experimental observations from the paper that in order to fit higher frequencies signals, a larger $\omega$ is required.

\begin{figure*}[ht]
  \centering
  \begin{subfigure}[c]{0.3\textwidth}
     \centering
     \includegraphics[width=\textwidth]{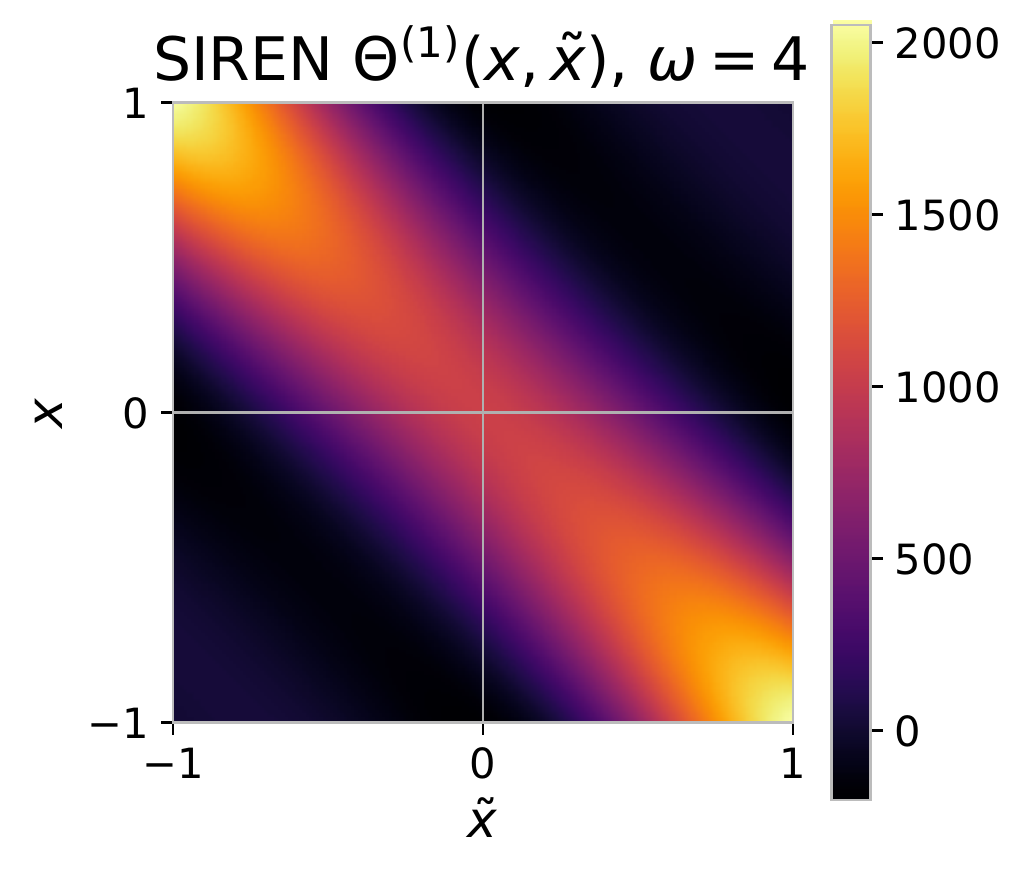}
  \end{subfigure}
    \begin{subfigure}[c]{0.3\textwidth}
     \centering
     \includegraphics[width=\textwidth]{imgs/kernels/siren_L1_ntk_10_kernel.pdf}
  \end{subfigure}
    \begin{subfigure}[c]{0.3\textwidth}
     \centering
     \includegraphics[width=\textwidth]{imgs/kernels/siren_L1_ntk_30_kernel.pdf}
  \end{subfigure}

  \begin{subfigure}[c]{0.3\textwidth}
     \centering
     \includegraphics[width=\textwidth]{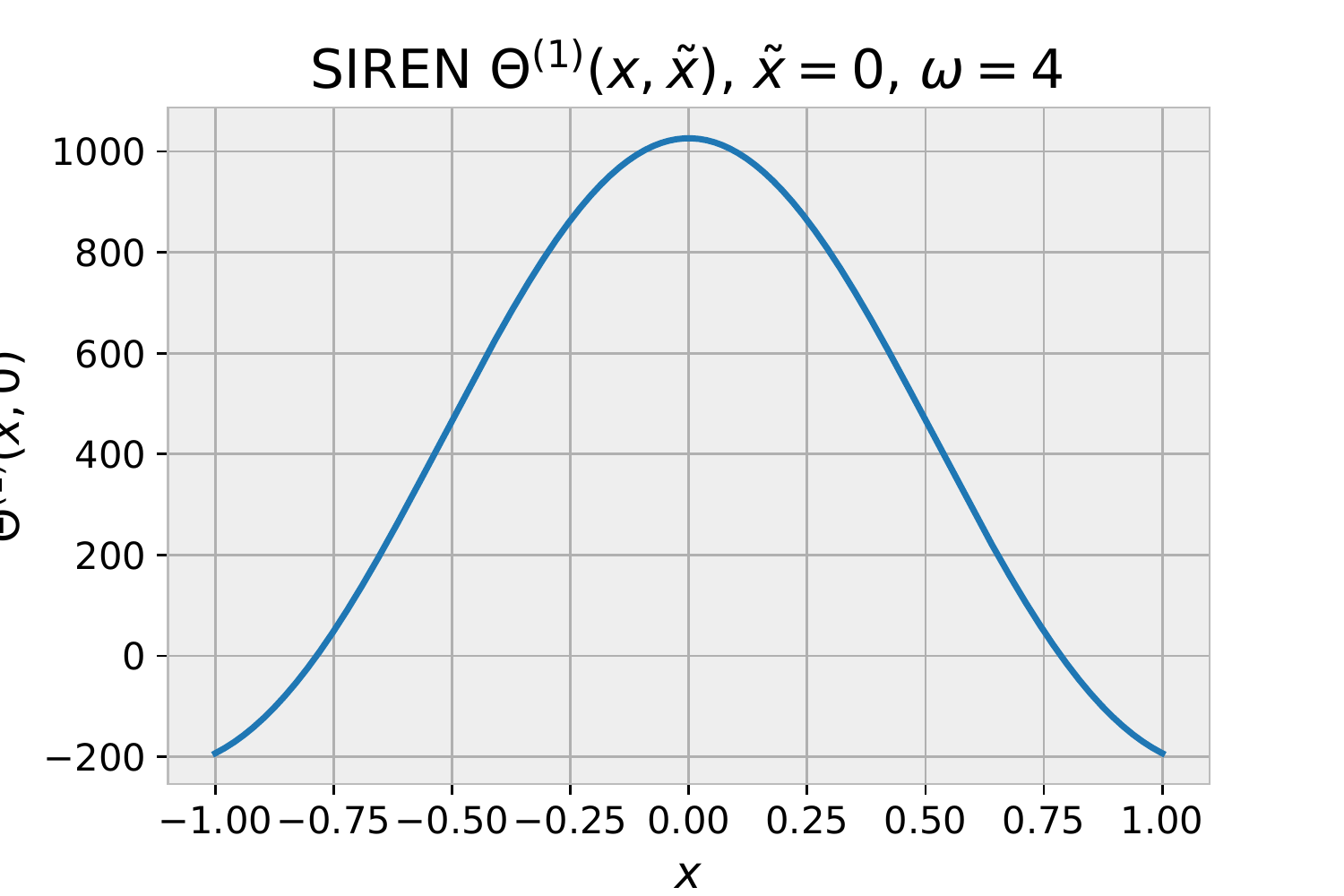}
  \end{subfigure}
    \begin{subfigure}[c]{0.3\textwidth}
     \centering
     \includegraphics[width=\textwidth]{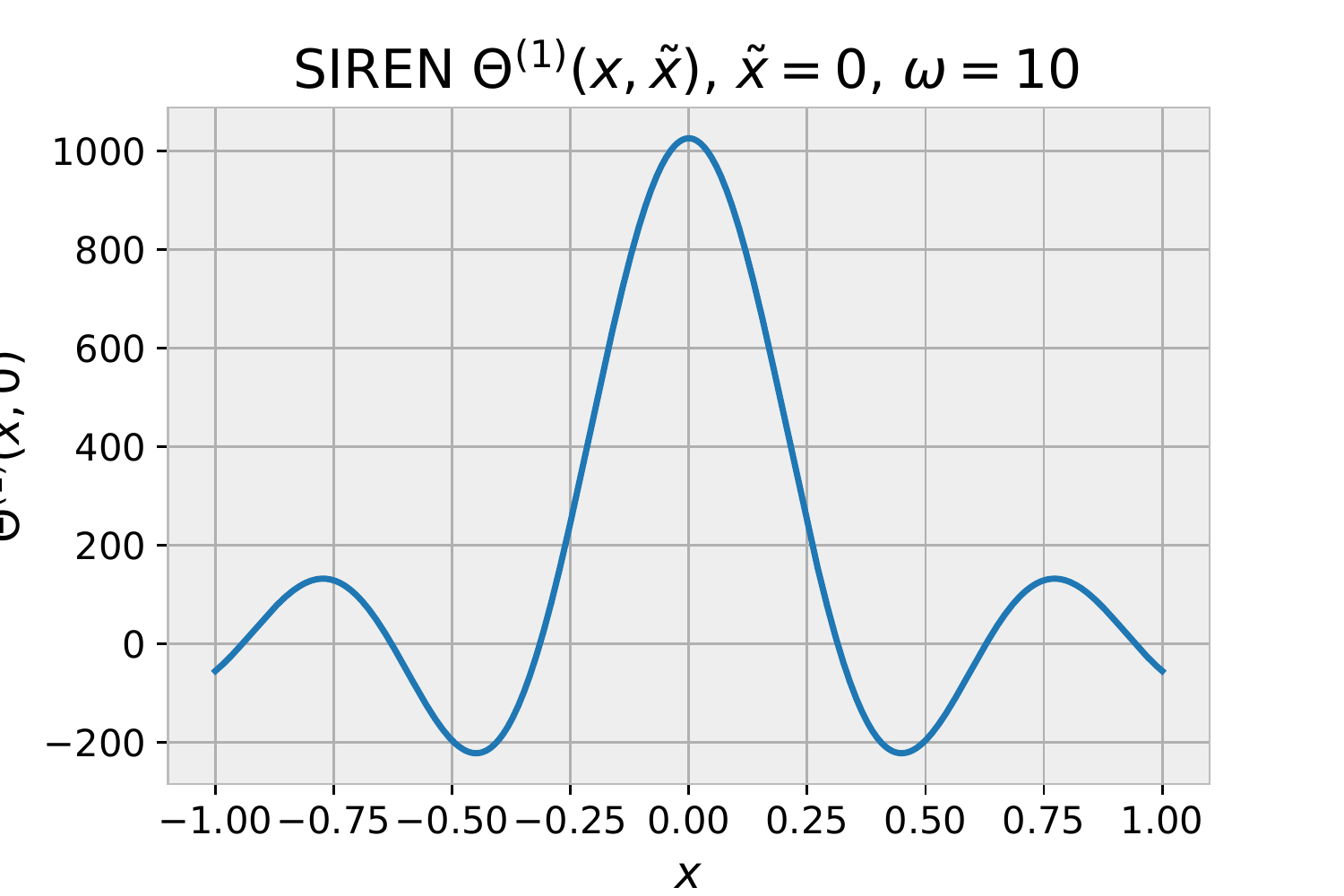}
  \end{subfigure}
  \begin{subfigure}[c]{0.3\textwidth}
     \centering
     \includegraphics[width=\textwidth]{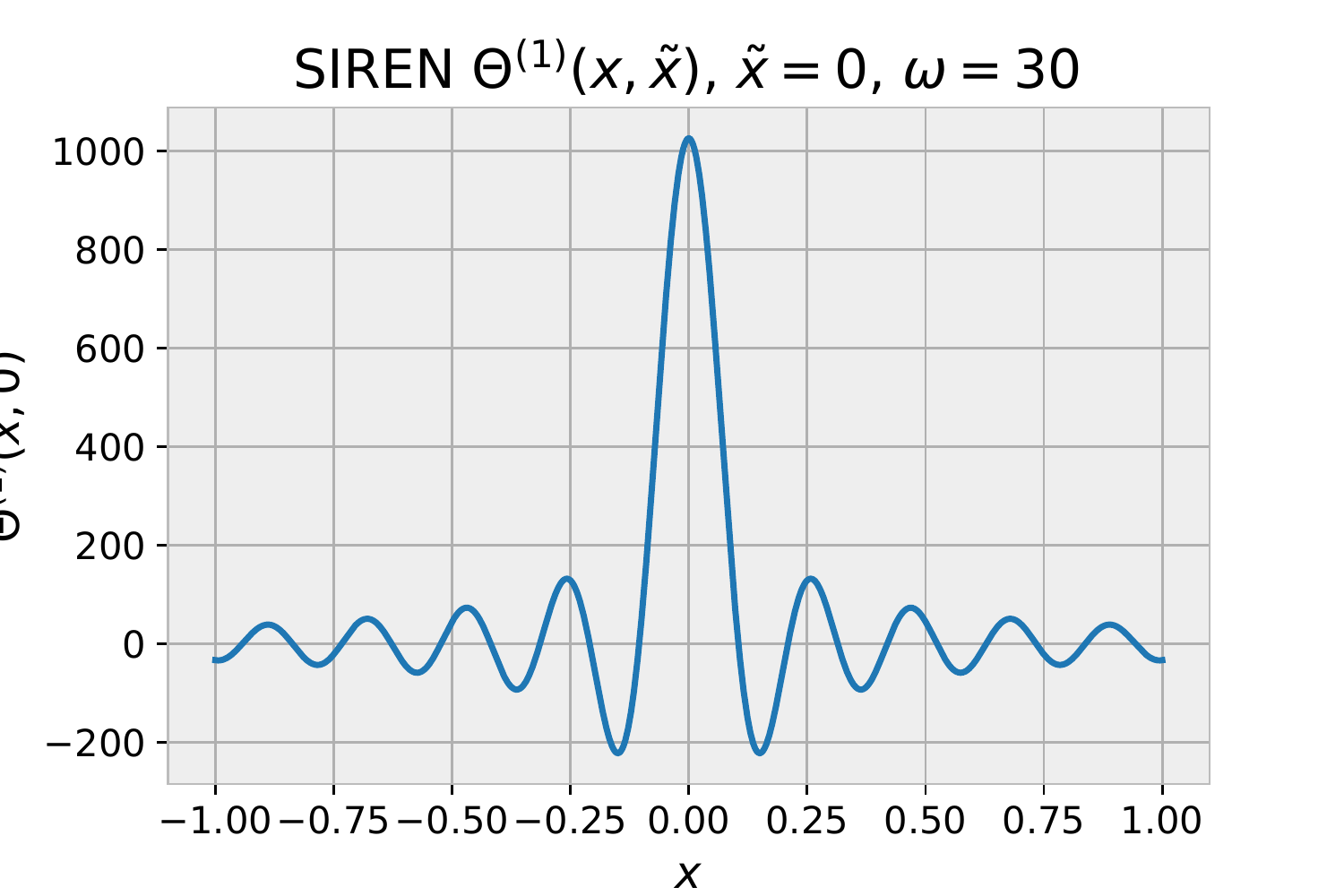}
  \end{subfigure}
  
  \caption{The NTK for SIREN at different $\omega$ values. The top row shows the kernel values for pairs $(x, \tx) \in [-1, 1]^2$. Bottom row shows a slice at fixed $\tx = 0$.}
  \label{fig:siren_shallow_ntk}
\end{figure*}

\subsubsection{Simple sinusoidal network}

Just as we did in the last section, we will now first derive the NNGP for a simple sinusoidal network, and then use that in order to obtain its NTK as well. As we will see, the Gaussian initialization employed in the SSN has the benefit of rendering the derivations cleaner, while retaining the relevant properties from the SIREN initialization. We observe that a similar derivation of this NNGP (using cosine functions instead of sine) can be found in \citet{pearce_expressive_2019}, with a focus on a Bayesian perspective for the result.

\begin{theorem}
\label{th:sin_nngp}
    \textbf{Shallow SSN NNGP.} For a single hidden layer simple sinusoidal network $f^{(1)}: \bbR^{n_0} \rightarrow \bbR^{n_2}$ following Definition~\ref{def:ntk_nn}, as the size of the hidden layer $n_1 \rightarrow \infty$, $f^{(1)}$ tends (by law of large numbers) to the neural network Gaussian Process (NNGP) with covariance 
    \begin{align*}
        \Sigma^{(1)}(x, \tx) = 
             \frac{1}{2}  \left( e^{-\frac{\omega^2}{2} \|x-\tx\|_2^2} - e^{-\frac{\omega^2}{2} \|x+\tx\|_2^2} e^{-2\omega^2} \right) + 1.
    \end{align*}
\end{theorem}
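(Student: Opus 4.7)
The plan is to mirror the proof of Theorem~\ref{th:sin_nngp_uniform} exactly, substituting the uniform-weight characteristic function (Proposition~\ref{pr:exp_lemma_uniform}) with the Gaussian-weight characteristic function (Proposition~\ref{pr:exp_lemma}). Since the Gaussian initialization here has variances $\sigma_W^2 = 1$ and $\sigma_b^2 = 1$ (per Definition~\ref{def:ntk_nn}), this should produce a cleaner expression without the $\frac{c^2}{6}$ prefactor and $\sinc$ products.

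First I would argue that $f^{(1)}$ is an NNGP in the $n_1 \to \infty$ limit. Conditioning on the first-layer preactivations, each output coordinate $f^{(1)}(x)_j$ is a sum of independent zero-mean terms $\tfrac{1}{\sqrt{n_1}} W^{(1)}_{jk} \sin(f^{(0)}(x)_k)$ plus a Gaussian bias, and the sines are bounded so a Lyapunov-type CLT yields joint Gaussianity across any finite collection of inputs. The resulting covariance is $\Sigma^{(1)}(x,\tilde{x}) + 1$, where
\begin{equation*}
\Sigma^{(1)}(x,\tilde{x}) = \lim_{n_1 \to \infty} \frac{1}{n_1} \sum_{k=1}^{n_1} \sin\!\bigl(\omega(W^{(0)}_k x + b^{(0)}_k)\bigr)\,\sin\!\bigl(\omega(W^{(0)}_k \tilde{x} + b^{(0)}_k)\bigr).
\end{equation*}
By the LLN this converges to $\mathbb{E}\bigl[\sin(\omega(w^Tx+b))\sin(\omega(w^T\tilde{x}+b))\bigr]$ for $w \sim \mathcal{N}(0, I_{n_0})$ and $b \sim \mathcal{N}(0,1)$.

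Next I would expand each sine via $\sin(z) = \tfrac{1}{2i}(e^{iz} - e^{-iz})$, which gives four cross terms. By independence of $w$ and $b$, each term factors as $\mathbb{E}[e^{i\omega w^T u}]\cdot \mathbb{E}[e^{i \omega \alpha b}]$ for $u \in \{x+\tilde{x}, x-\tilde{x}, \tilde{x}-x, -x-\tilde{x}\}$ and $\alpha \in \{2, 0, 0, -2\}$. I would then apply Proposition~\ref{pr:exp_lemma} to evaluate $\mathbb{E}[e^{i\omega w^T u}] = e^{-\frac{\omega^2}{2}\|u\|_2^2}$, and use the same proposition in dimension one to get $\mathbb{E}[e^{\pm 2i\omega b}] = e^{-2\omega^2}$. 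The two $\|x-\tilde{x}\|_2^2$ terms combine (noting $\|\tilde{x}-x\|_2^2 = \|x-\tilde{x}\|_2^2$), as do the two $\|x+\tilde{x}\|_2^2$ terms, and the overall $-\tfrac{1}{4}$ factor from $\tfrac{1}{(2i)^2}$ combined with the sign pattern yields
\begin{equation*}
\Sigma^{(1)}(x,\tilde{x}) = \tfrac{1}{2}\bigl(e^{-\frac{\omega^2}{2}\|x-\tilde{x}\|_2^2} - e^{-\frac{\omega^2}{2}\|x+\tilde{x}\|_2^2} e^{-2\omega^2}\bigr) + 1,
\end{equation*}
after adding back the $+1$ from the bias variance.

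There is no real obstacle — the proof is essentially the same bookkeeping as in Theorem~\ref{th:sin_nngp_uniform}, and the only point requiring any care is matching signs when collecting the four complex-exponential terms and confirming that the Gaussian weight characteristic function (Proposition~\ref{pr:exp_lemma}) replaces cleanly the $\sinc$-product from Proposition~\ref{pr:exp_lemma_uniform}. The CLT step deserves a brief justification because $W^{(1)}$ is Gaussian (rather than bounded as in the SIREN case), but the Lyapunov condition is immediate since $\sin$ is bounded and Gaussian weights have all moments finite, making the finite fourth moment bound trivial.
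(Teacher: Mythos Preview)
Your proposal is correct and follows essentially the same approach as the paper's own proof: establish the NNGP via CLT, reduce the covariance to the expectation $\mathbb{E}[\sin(\omega(w^Tx+b))\sin(\omega(w^T\tilde{x}+b))]$ by LLN, expand the sines as complex exponentials, factor over independent $w$ and $b$, and apply Proposition~\ref{pr:exp_lemma} to each factor before collecting terms. Your added remark about the Lyapunov condition for the CLT step is a slight refinement over the paper's more informal appeal to the CLT, but otherwise the arguments are identical.
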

\begin{proof}
    We again initially follow an approach similar to the one described in \citet{lee_deep_2018}.
    
    From our sinusoidal network definition, each element $f^{(1)}(x)_j$ in the output vector is a weighted combination of elements in $W^{(1)}$ and $b^{(1)}$. Conditioning on the outputs from the first layer ($L=0$), since the sine function is bounded and each of the parameters is Gaussian with finite variance and zero mean, the $f^{(1)}(x)_j$ are also normally distributed with mean zero by the CLT. Since any subset of elements in $f^{(1)}(x)$ is jointly Gaussian, we have that this outer layer is described by a Gaussian process.
    
    Therefore, its covariance is determined by $\sigma^2_{W} \Sigma^{(1)} + \sigma^2_{b} = \Sigma^{(1)} + 1$, where
    \begin{align*}
        \Sigma^{(1)}(x, \tx) &= \lim_{n_1 \rightarrow \infty} \left[ \frac{1}{n_1} \left\langle  \sin{\left( f^{(0)}(x) \right)}, \sin{\left( f^{(0)}(\tx) \right)} \right\rangle \right] \\
            &= \lim_{n_1 \rightarrow \infty} \left[ \frac{1}{n_1} \sum_{j=1}^{n_1}  \sin{\left( f^{(0)}(x) \right)}_j \sin{\left( f^{(0)}(\tx) \right)}_j \right] \\
            &= \lim_{n_1 \rightarrow \infty} \left[ \frac{1}{n_1} \sum_{j=1}^{n_1}  \sin{\left( \omega \, \left(W^{(0)}_j x + b^{(0)}_j \right)  \right)}  \sin{\left( \omega \, \left(W^{(0)}_j \tx + b^{(0)}_j \right)  \right)} \right].
    \end{align*}
    Now by the LLN the limit above converges to
    \begin{align*}
        \Exp {\sin{\left( \omega \, \left(w^T x + b \right)  \right)} \sin{\left( \omega \, \left(w^T \tx + b \right) \right)}}{w \sim \ccN(0, I_{n_0}), b \sim \ccN(0, 1)},
    \end{align*}
    where $w \in \bbR^{n_0}$ and $b \in \bbR$.
    Omitting the distributions from the expectation for brevity and expanding the exponential definition of sine, we have
    \begin{align*}
        & \Exp{\frac{1}{2i} \left(e^{i\omega \left(w^Tx+b\right) } - e^{-i\omega \left(w^Tx+b\right)} \right) \frac{1}{2i} \left( e^{i\omega \left(w^T\tx+b\right)}-e^{-i\omega \left(w^T\tx+b\right)} \right)}{} \\ 
            &  = -\frac{1}{4} \Exp{e^{i\omega \left(w^Tx+b\right)+i\omega \left(w^T\tx+b\right)} - e^{i\omega \left(w^Tx+b\right)-i\omega \left(w^T\tx+b\right)} - e^{-i\omega \left(w^Tx+b\right)+i\omega \left(w^T\tx+b\right)} + e^{-i\omega \left(w^Tx+b\right)-i\omega \left(w^T\tx+b\right)}}{} \\
            &  = -\frac{1}{4} \left[ \Exp{e^{i\omega\left( w^T\left(x+\tx\right)\right)}}{}\Exp{e^{2i\omega b}}{} - \Exp{e^{i\omega\left( w^T\left(x-\tx\right)\right)}}{} - \Exp{e^{i\omega\left( w^T\left(\tx-x\right)\right)}}{} + \Exp{e^{i\omega\left( w^T\left(-x-\tx\right)\right)}}{}\Exp{e^{-2i\omega b}}{}  \right] \\
    \end{align*}
    Applying Proposition~\ref{pr:exp_lemma} to each expectation above, it becomes
    \begin{align*}
        -\frac{1}{4} & \left( e^{-\frac{\omega^2}{2}\|x+\tx\|_2^2} e^{-2\omega^2} - e^{-\frac{\omega^2}{2}\|x-\tx\|_2^2} - e^{-\frac{\omega^2}{2}\|x+\tx\|_2^2} + e^{-\frac{\omega^2}{2}\|x+\tx\|_2^2} e^{-2\omega^2} \right) \\
            & = \frac{1}{2}  \left( e^{-\frac{\omega^2}{2} \|x-\tx\|_2^2} - e^{-\frac{\omega^2}{2} \|x+\tx\|_2^2} e^{-2\omega^2} \right). 
    \end{align*}
\end{proof}

We an once again observe that, for practical values of $\omega$, the NNGP simplifies to \begin{align*}
    \frac{1}{2} e^{-\frac{\omega^2}{2} \|x-\tx\|_2^2} + 1.
\end{align*}
This takes the form of a Gaussian kernel, which is also a low-pass filter, with its bandwidth determined by $\omega$. 
We note that, similar to the $c=\sqrt{6}$ setting from SIRENs, in practice a scaling factor of $\sqrt{2}$ is applied to the normal activations, as described in Section~\ref{sec:ssn}, which cancels out the $1/2$ factors from the kernels, preserving the variance magnitude. 

Moreover, we can also observe again that the kernel is a function solely of $\Delta x$, in agreement with the shift invariance that is also observed in simple sinusoidal networks. Visualizations of this NNGP are provided in Figure~\ref{fig:ssn_shallow_nngp}.

\begin{figure*}[ht]
  \centering
  \begin{subfigure}[c]{0.3\textwidth}
     \centering
     \includegraphics[width=\textwidth]{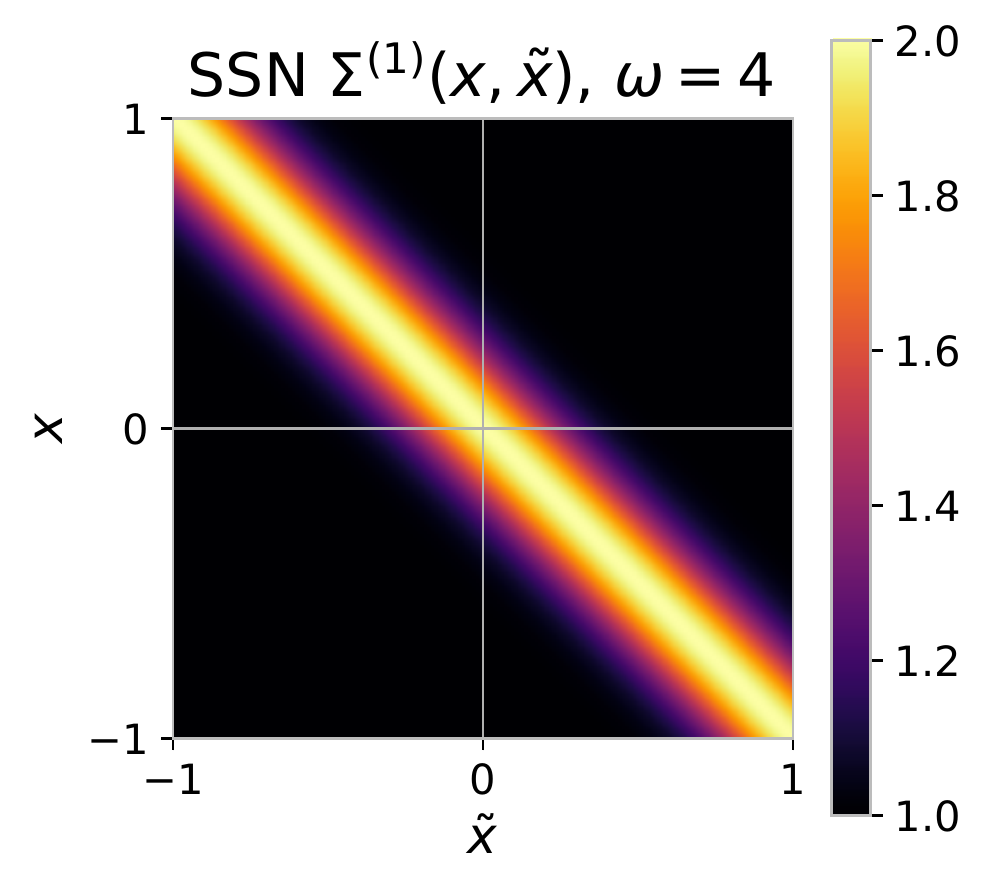}
  \end{subfigure}
    \begin{subfigure}[c]{0.3\textwidth}
     \centering
     \includegraphics[width=\textwidth]{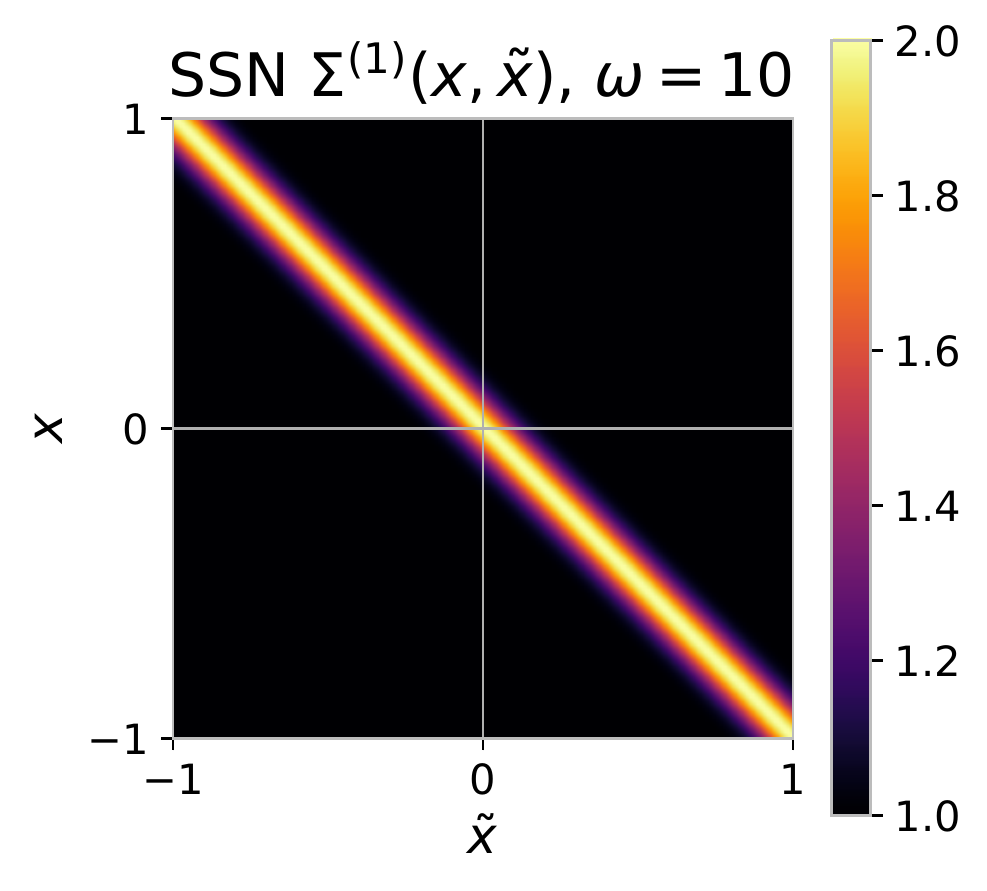}
  \end{subfigure}
    \begin{subfigure}[c]{0.3\textwidth}
     \centering
     \includegraphics[width=\textwidth]{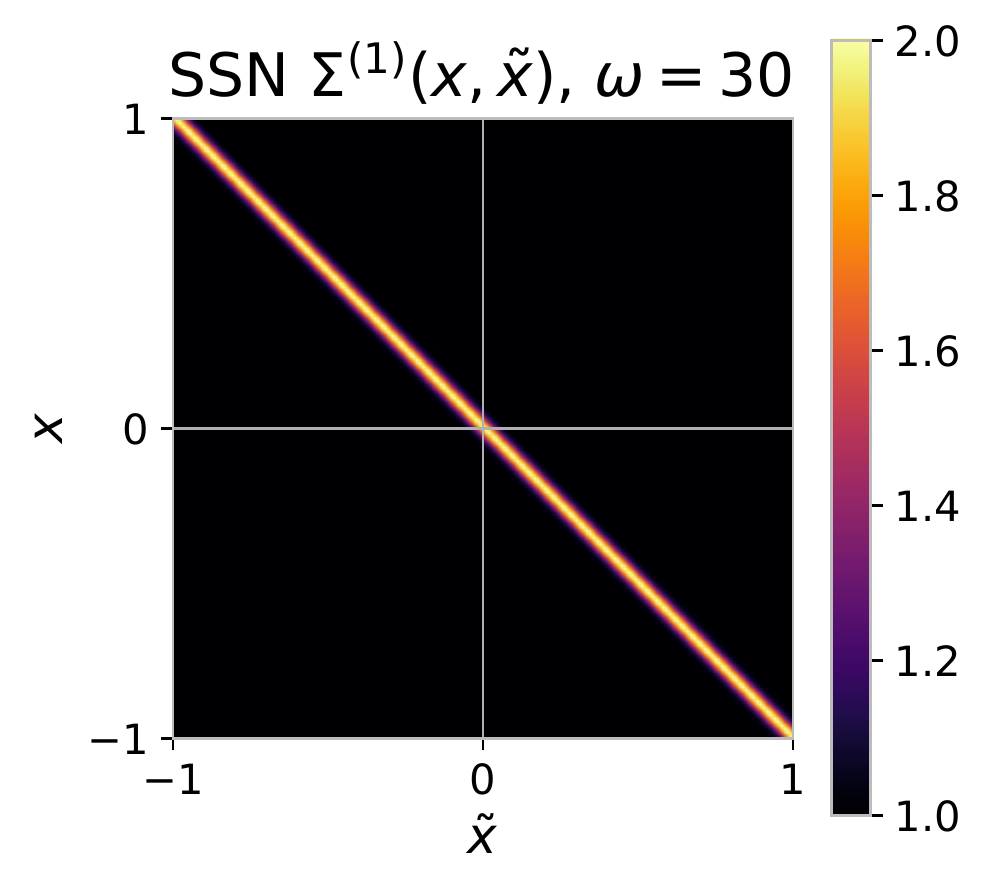}
  \end{subfigure}

  \begin{subfigure}[c]{0.3\textwidth}
     \centering
     \includegraphics[width=\textwidth]{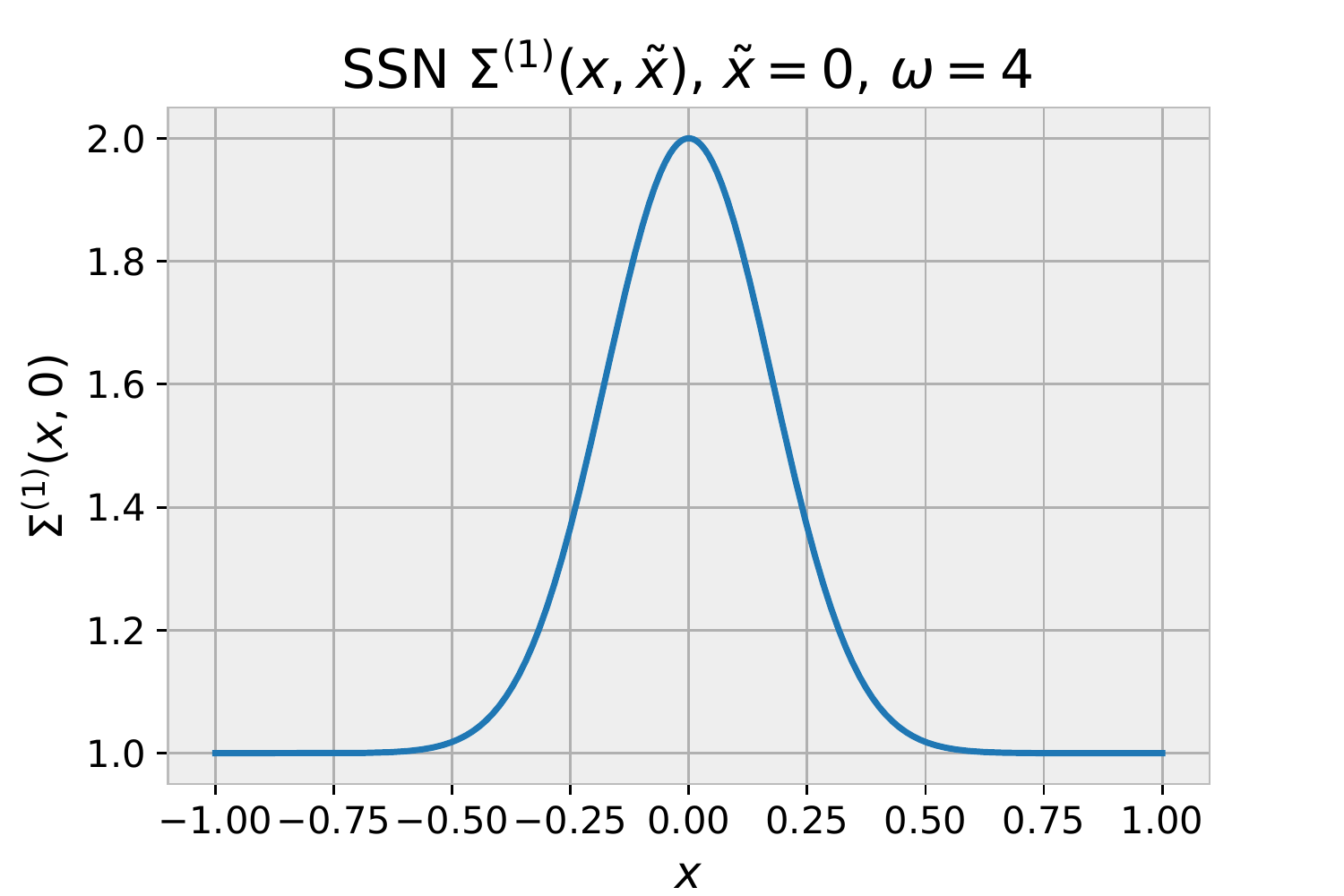}
  \end{subfigure}
    \begin{subfigure}[c]{0.3\textwidth}
     \centering
     \includegraphics[width=\textwidth]{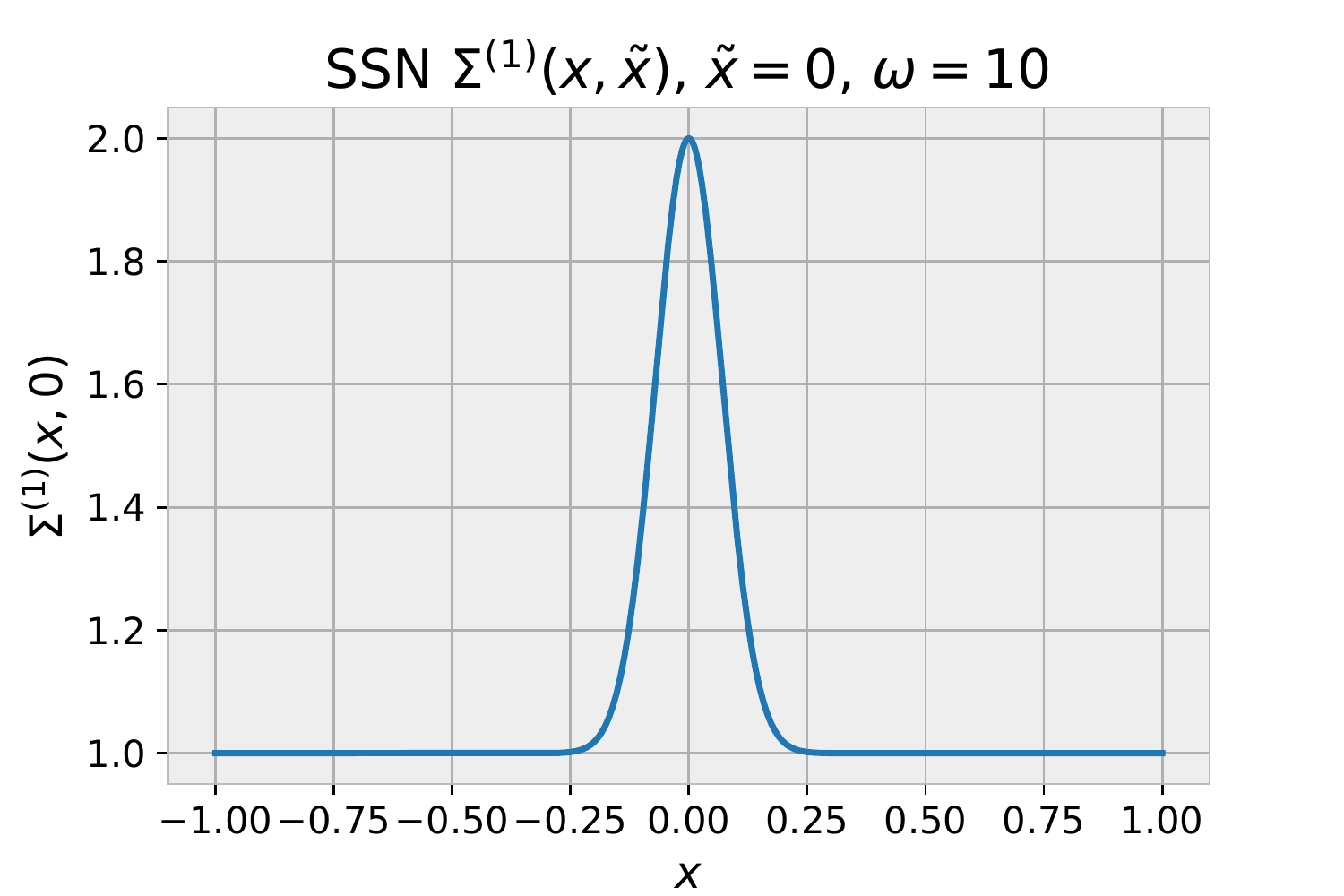}
  \end{subfigure}
  \begin{subfigure}[c]{0.3\textwidth}
     \centering
     \includegraphics[width=\textwidth]{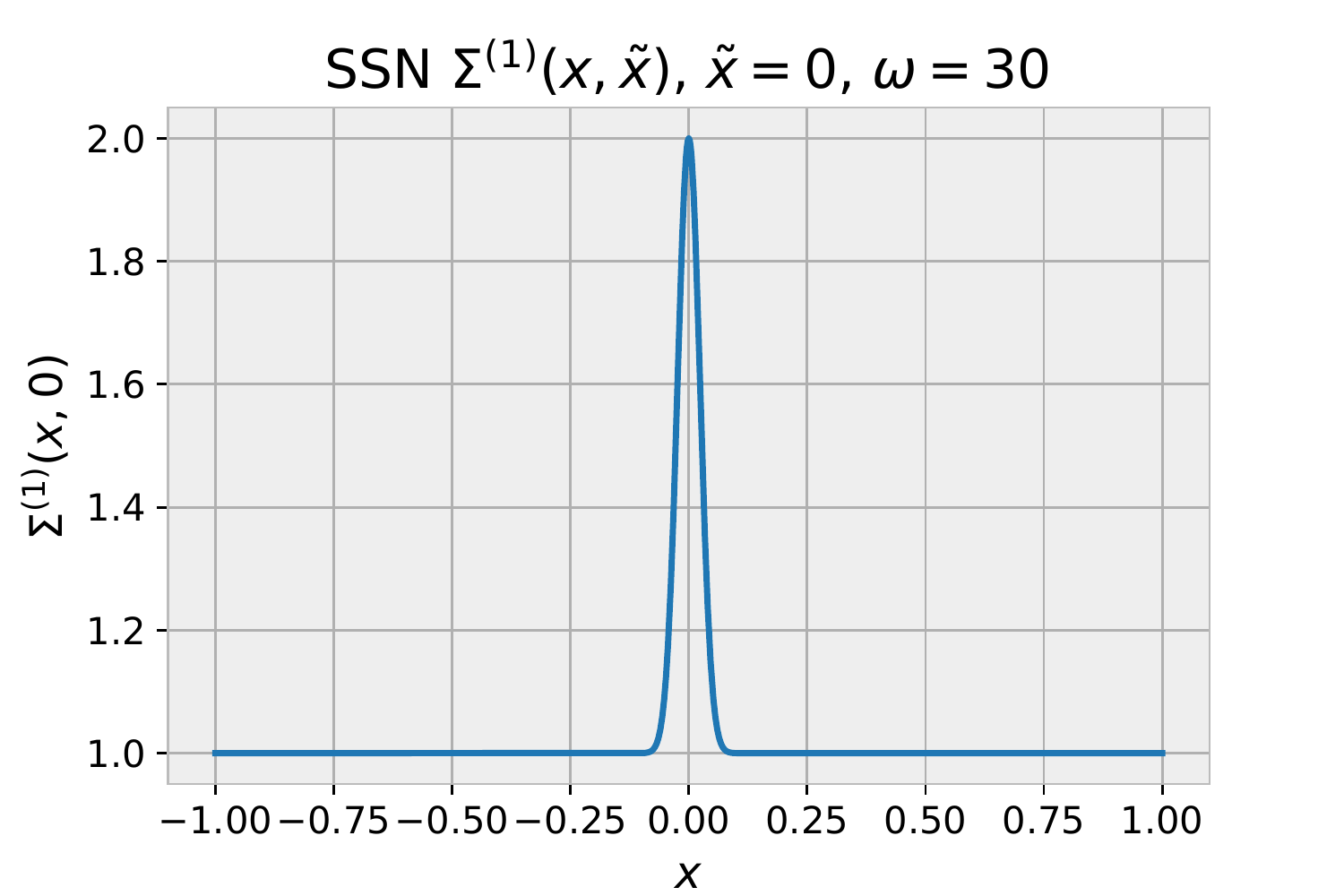}
  \end{subfigure}
  
  \caption{The NNGP for SSN at different $\omega$ values. The top row shows the kernel values for pairs $(x, \tx) \in [-1, 1]^2$. Bottom row shows a slice at fixed $\tx = 0$.}
  \label{fig:ssn_shallow_nngp}
\end{figure*}

We will now proceed to derive the NTK, which requires first obtaining $\dot{\Sigma}$.

\begin{lemma}
\label{lm:cos_nngp}
    For $\omega \in \bbR$, $\dot{\Sigma}^{(1)}(x, \tilde{x}): \bbR^{n_0} \times \bbR^{n_0} \rightarrow \bbR$
    is given by
    \begin{align*}
        \dot{\Sigma}^{(1)}(x, \tilde{x}) = \frac{1}{2}  \left( e^{-\frac{\omega^2}{2} \|x-\tx\|_2^2} + e^{-\frac{\omega^2}{2} \|x+\tx\|_2^2} e^{-2\omega^2} \right) + 1.
    \end{align*}
\end{lemma}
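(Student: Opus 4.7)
The plan is to mirror the proof of Theorem~\ref{th:sin_nngp} almost verbatim, swapping $\sin$ for $\cos$ and tracking the resulting sign changes. First I would invoke the same CLT argument to reduce the claim to computing the expectation
\[
\Exp{\cos{\left( \omega \left( w^T x + b \right) \right)} \cos{\left( \omega \left( w^T \tx + b \right) \right)}}{w \sim \ccN(0, I_{n_0}),\, b \sim \ccN(0,1)},
\]
possibly plus a $\sigma_b^2 = 1$ contribution from the outer-layer bias (accounting for the $+1$ in the statement, exactly as in the NNGP derivation).

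Next I would expand via the identity $\cos(\theta) = \tfrac{1}{2}(e^{i\theta} + e^{-i\theta})$. The key structural difference from the sine case is that $\cos(a)\cos(b)$ expands into four complex exponentials with a uniform $+\tfrac{1}{4}$ coefficient, whereas $\sin(a)\sin(b)$ expanded to $-\tfrac{1}{4}$ times a sum with alternating signs. After distributing the expectation over $w$ and $b$ independently (by independence of the weights and bias), each of the four terms factors into a product $\Exp{e^{i \omega (w^T v)}}{w} \cdot \Exp{e^{\pm 2 i \omega b}}{b}$ or $\Exp{e^{i \omega (w^T v)}}{w} \cdot 1$, where $v \in \{x+\tx,\, x-\tx,\, \tx-x,\, -x-\tx\}$.

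I would then apply Proposition~\ref{pr:exp_lemma} directly to each $w$-expectation, yielding $e^{-\omega^2 \|v\|_2^2/2}$, and the standard Gaussian moment generating function to each $b$-expectation, yielding $e^{-2\omega^2}$. Collecting the four terms, the two ``difference'' terms (those involving $x-\tx$ and $\tx-x$) combine into $\tfrac{1}{2} e^{-\omega^2 \|x-\tx\|_2^2/2}$, and the two ``sum'' terms combine into $\tfrac{1}{2} e^{-\omega^2 \|x+\tx\|_2^2/2} e^{-2\omega^2}$. Note that both come out with a plus sign, in contrast to the minus sign between them in Theorem~\ref{th:sin_nngp}; this is precisely the single substantive sign flip induced by the $\sin \to \cos$ replacement.

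There is no substantive obstacle: the argument is entirely mechanical, and all the analytic work has already been carried out in Theorem~\ref{th:sin_nngp} and Proposition~\ref{pr:exp_lemma}. The only place where care is required is the bookkeeping of signs when expanding $\cos(a)\cos(b)$ through the exponential identity, since the difference from the sine case lives entirely in this step and determines the $+$ rather than $-$ between the two exponentials in the final expression.
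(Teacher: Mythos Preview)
Your proposal is correct and takes essentially the same approach as the paper: the paper's own proof says only that it ``follows the same pattern as Theorem~\ref{th:sin_nngp}, with the only difference being a few sign changes after the exponential expansion of the trigonometric functions, due to the different identities for sine and cosine.'' Your writeup spells out those sign changes explicitly and correctly identifies that the $\cos(a)\cos(b)$ expansion yields $+\tfrac{1}{4}$ on all four exponentials, producing the $+$ between the two Gaussian terms rather than the $-$ in the sine case.
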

\begin{proof}
    The proof follows the same pattern as Theorem~\ref{th:sin_nngp}, with the only difference being a few sign changes after the exponential expansion of the trigonometric functions, due to the different identities for sine and cosine. 
    
\end{proof}

\begin{corollary}
    \textbf{Shallow SSN NTK.} For a simple sinusoidal network with a single hidden layer $f^{(1)}: \bbR^{n_0} \rightarrow \bbR^{n_2}$ following Definition~\ref{def:ntk_nn}, its neural tangent kernel (NTK), as defined in Theorem~\ref{th:full_ntk}, is given by 
    \begin{align*}
        \Theta^{(1)}(x, \tx) &= \left( \omega^2 \left( x^T \tx + 1 \right) \right) \left[  \frac{1}{2}  \left( e^{-\frac{\omega^2}{2} \|x-\tx\|_2^2} + e^{-\frac{\omega^2}{2} \|x+\tx\|_2^2} e^{-2\omega^2} \right) + 1 \right] \\
                &\quad\quad\quad + \frac{1}{2}  \left( e^{-\frac{\omega^2}{2} \|x-\tx\|_2^2} - e^{-\frac{\omega^2}{2} \|x+\tx\|_2^2} e^{-2\omega^2} \right) + 1 \\
            &= \frac{1}{2}  \left( \omega^2 \left( x^T \tx + 1 \right) + 1 \right) e^{-\frac{\omega^2}{2} \|x-\tx\|_2^2}  \\
                &\quad\quad\quad - \frac{1}{2} \left( \omega^2 \left( x^T \tx + 1 \right) - 1 \right) e^{-\frac{\omega^2}{2} \|x+\tx\|_2^2} e^{-2\omega^2} +  \omega^2 \left( x^T \tx + 1 \right)  + 1.
    \end{align*}
\end{corollary}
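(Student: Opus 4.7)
The plan is to apply the recursive NTK formula from Theorem~\ref{th:full_ntk} at depth $L=1$, which gives
\[
\Theta^{(1)}(x,\tx) \;=\; \Theta^{(0)}(x,\tx)\,\dot{\Sigma}^{(1)}(x,\tx) \;+\; \Sigma^{(1)}(x,\tx),
\]
and then substitute the three ingredients that have already been computed earlier in this appendix. Specifically, the base case of Theorem~\ref{th:full_ntk} directly supplies $\Theta^{(0)}(x,\tx) = \omega^2 (x^T\tx + 1)$; Theorem~\ref{th:sin_nngp} provides the NNGP $\Sigma^{(1)}$; and Lemma~\ref{lm:cos_nngp} supplies $\dot{\Sigma}^{(1)}$, the cosine analogue that only differs from $\Sigma^{(1)}$ by the sign of the $e^{-2\omega^2}$ term (plus the $+1$ bias contribution).

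Once these are substituted, the remainder of the argument is algebraic bookkeeping. I would first distribute $\omega^2(x^T\tx+1)$ through the bracket containing $\dot{\Sigma}^{(1)}$, producing a term proportional to $e^{-\frac{\omega^2}{2}\|x-\tx\|_2^2}$, a term proportional to $e^{-\frac{\omega^2}{2}\|x+\tx\|_2^2}e^{-2\omega^2}$, and a leftover $\omega^2(x^T\tx+1)$ from the $+1$ inside the bracket. Then I add $\Sigma^{(1)}$, which contributes the same two exponential terms (with a sign flip on the $e^{-2\omega^2}$ piece, from Theorem~\ref{th:sin_nngp}) plus another $+1$. Collecting the coefficients of $e^{-\frac{\omega^2}{2}\|x-\tx\|_2^2}$ yields $\tfrac{1}{2}\omega^2(x^T\tx+1)+\tfrac{1}{2} = \tfrac{1}{2}(\omega^2(x^T\tx+1)+1)$, while the coefficients of $e^{-\frac{\omega^2}{2}\|x+\tx\|_2^2}e^{-2\omega^2}$ combine to $\tfrac{1}{2}\omega^2(x^T\tx+1)-\tfrac{1}{2}=\tfrac{1}{2}(\omega^2(x^T\tx+1)-1)$, with an overall minus sign. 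The non-exponential remainder is exactly $\omega^2(x^T\tx+1)+1$, giving the claimed closed form.

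There is no real obstacle here: all the analytic heavy lifting (applying the CLT to identify the NNGP, and computing the Gaussian-weighted expectations of products of sines and of cosines via Proposition~\ref{pr:exp_lemma}) was already done in Theorem~\ref{th:sin_nngp} and Lemma~\ref{lm:cos_nngp}. The only care required is tracking the signs of the two $e^{-2\omega^2}$-scaled terms (they cancel constructively in the $\|x-\tx\|$ channel and destructively in the $\|x+\tx\|$ channel, which is what produces the asymmetric coefficients $\omega^2(x^T\tx+1)\pm 1$ on the two Gaussian bumps) and including the $+1$ contributions from the bias variance in both $\Sigma^{(1)}$ and $\dot{\Sigma}^{(1)}$. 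Thus the proof reduces to invoking Theorem~\ref{th:full_ntk}, plugging in Theorem~\ref{th:sin_nngp} and Lemma~\ref{lm:cos_nngp}, and simplifying.
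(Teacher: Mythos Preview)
Your proposal is correct and follows exactly the same route as the paper: the paper's proof is the one-line statement that the result ``follows trivially by applying Theorem~\ref{th:sin_nngp} and Lemma~\ref{lm:cos_nngp} to Theorem~\ref{th:full_ntk},'' and you have simply spelled out the substitution and the subsequent algebraic regrouping in more detail than the paper bothers to.
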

\begin{proof}
    Follows trivially by applying Theorem~\ref{th:sin_nngp} and Lemma~\ref{lm:cos_nngp} to Theorem~\ref{th:full_ntk}.
\end{proof}

We again note the vanishing factor $e^{-2\omega^2}$, which leaves us with 
\begin{align}
\label{eq:ssn_shallow_ntk}
    \frac{1}{2}  \left( \omega^2 \left( x^T \tx + 1 \right) + 1 \right) e^{-\frac{\omega^2}{2} \|x-\tx\|_2^2} + \omega^2 \left( x^T \tx + 1 \right)  + 1.
\end{align}
As with the SIREN before, this NTK is still of the same form as its corresponding NNGP. 
While again we have additional linear terms $x^T\tx$ in the NTK compared to the NNGP, in this case as well the kernel preserves its strong diagonal. 
It is still close to a Gaussian kernel, with its bandwidth determined directly by $\omega$.
We demonstrate this in Figure~\ref{fig:ssn_shallow_ntk}, where the NTK for different values of $\omega$ is shown.
Additionally, we also plot a pure Gaussian kernel with variance $\omega^2$, scaled to match the maximum and minimum values of the NTK.
We can observe the NTK kernel closely matches the Gaussian.
Moreover, we can also observe that, at $\tx = 0$ the maximum value is predicted by $k \approx \omega^2 / 2$, as expected from the scaling factors in the kernel in Equation~\ref{eq:ssn_shallow_ntk}.

\looseness=-1 This NTK suggests that training a simple sinusoidal network is approximately equivalent to performing kernel regression with a Gaussian kernel, a low-pass filter, with its bandwidth defined by $\omega$. 

We note that even though this sinusoidal network kernel approximates a Gaussian kernel, an actual Gaussian kernel can be recovered if a combination of sine and cosine activations are employed, as demonstrated in \citet{tsuchida_results_2020} (Proposition 18). 

\begin{figure*}[ht]
  \centering
  \begin{subfigure}[c]{0.3\textwidth}
     \centering
     \includegraphics[width=\textwidth]{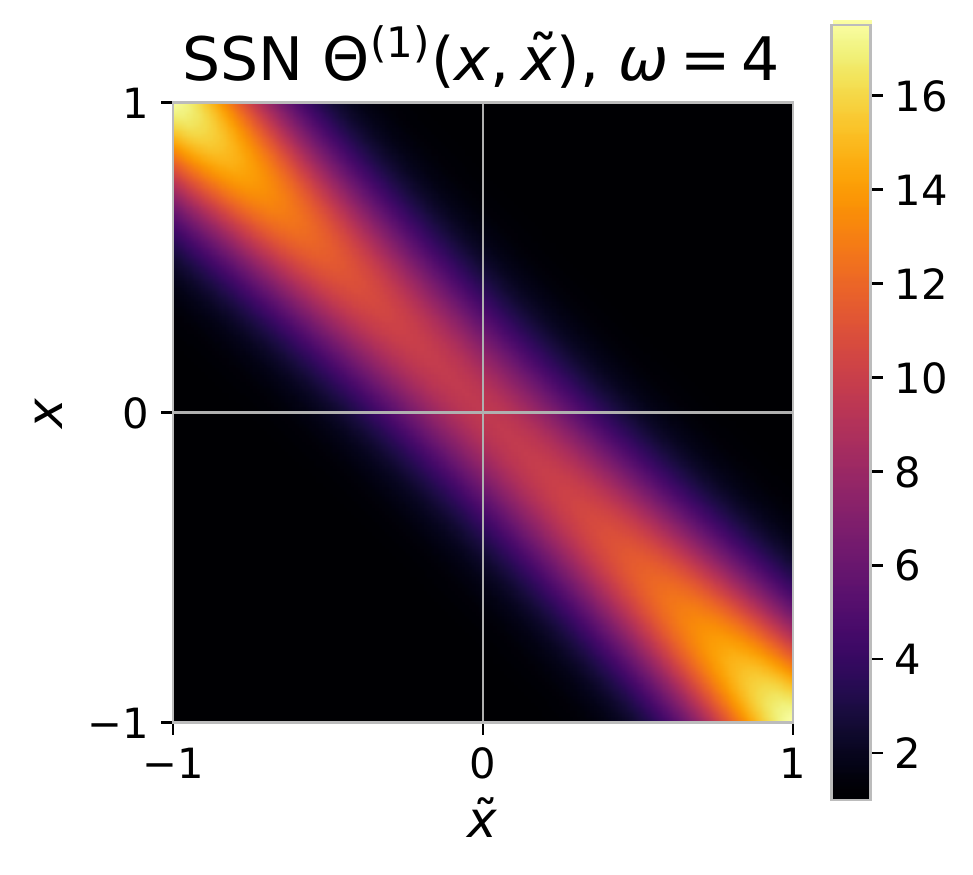}
  \end{subfigure}
    \begin{subfigure}[c]{0.3\textwidth}
     \centering
     \includegraphics[width=\textwidth]{imgs/kernels/ssn_L1_ntk_10_kernel.pdf}
  \end{subfigure}
    \begin{subfigure}[c]{0.3\textwidth}
     \centering
     \includegraphics[width=\textwidth]{imgs/kernels/ssn_L1_ntk_30_kernel.pdf}
  \end{subfigure}

  \begin{subfigure}[c]{0.3\textwidth}
     \centering
     \includegraphics[width=\textwidth]{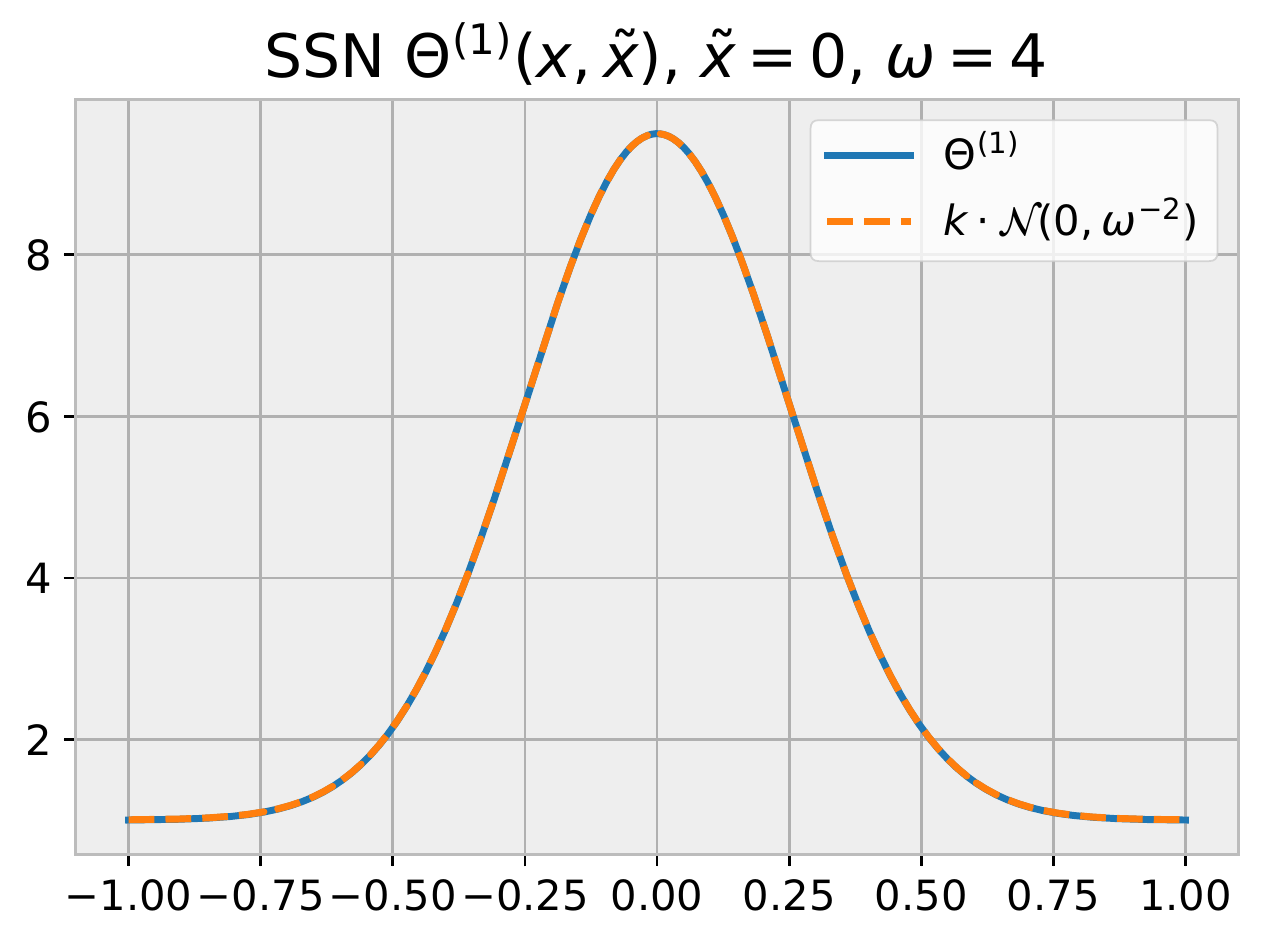}
  \end{subfigure}
    \begin{subfigure}[c]{0.3\textwidth}
     \centering
     \includegraphics[width=\textwidth]{imgs/kernels/ssn_L1_ntk_10_gaussian.pdf}
  \end{subfigure}
  \begin{subfigure}[c]{0.3\textwidth}
     \centering
     \includegraphics[width=\textwidth]{imgs/kernels/ssn_L1_ntk_30_gaussian.pdf}
  \end{subfigure}
  
  \caption{The NTK for SSN at different $\omega$ values. The top row shows the kernel values for pairs $(x, \tx) \in [-1, 1]^2$. Bottom row shows a slice at fixed $\tx = 0$, together with a Gaussian kernel scaled to match the maximum and minimum values of the NTK.}
  \label{fig:ssn_shallow_ntk}
\end{figure*}

\subsection{Deep sinusoidal networks}
\label{sec:ntk_deep}

We will now look at the full NNGP and NTK for sinusoidal networks of arbitrary depth. As we will see, due to the recursive nature of these kernels, for networks deeper than the ones analyzed in the previous section, their full unrolled expressions quickly become intractable intuitively, especially for the NTK. Nevertheless, these kernels can still provide some insight, into the behavior of their corresponding networks. Moreover, despite their symbolic complexity, we will also demonstrate empirically that the resulting kernels can be approximated by simple Gaussian kernels, even for deep networks.

\subsubsection{Simple sinusoidal network}

As demonstrated in the previous section, simple sinusoidal networks produce simpler NNGP and NTK kernels due to their Gaussian initialization. We thus begin this section by now analyzing SSNs first, starting with their general NNGP.

\begin{theorem}
\label{th:full_sin_nngp}
    \textbf{SSN NNGP.} For a simple sinusoidal network with $L$ hidden layers $f^{(L)}: \bbR^{n_0} \rightarrow \bbR^{n_{L+1}}$  following Definition~\ref{def:ntk_nn}, as the size of the hidden layers $n_1, \dots, n_L \rightarrow \infty$ sequentially, $f^{(L)}$ tends (by law of large numbers) to the neural network Gaussian Process (NNGP) with covariance $\Sigma^{(L)}(x, \tx)$, recursively defined as
    \begin{align*}
        \Sigma^{(0)}(x, \tx) &= \omega^2 \left( x^T\tx + 1 \right) \\
        \Sigma^{(L)}(x, \tx) &= \frac{1}{2}  e^{-\frac{1}{2} \left( \Sigma^{(L-1)}(x, x) + \Sigma^{(L-1)}(\tx, \tx) \right)} \left( e^{\Sigma^{(L-1)}(x, \tx)} - e^{-\Sigma^{(L-1)}(x, \tx)}  \right) + 1.
    \end{align*}
\end{theorem}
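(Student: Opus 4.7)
The plan is to proceed by induction on the depth $L$, following the standard NNGP argument \citep{lee_deep_2018} used already for the shallow case in Theorem~\ref{th:sin_nngp}. The only genuinely new ingredient is a clean closed form for the expectation $\E[\sin(u)\sin(v)]$ where $(u,v)$ is a centered bivariate Gaussian with an arbitrary covariance matrix, not just one of the specific forms used in the shallow proof.

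\textbf{Base case.} For $L=0$, the ``network'' is the affine map $\omega(W^{(0)}x + b^{(0)})$ with Gaussian $W^{(0)}, b^{(0)}$, so its coordinates are jointly Gaussian with covariance exactly $\omega^2(x^T\tx + 1)$, matching $\Sigma^{(0)}$.

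\textbf{Inductive step.} Assume that as $n_1,\dots,n_{L-1}\to\infty$ sequentially, $f^{(L-1)}$ converges in distribution to a GP with covariance $\Sigma^{(L-1)}$. Condition on $f^{(L-1)}$. Then each coordinate
\[
    f^{(L)}(x)_j = \sum_{k=1}^{n_L} \frac{W^{(L)}_{jk}}{\sqrt{n_L}}\sin\!\left(f^{(L-1)}(x)_k\right) + b^{(L)}_j
\]
is a sum of independent, zero-mean, bounded-variance terms (boundedness comes from $\sin$, and $W^{(L)}_{jk}\sim\ccN(0,1)$, $b^{(L)}_j\sim\ccN(0,1)$). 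Sending $n_L\to\infty$ and applying the Lyapunov CLT jointly over any finite collection of inputs gives a Gaussian limit whose covariance, using the inductive hypothesis and the LLN for $\tfrac{1}{n_L}\sum_k \sin(\cdot)\sin(\cdot)$, equals
\[
    \Sigma^{(L)}(x,\tx) \;=\; \Exp{\sin(u)\sin(v)}{(u,v)\sim\ccN\!\left(0,\, \Sigma^{(L-1)}_{x\tx}\right)} + 1,
\]
where $\Sigma^{(L-1)}_{x\tx}$ is the $2\times 2$ covariance matrix with entries $\Sigma^{(L-1)}(x,x)$, $\Sigma^{(L-1)}(\tx,\tx)$, and off-diagonals $\Sigma^{(L-1)}(x,\tx)$. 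The $+1$ comes from the bias variance.

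\textbf{The core calculation.} To match the stated recursion it suffices to evaluate this expectation explicitly. I will use the product-to-sum identity
\[
    \sin(u)\sin(v) = \tfrac{1}{2}\bigl[\cos(u-v) - \cos(u+v)\bigr],
\]
and the fact that for any centered Gaussian scalar $Z$, $\E[\cos Z] = e^{-\tfrac{1}{2}\Var(Z)}$. Since $u-v$ and $u+v$ are centered Gaussians with variances $\Sigma^{(L-1)}(x,x)+\Sigma^{(L-1)}(\tx,\tx)\mp 2\Sigma^{(L-1)}(x,\tx)$, factoring out the common $e^{-\tfrac{1}{2}(\Sigma^{(L-1)}(x,x)+\Sigma^{(L-1)}(\tx,\tx))}$ gives
\[
    \Exp{\sin(u)\sin(v)}{} \;=\; \tfrac{1}{2} e^{-\tfrac{1}{2}(\Sigma^{(L-1)}(x,x)+\Sigma^{(L-1)}(\tx,\tx))}\!\left(e^{\Sigma^{(L-1)}(x,\tx)} - e^{-\Sigma^{(L-1)}(x,\tx)}\right),
\]
which, after adding the bias $+1$, is exactly the claimed $\Sigma^{(L)}$.

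\textbf{Expected obstacles.} The hardest part is not any single algebraic step but being careful about the joint convergence: the sequential limit $n_1\to\infty$, then $n_2\to\infty$, etc., is what lets the conditional CLT argument go through cleanly, and this mirrors the setup of Theorem~\ref{th:full_ntk}. Once that is granted, verifying the shallow case ($L=1$) reduces to instantiating $\Sigma^{(0)}(x,x) = \omega^2(\|x\|^2+1)$ and $\Sigma^{(0)}(x,\tx)=\omega^2(x^T\tx+1)$ in the recursion, which reproduces Theorem~\ref{th:sin_nngp} and provides a consistency check on the formula.
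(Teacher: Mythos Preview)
Your proposal is correct and follows essentially the same inductive argument as the paper. The only cosmetic differences are that the paper takes $L=1$ as the base case (verifying it by rearranging the shallow result of Theorem~\ref{th:sin_nngp} into the recursive form) rather than $L=0$, and for the core expectation the paper expands $\sin$ via $(e^{iu}-e^{-iu})/(2i)$ and invokes Proposition~\ref{pr:exp_lemma}, whereas you use the product-to-sum identity and $\E[\cos Z]=e^{-\Var(Z)/2}$ directly; these are equivalent computations and your route is slightly more direct.
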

\begin{proof}
    We will proceed by induction on the depth $L$, demonstrating the NNGP for successive layers as $n_1, \dots, n_L \rightarrow \infty$ sequentially. To demonstrate the base case $L=1$, let us rearrange $\Sigma^{(1)}$ from Theorem~\ref{th:sin_nngp} in order to express it in terms of inner products,
    \begin{align*}
        \Sigma^{(1)}(x, \tx) &=  \frac{1}{2}  \left( e^{-\frac{\omega^2}{2} \|x-\tx\|_2^2} + e^{-\frac{\omega^2}{2} \|x+\tx\|_2^2} e^{-2\omega^2} \right) + 1 \\
            &= \frac{1}{2} \left[ e^{-\frac{\omega^2}{2} \left( x^Tx - 2 x^T\tx + \tx^T\tx \right)} - e^{-\frac{\omega^2}{2} \left( x^Tx + 2 x^T\tx + \tx^T\tx \right)} e^{-2\omega^2}  \right]  + 1 \\
            &= \frac{1}{2} \left[ e^{-\frac{1}{2} \left[\omega^2 \left(x^Tx + 1 \right) + \omega^2 \left( \tx^T\tx + 1 \right) \right] + \omega^2 \left( x^T\tx + 1 \right)} - e^{-\frac{1}{2} \left[ \omega^2 \left(x^Tx + 1 \right) + \omega^2 \left( \tx^T\tx + 1 \right) \right] - \omega^2 \left( x^T\tx + 1 \right) } \right]  + 1.
    \end{align*}
    Given the definition of $\Sigma^{(0)}$, this is equivalent to
    \begin{align*}
        \frac{1}{2}  e^{-\frac{1}{2} \left( \Sigma^{(0)}(x, x) + \Sigma^{(0)}(\tx, \tx) \right)} \left( e^{\Sigma^{(0)}(x, \tx)} - e^{-\Sigma^{(0)}(x, \tx)}  \right) + 1,
    \end{align*}
    which concludes this case.
    
    Now given the inductive hypothesis, as $n_1, \dots, n_{L-1} \rightarrow \infty$ we have that the first $L-1$ layers define a network $f^{(L-1)}$ with NNGP given by $\Sigma^{(L-1)}(x, \tx)$.
    Now it is left to show that as $n_L \rightarrow \infty$, we get the NNGP given by $\Sigma^{(L)}$. Following the same argument in Theorem~\ref{th:sin_nngp}, the network
    \begin{align*}
        f^{(L)}(x) = W^{(L)} \frac{1}{\sqrt{n_L}} \sin{\left( f^{(L-1)} \right)} + b^{(L)}
    \end{align*}
    constitutes a Gaussian process given the outputs of the previous layer, due to the distributions of $W^{(L)}$ and $b^{(L)}$. Its covariance is given by $\sigma^2_{W} \Sigma^{(L)} + \sigma^2_{b} = \Sigma^{(L)} + 1$, where
    \begin{align*}
        \Sigma^{(L)}(x, \tx) &= \lim_{n_L \rightarrow \infty} \left[ \frac{1}{n_L} \left\langle  \sin{\left( f^{(L-1)}(x) \right)}, \sin{\left( f^{(L-1)}(\tx) \right)} \right\rangle \right] \\
            &= \lim_{n_L \rightarrow \infty} \left[ \frac{1}{n_L} \sum_{j=1}^{n_L}  \sin{\left( f^{(L-1)}(x) \right)}_j \sin{\left( f^{(L-1)}(\tx) \right)}_j \right].
    \end{align*}
    By inductive hypothesis, $f^{(L-1)}$ is a Gaussian process $\Sigma^{(L-1)}(x, \tx)$. Thus by the LLN the limit above equals
    \begin{align*}
        \Exp { \sin{\left( u \right)} \sin{\left( v \right)}}{(u, v) \sim \ccN\left(0, \Sigma^{(L-1)}(x,\tx)\right)}.
    \end{align*}
    Omitting the distribution from the expectation for brevity and expanding the exponential definition of sine, we have
    \begin{align*}
        \Exp{\frac{1}{2i} \left(e^{iu} - e^{-iu} \right) \frac{1}{2i} \left( e^{iv}-e^{-iv} \right)}{} &= -\frac{1}{4} \left[ \Exp{e^{i\left(u+v\right)}}{} - \Exp{e^{i\left(u-v\right)}}{} - \Exp{e^{-i\left(u-v\right)}}{}+ \Exp{e^{-i\left(u+v\right)}}{} \right]. \\
    \end{align*}
    Since $u$ and $v$ are jointly Gaussian, $p = u+v$ and $m = u-v$ are also Gaussian, with mean $0$ and variance
    \begin{align*}
        \sigma_p^2 &= \sigma_u^2 + \sigma_v^2 + 2\,\Cov{[u,v]} = \Sigma^{(L-1)}(x, x) + \Sigma^{(L-1)}(\tx, \tx) + 2\, \Sigma^{(L-1)}(x, \tx), \\ \bigskip
        \sigma_m^2 &= \sigma_u^2 + \sigma_v^2 - 2\,\Cov{[u,v]} = \Sigma^{(L-1)}(x, x) + \Sigma^{(L-1)}(\tx, \tx) - 2\, \Sigma^{(L-1)}(x, \tx).
    \end{align*}
    We can now rewriting the expectations in terms of normalized variables
    \begin{align*}
        -\frac{1}{4} \left[ \Exp{e^{i \sigma_p z}}{z \sim \ccN(0, 1)} - \Exp{e^{i \sigma_m z}}{z \sim \ccN(0, 1)} - \Exp{e^{-i \sigma_m z}}{z \sim \ccN(0, 1)}+ \Exp{e^{-i \sigma_p z}}{z \sim \ccN(0, 1)} \right].
    \end{align*}
    Applying Proposition~\ref{pr:exp_lemma} to each expectation, we get
    \begin{align*}
        \frac{1}{2} &\left[  e^{-\frac{1}{2} \sigma_m^2} - e^{-\frac{1}{2} \sigma_p^2} \right]  \\
            &= \frac{1}{2} \left[ e^{-\frac{1}{2} \left( \Sigma^{(L-1)}(x, x) + \Sigma^{(L-1)}(\tx, \tx) - 2\, \Sigma^{(L-1)}(x, \tx) \right)} - e^{-\frac{1}{2} \left( \Sigma^{(L-1)}(x, x) + \Sigma^{(L-1)}(\tx, \tx) + 2\, \Sigma^{(L-1)}(x, \tx)  \right)} \right] \\
            &=  \frac{1}{2} e^{-\frac{1}{2} \left( \Sigma^{(L-1)}(x, x) + \Sigma^{(L-1)}(\tx, \tx) \right)} \left( e^{\Sigma^{(L-1)}(x, \tx)} - e^{-\Sigma^{(L-1)}(x, \tx))} \right)
    \end{align*}
\end{proof}

Unrolling the definition beyond $L=1$ leads to expressions that are difficult to parse. However, without unrolling, we can rearrange the terms in the NNGP above as
\begin{align*}
    \Sigma^{(L)}(x,\tx) &= \frac{1}{2}  e^{-\frac{1}{2} \left( \Sigma^{(L-1)}(x, x) + \Sigma^{(L-1)}(\tx, \tx) \right)} \left( e^{\Sigma^{(L-1)}(x, \tx)} - e^{-\Sigma^{(L-1)}(x, \tx)}  \right) + 1 \\
        &= \frac{1}{2} \left[ e^{-\frac{1}{2} \left( \Sigma^{(L-1)}(x, x) -2\Sigma^{(L-1)}(x,\tx) + \Sigma^{(L-1)}(\tx, \tx) \right)} - e^{-\frac{1}{2} \left( \Sigma^{(L-1)}(x, x) +2\Sigma^{(L-1)}(x,\tx) + \Sigma^{(L-1)}(\tx, \tx) \right)}  \right] + 1.
\end{align*}
Since the covariance \emph{matrix} $\Sigma^{(L-1)}$ is positive semi-definite, we can observe that the exponent expressions can be reformulated into a quadratic forms analogous to the ones in Theorem~\ref{th:sin_nngp}. We can thus observe that the same  structure is essentially preserved through the composition of layers, except for the $\omega$ factor present in the first layer. 
Moreover, given this recursive definition, since the NNGP at any given depth $L$ is a function only of the preceding kernels, the resulting kernel will also be shift-invariant. 


  

Let us now derive the $\dot{\Sigma}$ kernel, required for the NTK.

\begin{lemma}
\label{lm:full_cos_nngp}
    For $\omega \in \bbR$, $\dot{\Sigma}^{(L)}(x, \tilde{x}): \bbR^{n_0} \times \bbR^{n_0} \rightarrow \bbR$, is given by
    \begin{align*}
        \dot{\Sigma}^{(L)}(x, \tx) &= \frac{1}{2}  e^{-\frac{1}{2} \left( \Sigma^{(L-1)}(x, x) + \Sigma^{(L-1)}(\tx, \tx) \right)} \left( e^{\Sigma^{(L-1)}(x, \tx)} + e^{-\Sigma^{(L-1)}(x, \tx)}  \right) + 1.
    \end{align*}
\end{lemma}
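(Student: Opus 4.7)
The plan is to compute $\dot{\Sigma}^{(L)}(x,\tx)$ directly from its definition
\[
\dot{\Sigma}^{(L)}(x,\tx) \;=\; \Exp{\cos(u)\cos(v)}{(u,v) \sim \ccN(0,\Sigma^{(L-1)}(x,\tx))} + 1,
\]
where the trailing $+1$ reflects the $\sigma_b^2 = 1$ bias convention already adopted for the shallow case in Lemma~\ref{lm:cos_nngp}. The argument parallels Theorem~\ref{th:full_sin_nngp} almost verbatim, the sole difference being a sign change traceable to the Euler identities for cosine versus sine. No induction is needed: $\dot{\Sigma}^{(L)}$ is purely a functional of the already-established $\Sigma^{(L-1)}$, so a single direct calculation suffices.

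First I would expand the cosine product via Euler's formula,
\[
\cos(u)\cos(v) = \tfrac{1}{4}\bigl(e^{i(u+v)} + e^{i(u-v)} + e^{-i(u-v)} + e^{-i(u+v)}\bigr),
\]
noting that all four terms carry positive signs, whereas in the sine expansion used for Theorem~\ref{th:full_sin_nngp} the cross terms $e^{\pm i(u-v)}$ appeared with negative signs. This single sign flip is what ultimately turns the ``$-$'' in the sine kernel into the ``$+$'' in the claimed cosine kernel.

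Next, since $(u,v)$ is jointly Gaussian with covariance $\Sigma^{(L-1)}$, the sum $p = u+v$ and difference $m = u-v$ are centered Gaussian with variances $\sigma_p^2 = \Sigma^{(L-1)}(x,x) + 2\Sigma^{(L-1)}(x,\tx) + \Sigma^{(L-1)}(\tx,\tx)$ and $\sigma_m^2 = \Sigma^{(L-1)}(x,x) - 2\Sigma^{(L-1)}(x,\tx) + \Sigma^{(L-1)}(\tx,\tx)$. Rewriting each of the four exponential expectations through a standard normal and applying Proposition~\ref{pr:exp_lemma} gives $\tfrac{1}{2}\bigl(e^{-\sigma_p^2/2} + e^{-\sigma_m^2/2}\bigr)$. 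Factoring out the common $e^{-\frac{1}{2}(\Sigma^{(L-1)}(x,x) + \Sigma^{(L-1)}(\tx,\tx))}$ and appending the $+1$ bias contribution yields precisely the claimed expression.

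I do not anticipate any genuine obstacle: the computation is entirely mechanical given the worked sine derivation in the excerpt. The only things to be careful about are (i) tracking the sign difference between the sine and cosine Euler expansions, and (ii) consistently including the $+1$ bias term in $\dot{\Sigma}$, matching the convention the paper adopted in Lemma~\ref{lm:cos_nngp}.
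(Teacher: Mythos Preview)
Your proposal is correct and follows essentially the same approach as the paper, which simply states that the proof mirrors Theorem~\ref{th:full_sin_nngp} up to the sign changes arising from the Euler identity for cosine versus sine. You have just spelled out that one-line remark in detail, including the observation that no separate induction is required because $\dot{\Sigma}^{(L)}$ is a direct functional of the already-established $\Sigma^{(L-1)}$.
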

\begin{proof}
    The proof follows the same pattern as Theorem~\ref{th:full_sin_nngp}, with the only difference being a few sign changes after the exponential expansion of the trigonometric functions, due to the different identities for sine and cosine. 
\end{proof}

As done in the previous section, it would be simple to now derive the full NTK for a simple sinusoidal network of arbitrary depth by applying Theorem~\ref{th:full_ntk} with the NNGP kernels from above. However, there is not much to be gained by writing the convoluted NTK expression explicitly, beyond what we have already gleaned from the NNGP above.

Nevertheless, some insight can be gained from the recursive expression of the NTK itself, as defined in Theorem~\ref{th:full_ntk}. First, note that, as before, for practical values of $\omega$, $\dot{\Sigma} \approx \Sigma$, both converging to simply a single Gaussian kernel. Thus, our NTK recursion becomes
\begin{align*}
    \Theta^{(L)}(x, \tx) &\approx \left( \Theta^{(L-1)}(x, \tx) + 1 \right) \Sigma^{(L)}(x, \tx).
\end{align*}
Now, note that when expanded, the form of this NTK recursion is essentially as a product of the Gaussian $\Sigma$ kernels,
\begin{equation}
\begin{aligned}
\label{eq:approx_ntk}
    \Theta^{(L)}(x, \tx) &\approx \left( \left( \dots \left( \left(\Sigma^{(0)}(x, \tx) + 1 \right)  \Sigma^{(1)}(x, \tx)  + 1 \right) \dots \right)\Sigma^{(L-1)}(x, \tx) + 1 \right) \Sigma^{(L)}(x, \tx) \\
        &= \left(\left( \dots \left( \left( \omega^2 \left( x^T \tx + 1 \right) + 1 \right)  \Sigma^{(1)}(x, \tx)  + 1 \right) \dots \right) \Sigma^{(L-1)}(x, \tx) + 1 \right) \Sigma^{(L)}(x, \tx).
\end{aligned}
\end{equation}
We know that the product of two Gaussian kernels is Gaussian and thus the general form of the kernel should be approximately a sum of Gaussian kernels. As long as the magnitude of one of the terms dominates the sum, the overall resulting kernel will be approximately Gaussian. Empirically, we observe this to be the case, with the inner term containing $\omega^2$ dominating the sum, for reasonable values (\textit{e.g.}, $\omega > 1$ and $L < 10$).
In Figure~\ref{fig:ssn_deep_ntks}, we show the NTK for networks of varying depth and $\omega$, together with a pure Gaussian kernel of variance $\omega^2$, scaled to match the maximum and minimum values of the NTK.
We can observe that the NTKs are still approximately Gaussian, with their maximum value approximated by $k \approx \frac{1}{2^L} \omega^2$, as expected from the product of $\omega^2$ and $L$ kernels above. We also observe that the width of the kernels is mainly defined by $\omega$.

\begin{figure*}[ht]
  \centering
  \begin{subfigure}[c]{0.05\textwidth}
     \centering
    \rotatebox{90}{$L=4$}
  \end{subfigure}
  \begin{subfigure}[c]{0.3\textwidth}
     \centering
     \includegraphics[width=\textwidth]{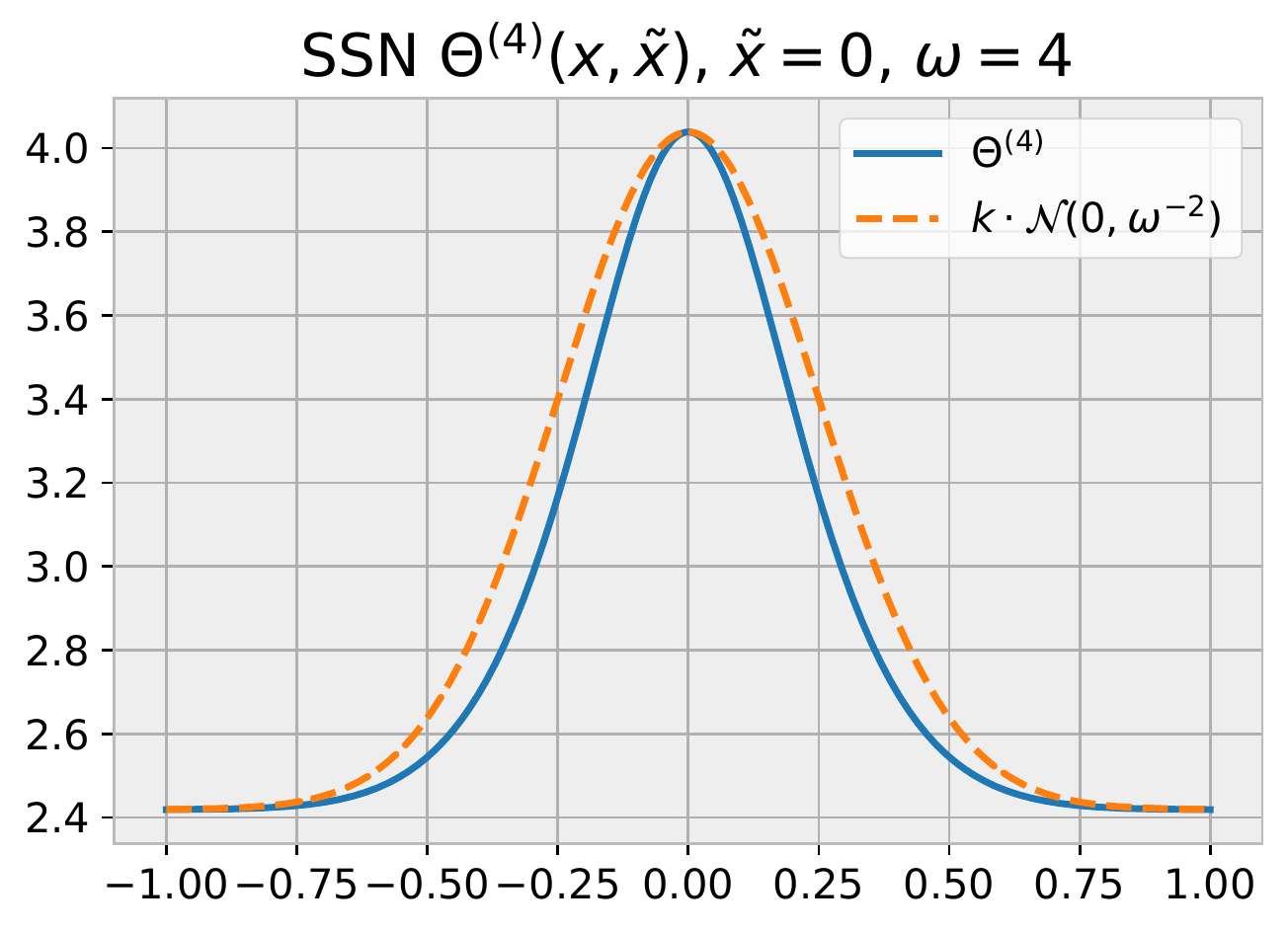}
  \end{subfigure}
  \begin{subfigure}[c]{0.3\textwidth}
     \centering
     \includegraphics[width=\textwidth]{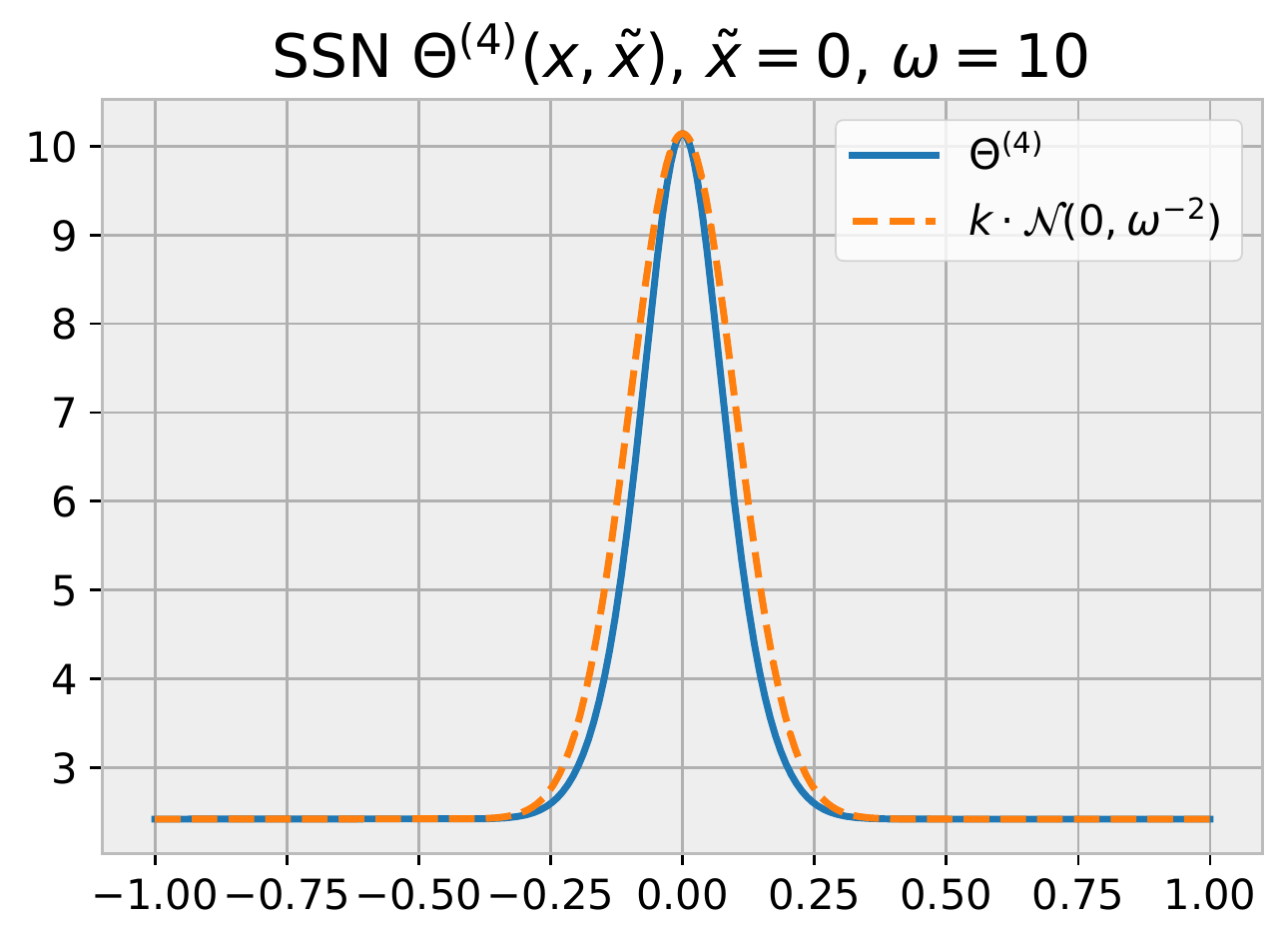}
  \end{subfigure}
  \begin{subfigure}[c]{0.3\textwidth}
     \centering
     \includegraphics[width=\textwidth]{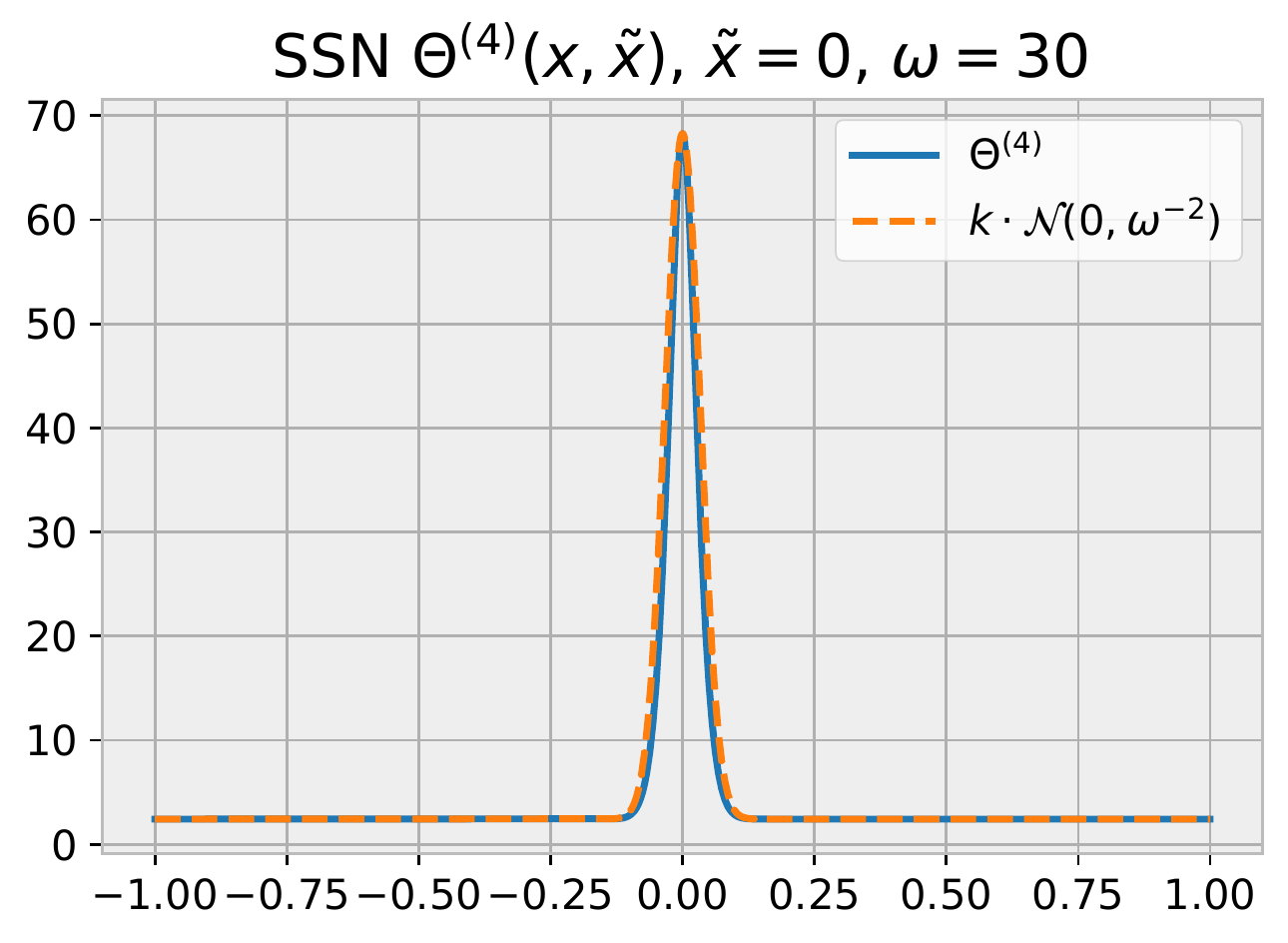}
  \end{subfigure}
  
  \begin{subfigure}[c]{0.05\textwidth}
     \centering
     \rotatebox{90}{$L=6$}
  \end{subfigure}
  \begin{subfigure}[c]{0.3\textwidth}
     \centering
     \includegraphics[width=\textwidth]{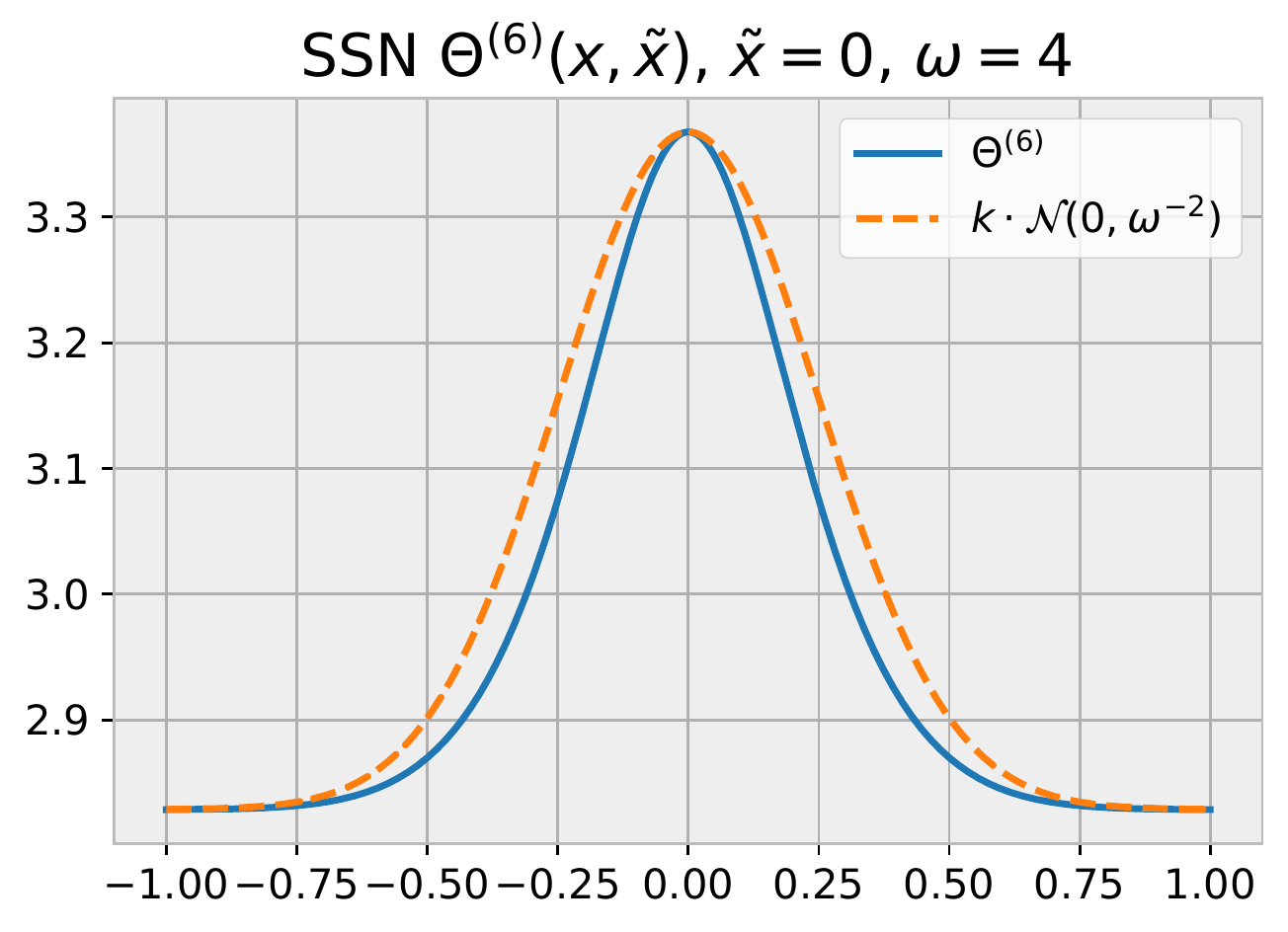}
  \end{subfigure}
  \begin{subfigure}[c]{0.3\textwidth}
     \centering
     \includegraphics[width=\textwidth]{imgs/kernels/ssn_L6_ntk_10_gaussian.pdf}
  \end{subfigure}
  \begin{subfigure}[c]{0.31\textwidth}
     \centering
     \includegraphics[width=\textwidth]{imgs/kernels/ssn_L6_ntk_30_gaussian.pdf}
  \end{subfigure}

  \begin{subfigure}[c]{0.035\textwidth}
     \centering
     \rotatebox{90}{$L=8$}
  \end{subfigure}
  \begin{subfigure}[c]{0.315\textwidth}
     \centering
     \includegraphics[width=\textwidth]{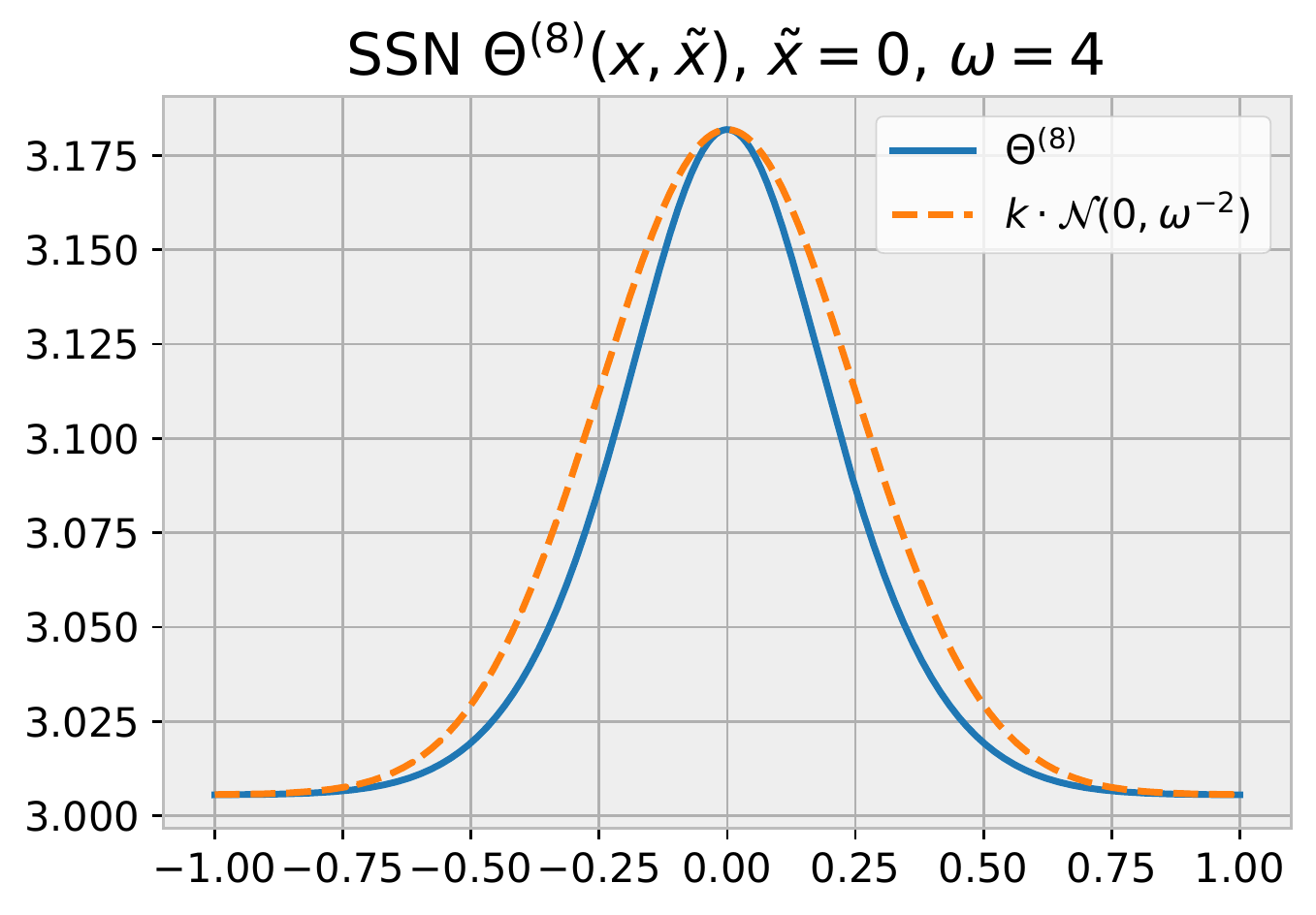}
    \caption*{$\omega=4$}
  \end{subfigure}
    \begin{subfigure}[c]{0.3\textwidth}
     \centering
     \includegraphics[width=\textwidth]{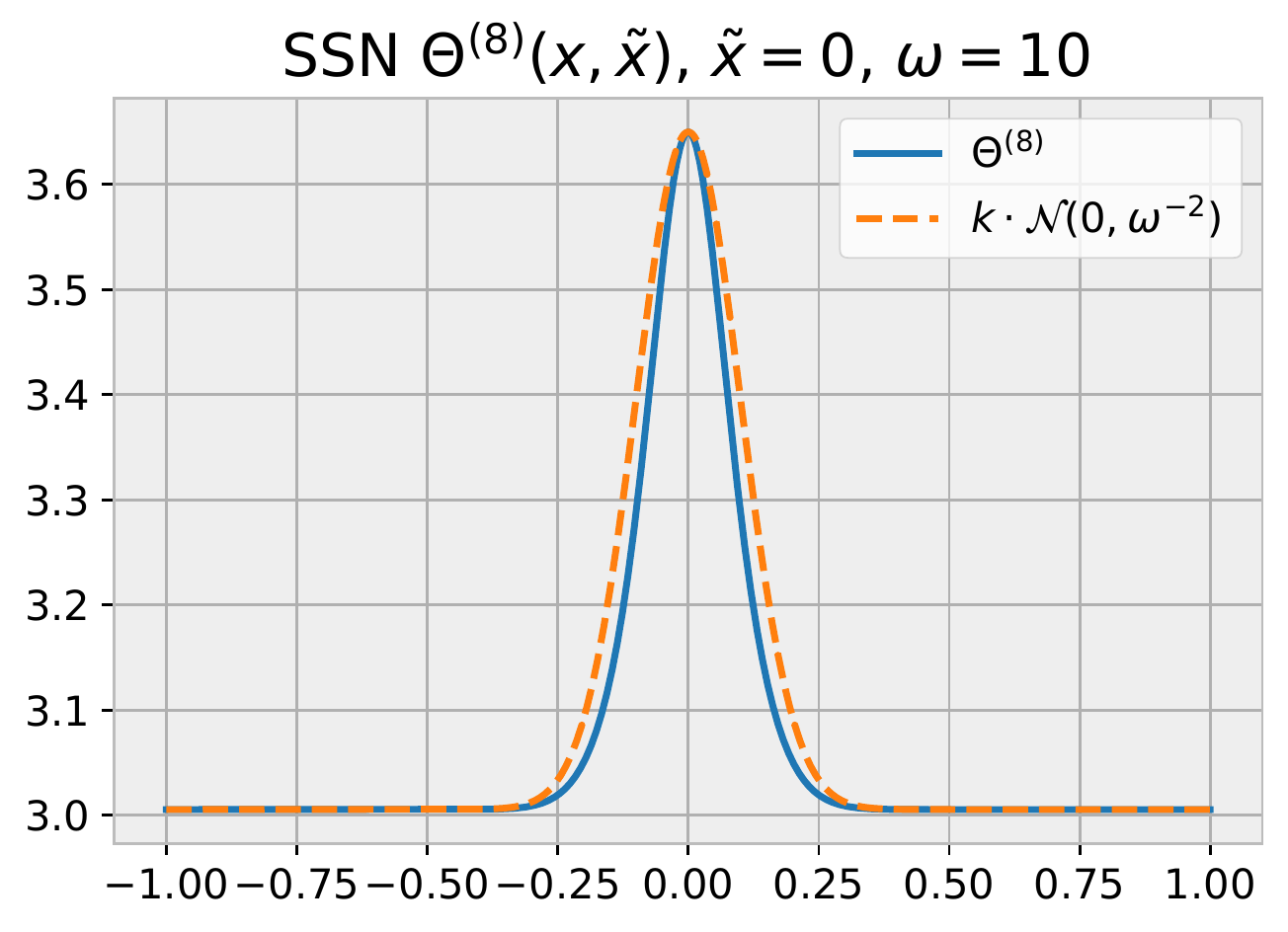}
    \caption*{$\omega=10$}
  \end{subfigure}
  \begin{subfigure}[c]{0.295\textwidth}
     \centering
     \includegraphics[width=\textwidth]{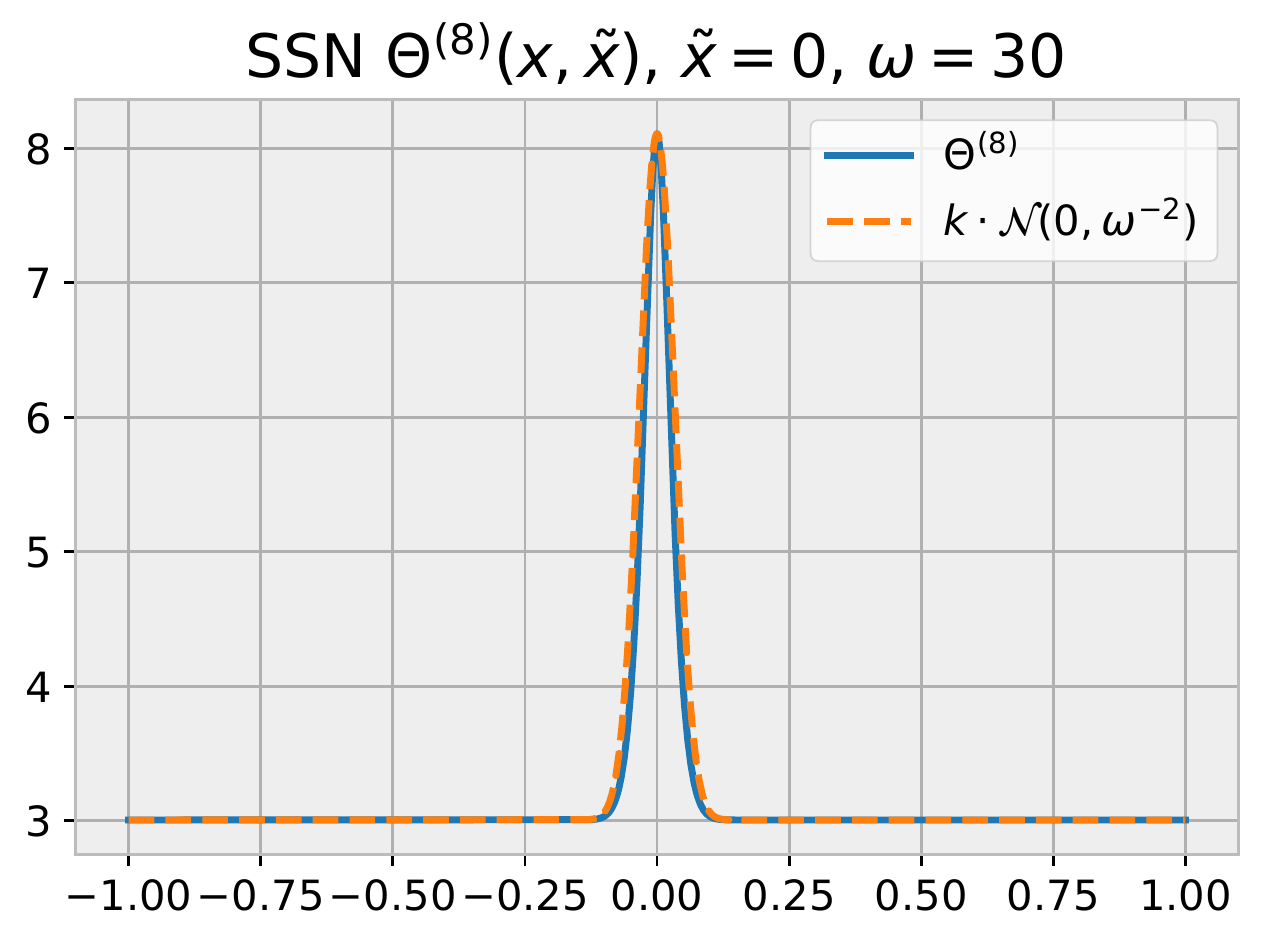}
    \caption*{$\omega=30$}
  \end{subfigure}
  
  \caption{The NTK for SSN at different $\omega$ and network depth ($L$) values. Kernel values at a slice for fixed $\tx = 0$ are shown, together with a Gaussian kernel scaled to match the maximum and minimum values of the NTK.}
  \label{fig:ssn_deep_ntks}
\end{figure*}


    
\subsubsection{SIREN}

For completeness, in this section we will derive the full SIREN NNGP and NTK. As discussed previously, both the SIREN and the simple sinusoidal network have kernels that approximate low-pass filters. Due to the SIREN initialization, its NNGP and NTK were previously shown to have more complex expressions. However, we will show in this section that the $\sinc$ kernel that arises from the shallow SIREN is gradually ``dampened'' as the depth of the network increases, gradually approximating a Gaussian kernel.

\begin{theorem}
\label{th:full_sin_nngp_uniform}
        \textbf{SIREN NNGP.} For a SIREN with $L$ hidden layers $f^{(L)}: \bbR^{n_0} \rightarrow \bbR^{n_{L+1}}$ following Definition~\ref{def:ntk_nn}, as the size of the hidden layers $n_1, \dots, n_L \rightarrow \infty$ sequentially, $f^{(L)}$ tends (by law of large numbers) to the neural network Gaussian Process (NNGP) with covariance $\Sigma^{(L)}(x, \tx)$, recursively defined as
    \begin{align*}
        \Sigma^{(1)}(x, \tx) &= \frac{c^2}{6} \left[ \prod_{j=1}^{n_0} \sinc{\left( c \, \omega \left( x_j - \tx_j \right) \right)} -  e^{-2\omega^2} \prod_{j=1}^{n_0} \sinc{\left( c \, \omega \left( x_j + \tx_j \right) \right)} \right] + 1 \\
        \Sigma^{(L)}(x, \tx) &= \frac{1}{2}  e^{-\frac{1}{2} \left( \Sigma^{(L-1)}(x, x) + \Sigma^{(L-1)}(\tx, \tx) \right)} \left( e^{\Sigma^{(L-1)}(x, \tx)} - e^{-\Sigma^{(L-1)}(x, \tx)}  \right) + 1. 
    \end{align*}
\end{theorem}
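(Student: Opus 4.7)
The plan is to proceed by induction on the network depth $L$, mirroring the proof of Theorem~\ref{th:full_sin_nngp} (the SSN NNGP) and invoking Theorem~\ref{th:sin_nngp_uniform} for the base case. The key observation making this strategy work is that the uniform-versus-Gaussian distinction in the weight initialization only substantively affects the \emph{first} layer, where the inputs $x$ are deterministic and so the particular distribution of $W^{(0)}$ drives the characteristic function computation (as in Proposition~\ref{pr:exp_lemma_uniform}). Once $L \geq 2$, the inputs to each subsequent layer are already drawn from a Gaussian process by the inductive hypothesis, and the covariance computation collapses to an expectation over a jointly Gaussian pair, proceeding identically to the SSN case.

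For the base case $L=1$, the stated formula is exactly Theorem~\ref{th:sin_nngp_uniform}. For the inductive step, assume that as $n_1, \dots, n_{L-1} \to \infty$ sequentially, $f^{(L-1)}$ converges to a Gaussian process with covariance $\Sigma^{(L-1)}$. Then, conditional on $f^{(L-1)}$, the output $f^{(L)}(x) = W^{(L)} \frac{1}{\sqrt{n_L}} \sin(f^{(L-1)}(x)) + b^{(L)}$ is a sum of $n_L$ independent, zero-mean, bounded-variance terms (since $\sin$ is bounded and the uniform $W^{(L)}$ has finite variance), so the Lyapunov CLT applies as $n_L \to \infty$ and yields a Gaussian process with covariance
\begin{align*}
\Sigma^{(L)}(x, \tx) \;=\; \sigma_W^2 \, \Exp{\sin(u)\sin(v)}{(u,v) \sim \ccN(0,\, \Sigma^{(L-1)}(x, \tx))} + 1.
\end{align*}
The inner expectation is then computed exactly as in the inductive step of Theorem~\ref{th:full_sin_nngp}: set $p = u+v$ and $m = u-v$ so that $p, m$ are zero-mean Gaussians with variances $\Sigma^{(L-1)}(x,x) + \Sigma^{(L-1)}(\tx, \tx) \pm 2\,\Sigma^{(L-1)}(x, \tx)$, expand each sine via its complex exponential form, and apply Proposition~\ref{pr:exp_lemma} to each of the four resulting characteristic functions. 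Collecting terms yields the stated recursion.

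The main subtlety, rather than a true obstacle, is the CLT step at layers $L \geq 2$: the standard ``Gaussian weights give a GP'' argument does not apply verbatim to uniform weights, but since $\sin$ is bounded the Lyapunov condition is trivially satisfied (e.g., by a uniform third-moment bound), so conditional Gaussianity of $f^{(L)}$ given $f^{(L-1)}$ follows with no extra technical work. Beyond the first layer, the uniform initialization contributes only through its variance $\sigma_W^2 = c^2/3$, which enters as a multiplicative prefactor analogous to the role it already plays in the shallow recursion of Theorem~\ref{th:sin_nngp_uniform}; absorbing this into the normalization implicit in the statement, the structure of the recursion matches the SSN derivation verbatim, so no new calculation beyond Proposition~\ref{pr:exp_lemma} and the Gaussian-integral manipulations already used in Theorem~\ref{th:full_sin_nngp} is required.
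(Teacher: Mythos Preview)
Your proposal is correct and follows essentially the same route as the paper: induction on $L$, with the base case supplied by Theorem~\ref{th:sin_nngp_uniform} and the inductive step carried out verbatim as in Theorem~\ref{th:full_sin_nngp}, noting that the uniform initialization only matters at the first layer while at deeper layers the Lyapunov CLT (boundedness of $\sin$, finite variance of the uniform weights) gives the same Gaussian-process conclusion. Your remark that the uniform weight variance $\sigma_W^2 = c^2/3$ enters only as a multiplicative prefactor at layers $L \geq 2$ is also consistent with the paper's treatment.
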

\begin{proof}
    Intuitively, after the first hidden layer, the inputs to every subsequent hidden layer are of infinite width, due to the NNGP assumptions. Therefore, due to the CLT, the pre-activation values at every layer are Gaussian, and the NNGP is unaffected by the uniform weight initialization (compared to the Gaussian weight initialization case). The only layer for which this is not the case is the first layer, since the input size is fixed and finite. This gives rise to the different $\Sigma^{(1)}$.
    
   Formally, this proof proceed by induction on the depth $L$, demonstrating the NNGP for successive layers as $n_1, \dots, n_L \rightarrow \infty$ sequentially. The base case comes straight from Theorem~\ref{th:sin_nngp_uniform}. After the base case, the proof follows exactly the same as in Theorem~\ref{th:full_sin_nngp}.
\end{proof}
For the same reasons as in the proof above, the $\dot{\Sigma}$ kernels after the first layer are also equal to the ones for the simple sinusoidal network, given in Lemma~\ref{lm:full_cos_nngp}.

Given the similarity of the kernels beyond the first layer, the interpretation of this NNGP is the same as discussed in the previous section for the simple sinusoidal network. 

Analogously to the SSN case before, the SIREN NTK expansion can also be approximated as a product of $\Sigma$ kernels, as in Equation~\ref{eq:approx_ntk}. The product of a $\sinc$ function with $L-1$ subsequent Gaussians ``dampens'' the $\sinc$, such that as the network depth increases the NTK approaches a Gaussian, as can be seen in Figure~\ref{fig:siren_deep_ntks}. 

\begin{figure*}[ht]
  \centering
    \begin{subfigure}[c]{0.05\textwidth}
     \centering
     \rotatebox{90}{$L=2$}
  \end{subfigure}
  \begin{subfigure}[c]{0.3\textwidth}
     \centering
     \includegraphics[width=\textwidth]{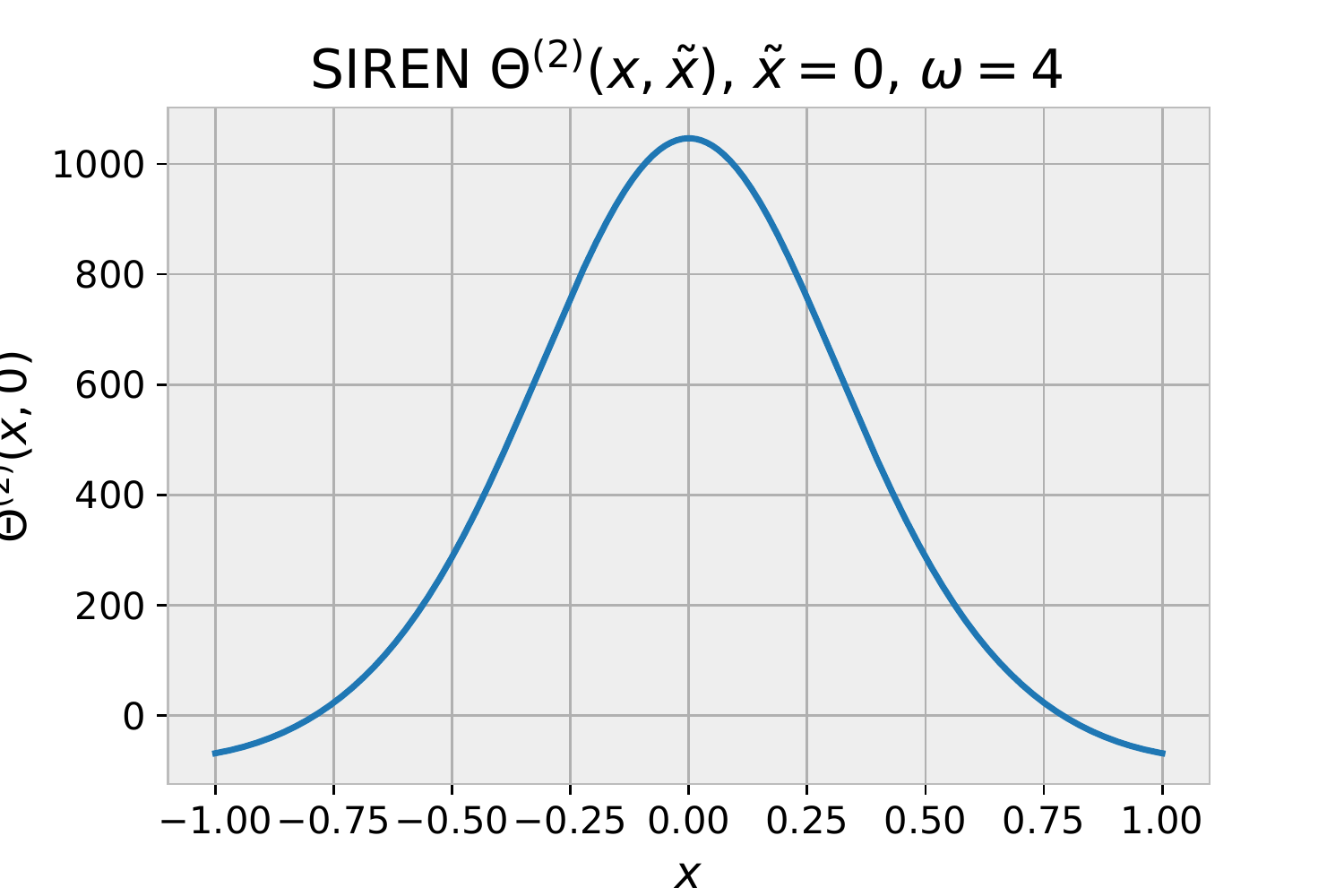}
  \end{subfigure}
    \begin{subfigure}[c]{0.3\textwidth}
     \centering
     \includegraphics[width=\textwidth]{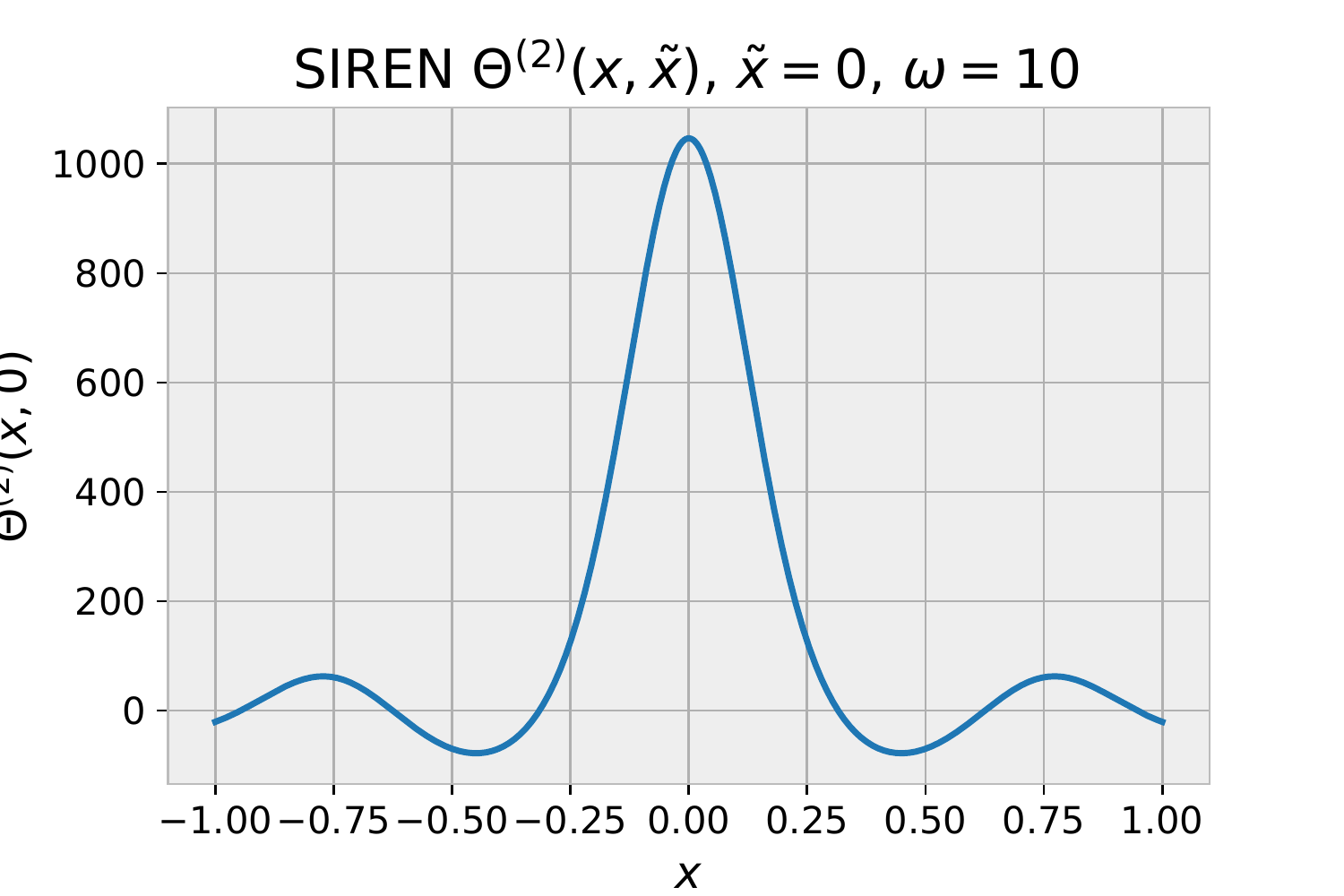}
  \end{subfigure}
  \begin{subfigure}[c]{0.3\textwidth}
     \centering
     \includegraphics[width=\textwidth]{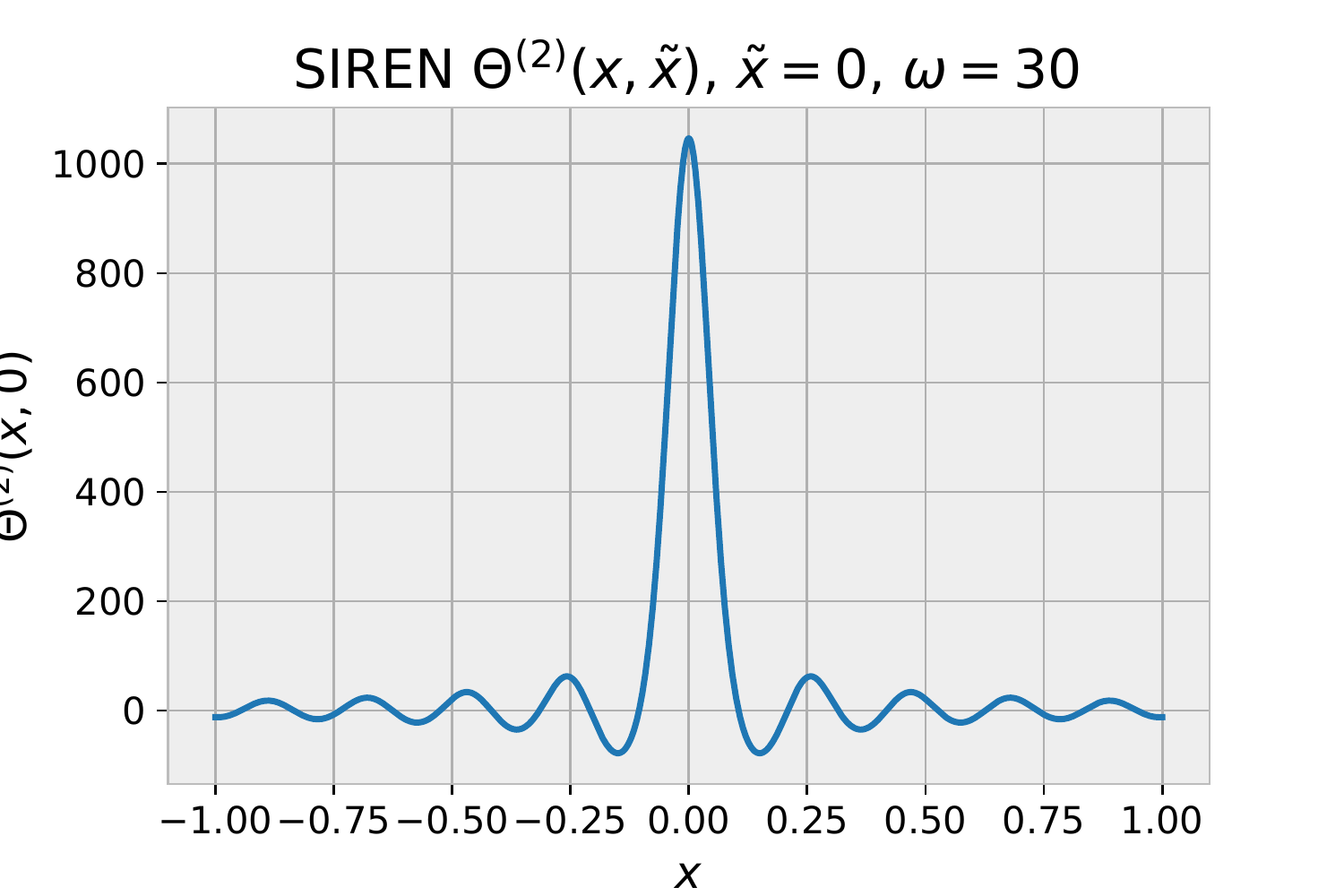}
  \end{subfigure}
  
  \begin{subfigure}[c]{0.05\textwidth}
     \centering
    \rotatebox{90}{$L=4$}
  \end{subfigure}
  \begin{subfigure}[c]{0.3\textwidth}
     \centering
     \includegraphics[width=\textwidth]{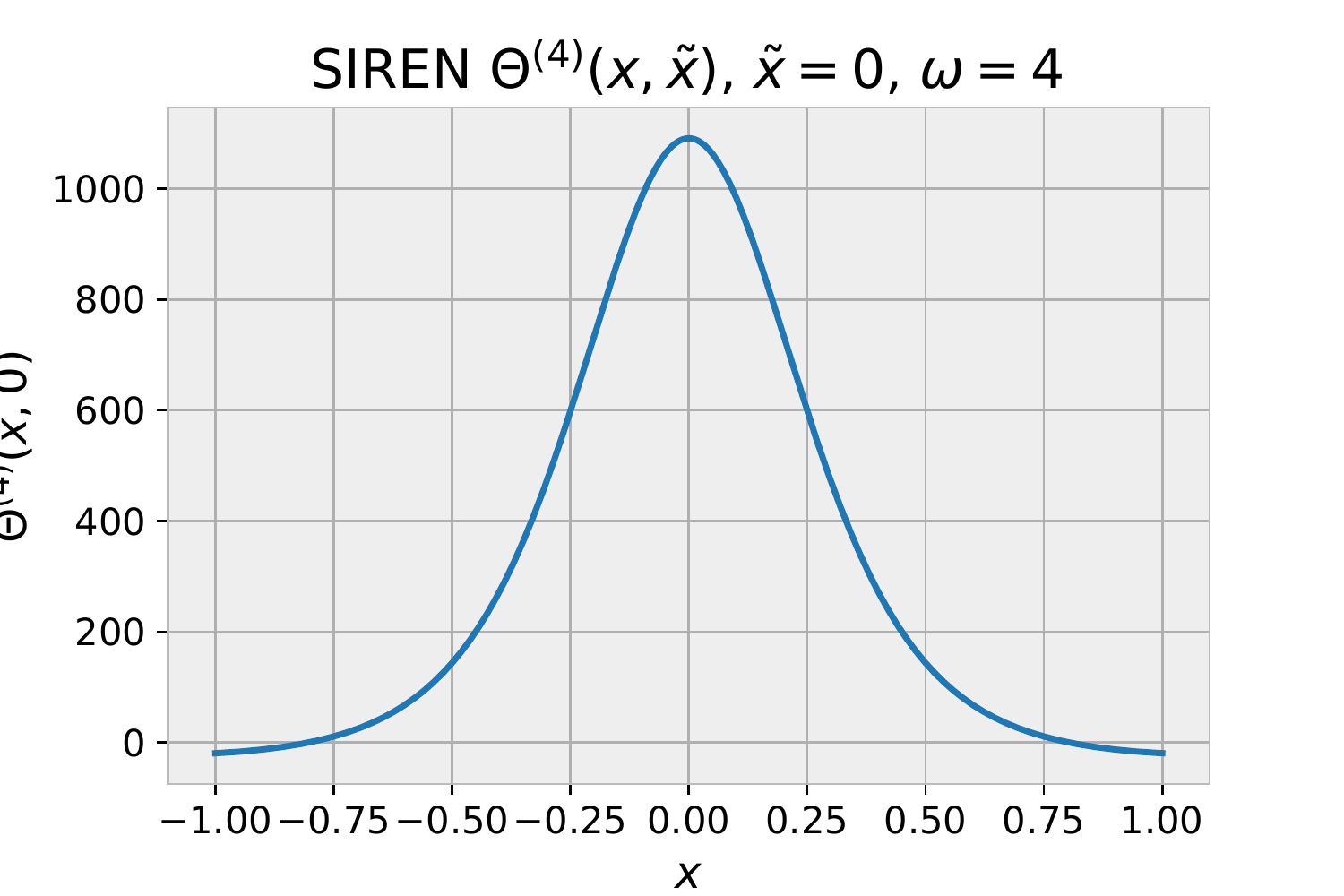}
  \end{subfigure}
  \begin{subfigure}[c]{0.3\textwidth}
     \centering
     \includegraphics[width=\textwidth]{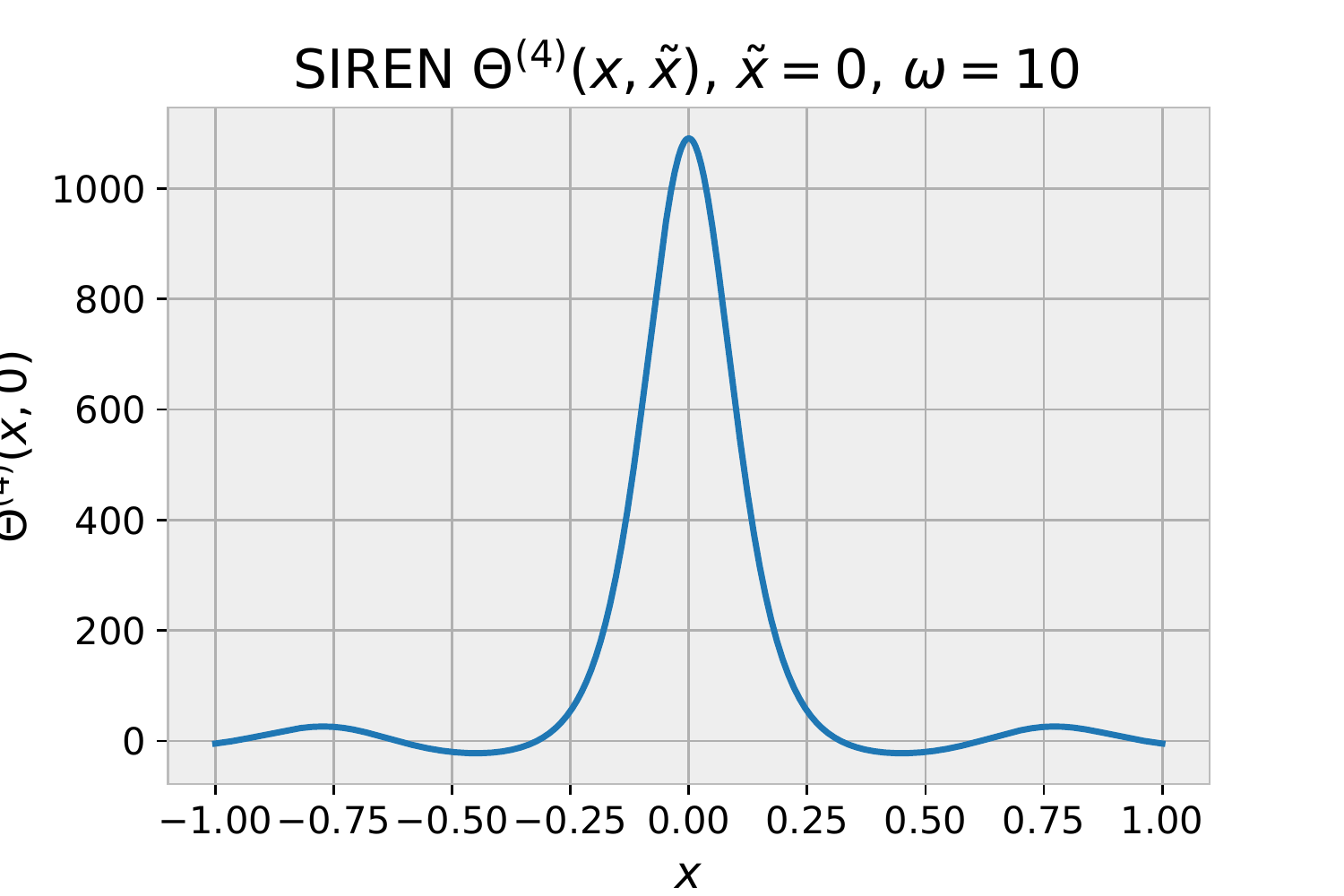}
  \end{subfigure}
  \begin{subfigure}[c]{0.3\textwidth}
     \centering
     \includegraphics[width=\textwidth]{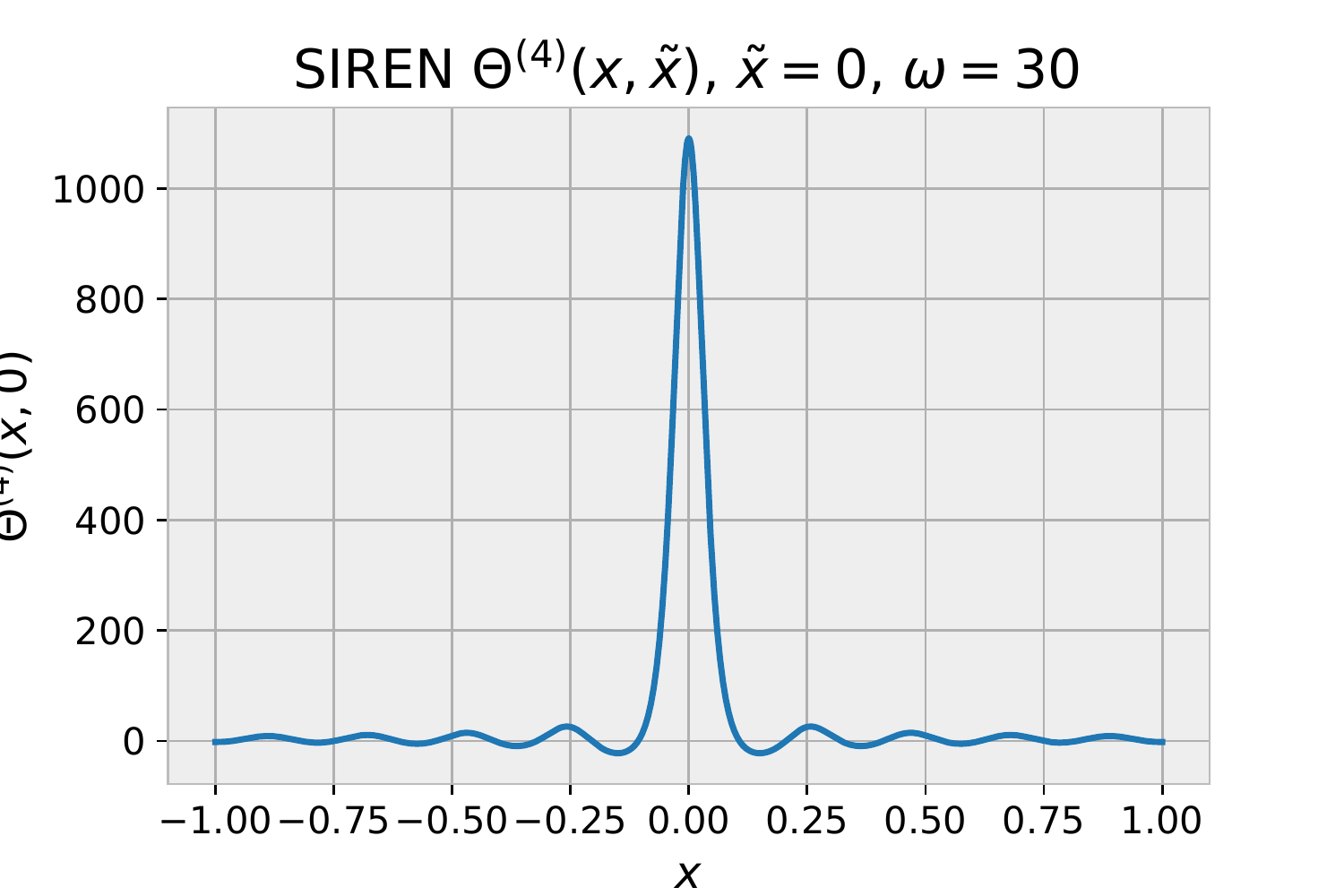}
  \end{subfigure}
  
  \begin{subfigure}[c]{0.05\textwidth}
     \centering
     \rotatebox{90}{$L=6$}
  \end{subfigure}
  \begin{subfigure}[c]{0.3\textwidth}
     \centering
     \includegraphics[width=\textwidth]{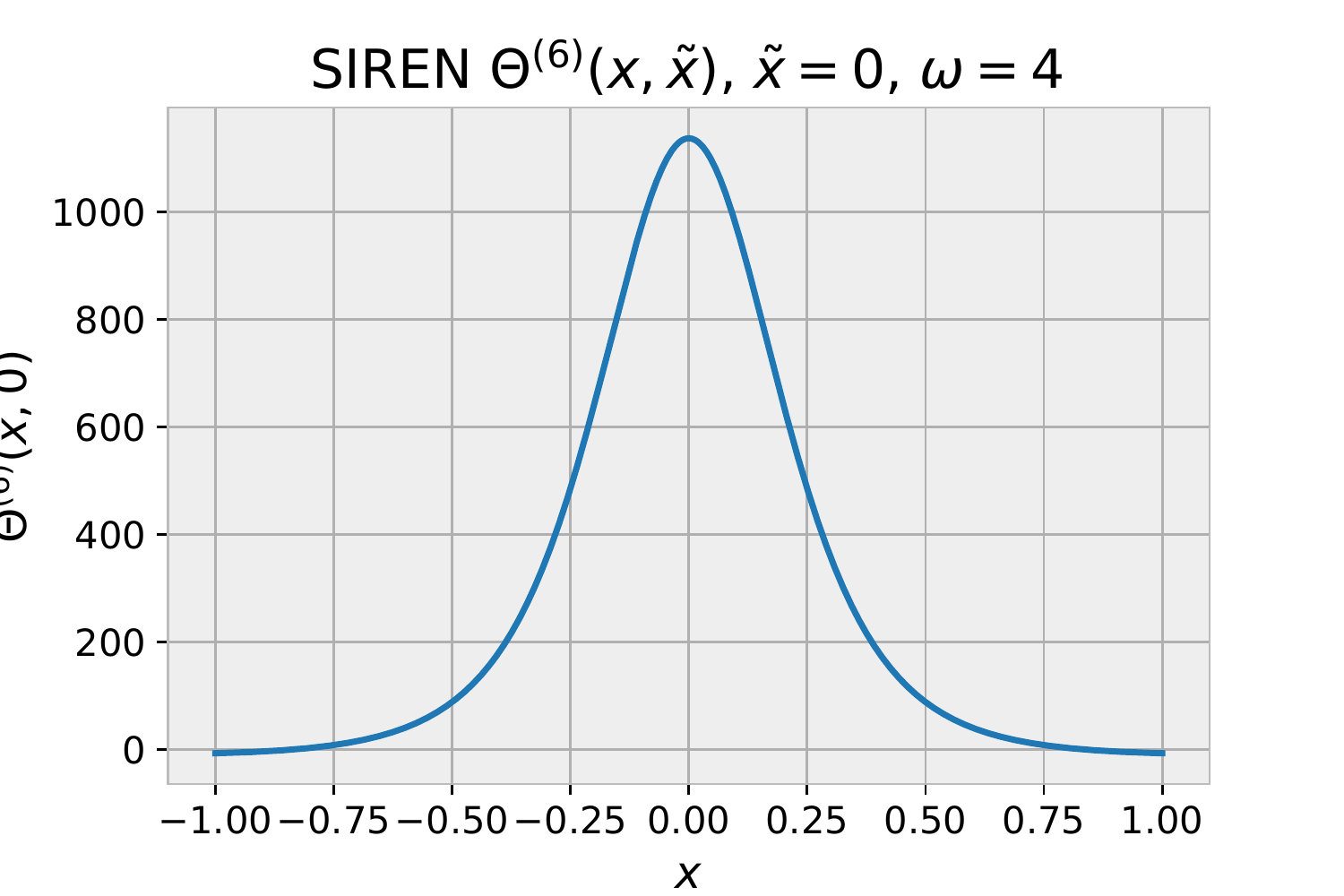}
    \caption*{$\omega=4$}
  \end{subfigure}
  \begin{subfigure}[c]{0.3\textwidth}
     \centering
     \includegraphics[width=\textwidth]{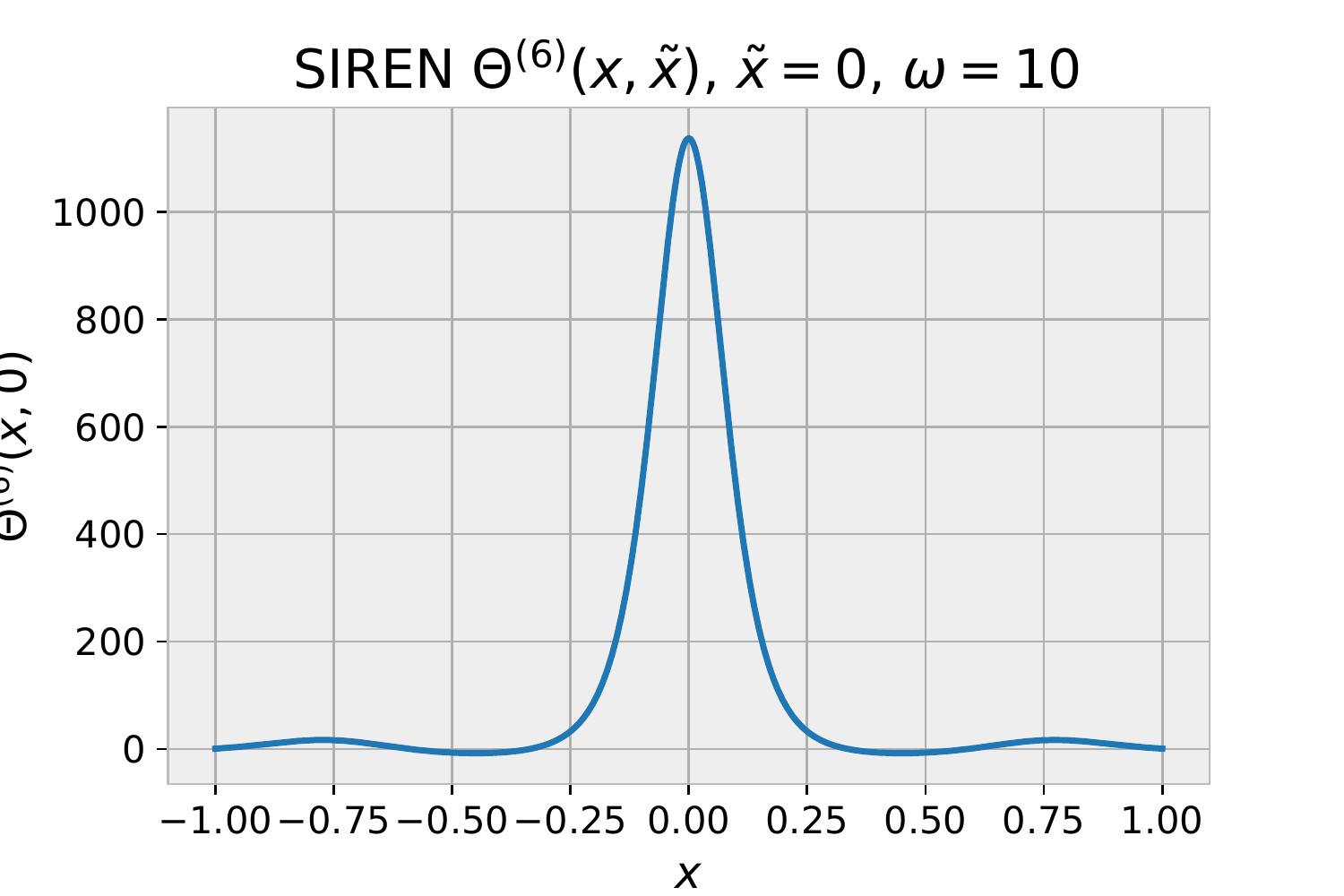}
    \caption*{$\omega=10$}
  \end{subfigure}
  \begin{subfigure}[c]{0.3\textwidth}
     \centering
     \includegraphics[width=\textwidth]{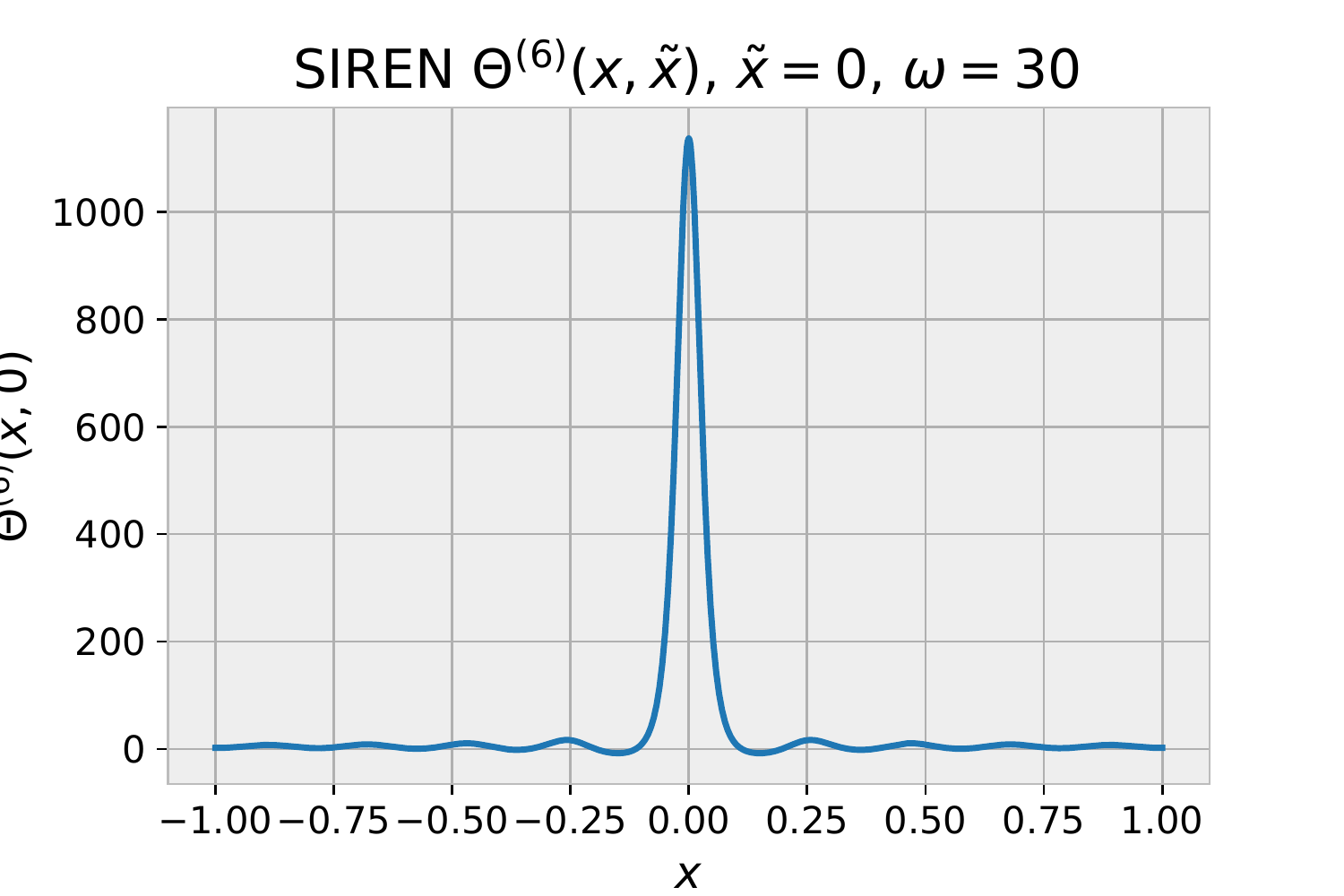}
     \caption*{$\omega=30$}
  \end{subfigure}

  \caption{The NTK for SIREN at different $\omega$ and network depth ($L$) values. Kernel values at a slice for fixed $\tx = 0$ are shown.}
  \label{fig:siren_deep_ntks}
\end{figure*}

\section{Experimental details}
\label{sec:appendix_exps}

\subsection{Generalization}

Where not explicitly commented, details for the generalization experiments are the same for the comparisons to SIREN, described in Appendix~\ref{sec:appendix_comparison}.

\begin{figure}[ht]
  \centering
  \includegraphics[width=0.5\linewidth]{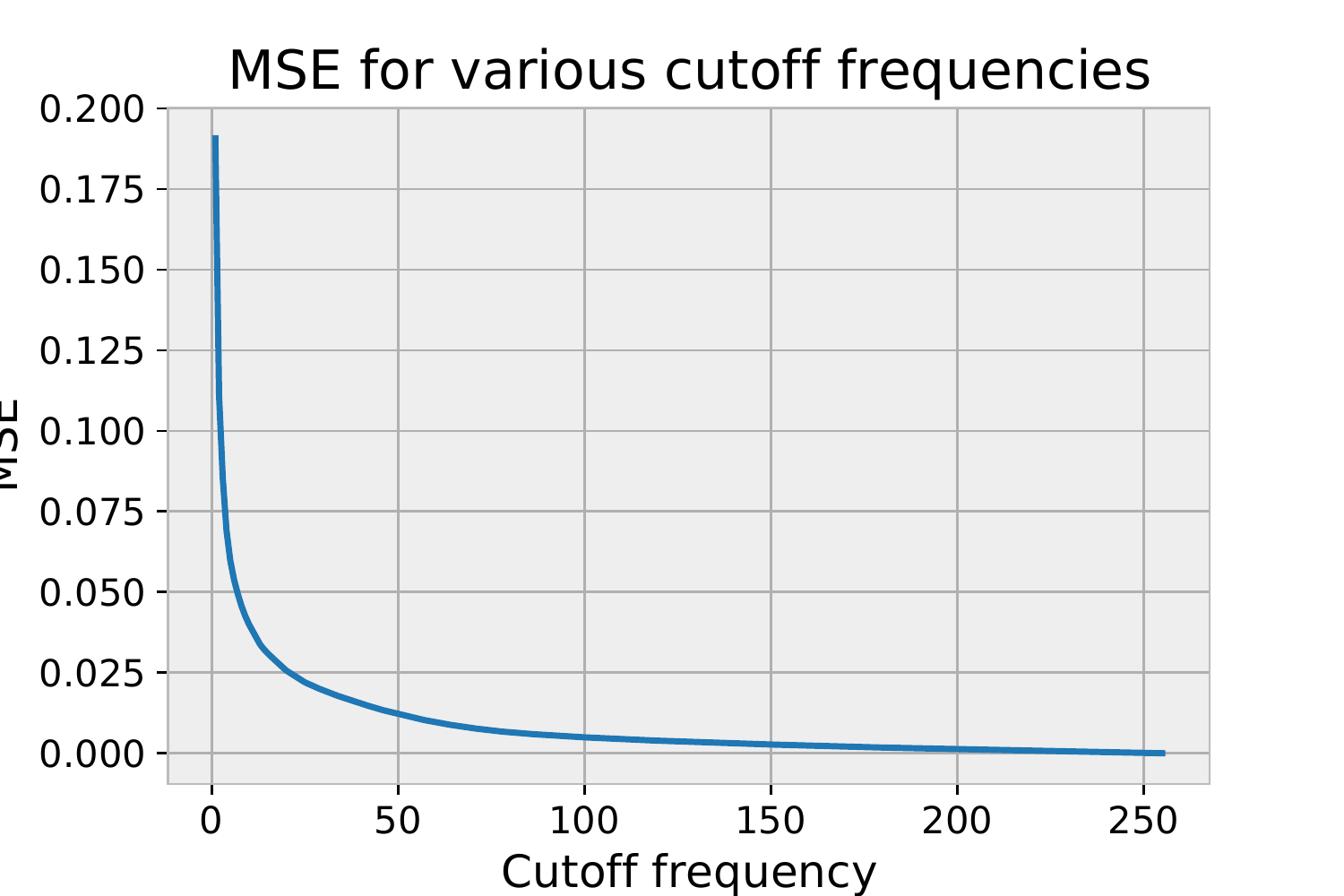}
  \caption{Reconstruction loss through a low-pass filter for the ``camera'' image. Notice the loss continues to go down up to the Nyquist frequency.}
  \label{fig:camera_freqs}
\end{figure}

\subsection{Burgers equation (Identification)}

 We follow the same training procedures as in \citet{raissi_physics-informed_2019}. The training set is created by randomly sampling $2,000$ points from the available exact solution grid (shown in Figure~\ref{fig:burgers_freqs}). The neural networks used are 9-layer MLPs with 20 neurons per hidden layer. The network structure is the same for both the $\tanh$ and sinusoidal networks. As in the original work, the network is trained by using L-BFGS to minimize a mean square error loss composed of the sum of an MSE loss over the data points and a physics-informed MSE loss derived from the equation
  \begin{align}
\label{eq:burgers_pinn}
    u_t + \lambda_1 u u_x - \lambda_2 u_{xx} = 0.
\end{align}

\subsection{Navier-Stokes (Identification)}

\begin{figure}[ht]
  \centering
  \includegraphics[width=0.6\textwidth]{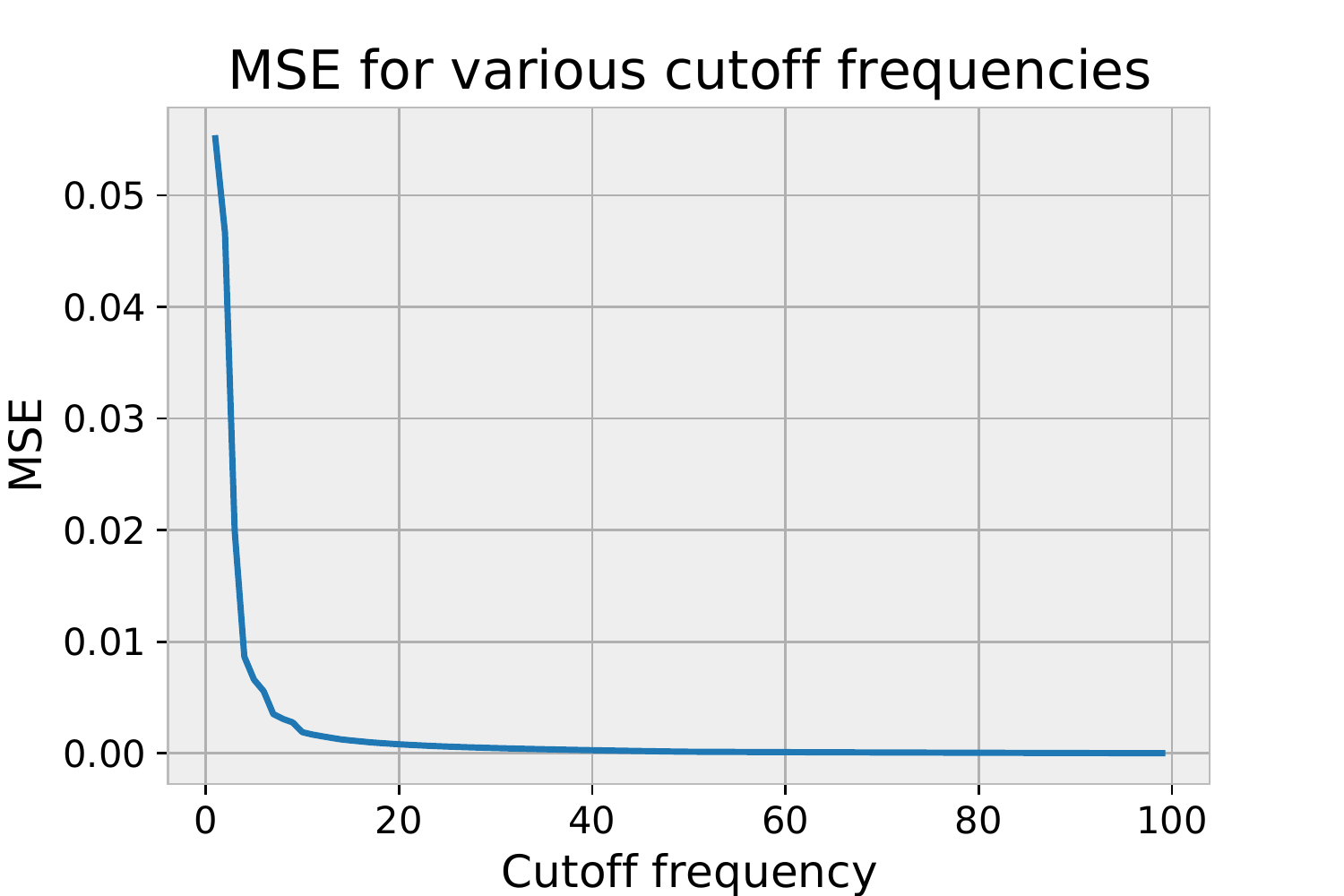}
  \caption{Reconstruction loss for different cutoff frequencies for a low-pass filter applied to the solution of the Navier-Stokes equations.}
  \label{fig:ns_freqs}
\end{figure}

We follow the same training procedures as in \citet{raissi_physics-informed_2019}. The training set is created by randomly sampling $5,000$ points from the available exact solution grid (one timestep is shown in Figure~\ref{fig:ns_data}). The neural networks used are 9-layer MLPs with 20 neurons per hidden layer. The network structure is the same for both the $\tanh$ and sinusoidal networks. As in the original work, the network is trained by using the Adam optimizer to minimize a mean square error loss composed of the sum of an MSE loss over the data points and a physics-informed MSE loss derived from the equations
\begin{align}
\label{eq:ns_pinn}
    u_t + \lambda_1 (u u_x + v u_y) = -p_x + \lambda_2 (u_{xx} + u_{yy}) \\
    v_t + \lambda_1 (u v_x + v v_y) = -p_y + \lambda_2 (v_{xx} + v_{yy}).
\end{align}

\subsection{Schrödinger (Inference)}
This experiment reproduces the Schrödinger equation experiment from \citet{raissi_physics-informed_2019}. In this experiment, we are trying to find the solution to the Schrödinger equation, given by
\begin{align}
\label{eq:schrodinger_pinn}
    i h_t + 0.5 h_{xx} + |h|^2 h = 0 
\end{align}

Since in this case we have a forward problem, we do not have any prior information to base our choice of $\omega$ on, besides a maximum limit given by the Nyquist frequency given the sampling for our training data. We thus follow usual machine learning procedures and experiment with a number of small $\omega$ values, based on the previous experiments. 

We find that $\omega=4$ gives the best results, with a solution MSE of $4.30 \cdot 10^{-4}$, against an MSE of $1.04 \cdot 10^{-3}$ for the baseline. Figure~\ref{fig:schrodinger_pred} shows the solution from the sinusoidal network, together with the position of the sampled data points used for training.

\begin{figure}[ht]
  \centering
  \includegraphics[trim={0cm 0cm 0cm 0cm},clip,width=0.8\textwidth]{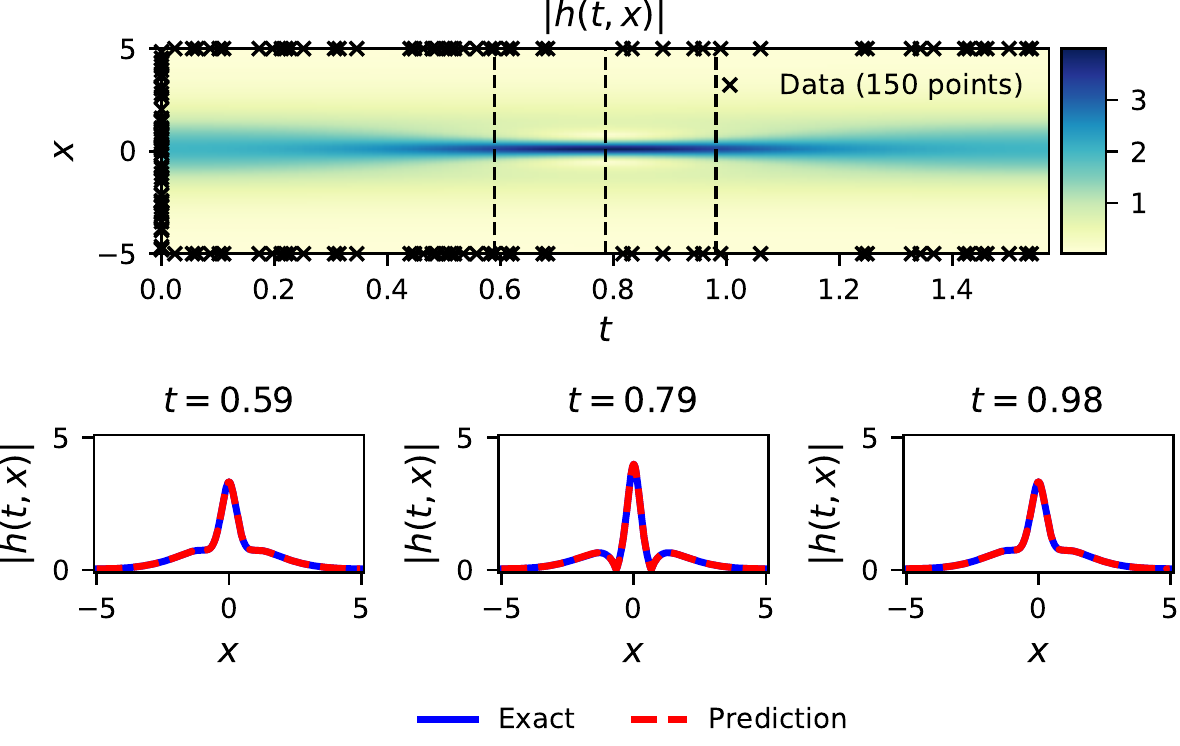}
  \caption{Solution to the Schrodinger equation with the sinusoidal network, together with the position of the sampled data points used for training.}
  \label{fig:schrodinger_pred}
\end{figure}

\paragraph{Training details.} We follow the same training procedures as in \citet{raissi_physics-informed_2019}. The training set is created by randomly sampling $20,000$ points from the domain ($x \in [-5, 5]$, $t \in [0, \pi/2]$) for evaluation for the physics-informed loss. Additionally, $50$ points are sampled from each of the boundary and initial conditions for direct data supervision. The neural networks used are 5-layer MLPs with 100 neurons per hidden layer. The network structure is the same for both the $\tanh$ and sinusoidal networks. As in the original work, the network is trained first using the Adam optimizer by $50,000$ steps and then by using L-BFGS until convergence. The loss is composed of the sum of an MSE loss over the data points and a physics-informed MSE loss derived from Equation~\ref{eq:schrodinger_pinn}.

\subsection{Helmholtz equation}

Details for the Helmholtz equation experiment are the same as in the previous Helmholtz experiment in Appendix~\ref{sec:appendix_comparison}.

\end{document}